\theoremstyle{plain}
\newtheorem{theorem}{Theorem}[section]
\newtheorem{proposition}{Proposition}
\newtheorem{corollary}{Corollary}[section]
\newtheorem{assumption}{Assumption}[section]
\crefname{theorem}{Theorem}{Theorems}
\crefname{proposition}{Proposition}{Propositions}
\crefname{corollary}{Corollary}{Corollaries}
\theoremstyle{definition}
\newtheorem{remark}{Remark}
\crefname{definition}{Definition}{Definitions}
\crefname{remark}{Remark}{Remarks}
\def\EE{\mathbb{E}}
\def\PP{\mathbb{P}}
\def\RR{\mathbb{R}}
\def\calN{\mathcal{N}}
\def\calV{\mathcal{V}}
\def\1{\mathbbm{1}}
\def\var{\mathsf{Var}}
\theoremstyle{plain}
\def \var {\mathsf{Var}}
\definecolor{myblue}{rgb}{.8, .8, 1}
\definecolor{mathblue}{rgb}{0.2472, 0.24, 0.6} 
\definecolor{mathred}{rgb}{0.6, 0.24, 0.442893}
\definecolor{mathyellow}{rgb}{0.6, 0.547014, 0.24}
\pgfplotsset{compat=1.17}
\newcommand{\test}{\text{test}}
\title{Statistical Inference for Generative Model Comparison}
\author{
Zijun Gao\footnote{\small Marshall School of Business, University of Southern California, USA}\quad
Yan Sun\footnote{\small Department of Mathematical Science, New Jersey Institute of Technology, USA}\quad
Han Su\footnote{\small School of Statistics, Beijing Normal University, People's Republic of China}
}
\begin{document}

\maketitle


\begin{abstract}      
Generative models have achieved remarkable success across a range of applications, yet their evaluation still lacks principled uncertainty quantification. 
In this paper, we develop a method for comparing how close different generative models are to the underlying distribution of test samples. 
Particularly, our approach employs the Kullback-Leibler (KL) divergence to measure the distance between a generative model and the unknown test distribution, as KL requires no tuning parameters such as the kernels used by RKHS-based distances.
And the relative KL divergence is the only $f$-divergence that admits a crucial cancellation of the hard-to-estimate term to enable the faithful uncertainty quantification.
Furthermore, we extend our method to comparing conditional generative models and leverage Edgeworth expansions to address limited-data settings.
On simulated datasets with known ground truth, we show that our approach realizes effective coverage rates, and has higher power compared to kernel-based methods.
When applied to generative models on image and text datasets, our procedure yields conclusions consistent with benchmark metrics but with statistical confidence.
The source code to reproduce our experiments is available at \url{https://github.com/sylydya/compare-generative-models}.
\end{abstract}

\noindent\textbf{Keywords:} Uncertainty quantification, Generative model evaluation, Edgeworth expansion, $f$-divergence

\section{Introduction}\label{sec:introduction}


Generative models have achieved remarkable success across a wide range of applications, demonstrating their versatility in areas such as image synthesis, natural language processing, and scientific discovery \citep{achiam2023gpt, goodfellow2014generative, van2016wavenet, karras2020analyzing}. 
As the number of generative models continues to grow rapidly, practitioners increasingly face the challenge of selecting the most appropriate model. 
This underscores the need for principled methods of performance evaluation and model comparison.

Despite the advances in model development, approaches for assessing and comparing generative models remain relatively underexplored.
One class of evaluation methods rely on human judgment, which are inherently subjective and difficult to scale.
An alternative line of methods use standardized benchmark tests \citep{liu2024deepseek, gallifant2024peer} (such as AIME 2024 and MATH 500), which are limited to output types for which such benchmark tests exist.
In addition, various quantitative metrics have been proposed for different types of generated outputs, such as Wasserstein distance-based metric (e.g., Fr\'echet Inception Distance, FID) for images and perplexity for texts.
However, these metrics
lack uncertainty quantification, making it difficult to determine whether observed differences reflect true performance gaps or arise solely from variability of the test sample\footnote{
In practice, metrics such as FID are usually computed given a dataset for evaluation. In our work, we view the ground truth distribution as an unknown population distribution, and the dataset for evaluation is viewed as a finite sample from the distribution. This finite-sample approximation is the main source of the uncertainty that we quantify in our work. From this perspective, FID is viewed as the Wasserstein / Fréchet distance between the transformed distribution induced by the Inception model.}.

The lack of satisfactory model assessment methods can be partially attributed to the challenge that evaluating generative models requires comparing output distributions rather than individual data points.
For predictors or classifiers, the performance can be directly measured by comparing the generated outputs to the associated true labels.
In contrast, the quality of a generative model is determined by how closely the \textit{distribution} of its generated data matches that of the input data, rather than the similarity between generated data points and input data points (known as the reconstruction error).
Moreover, generative models often produce high-dimensional outputs\footnote{
The output is typically ultra-high-dimensional, for example, a 1080p resolution image consists of around \(2 \times 10^6\) pixels with (approximately) continuous value. Particularly, an image is represented as a matrix of size \(d_1 \times d_2\), where \(d = d_1 \cdot d_2\) is the total number of pixels. Each pixel consists of three color channels (e.g., RGB), where each channel takes integer values ranging from \(0\) to \(255\). As a result, a single pixel can represent \(256^3 \approx 16.7 \times 10^6\) possible color combinations. The pixel values can effectively be regarded as continuous.
}, 
making the distribution comparison even more challenging.

In this paper, we develop methods for comparing generative models with uncertainty quantification. Specifically, we make the following contributions:
\begin{itemize}
    \item We propose a method for evaluating and comparing generative models based on the \emph{relative} score of KL divergence. Focusing on the relative score enables the cancellation of a hard-to-estimate quantity in the absolute score, leading to favorable statistical properties.
    \item We develop an unbiased estimator in the form of a first-order U-statistic that achieves convergence at the parametric rate. 
    We explicitly characterize the asymptotic distribution of our estimator. To improve the finite sample performance of our method, we refine the limiting distribution analysis via Edgeworth expansion, which leads to faithful coverage even with a very small sample size.
    \item We further extend our framework from unconditional to \emph{conditional} generative models, which are central in practice (e.g., text-to-image models, language models). This allows principled statistical comparison of conditional distributions.
    \item On simulated datasets with known ground truth, we demonstrate that our approach constructs faithful confidence intervals and achieves higher power compared to baselines. 
    Furthermore, we demonstrate the effectiveness of our method in evaluating generative models on the CIFAR-10 dataset and assessing large language models (LLM) on Wikitext-2 and TriviaQA data sets.
\end{itemize}

\begin{figure*}[tbp]
        \centering
        \begin{minipage}{1\textwidth}
                \centering
                \includegraphics[clip, trim = 0cm 13cm 4cm 0cm, width = 1\textwidth]{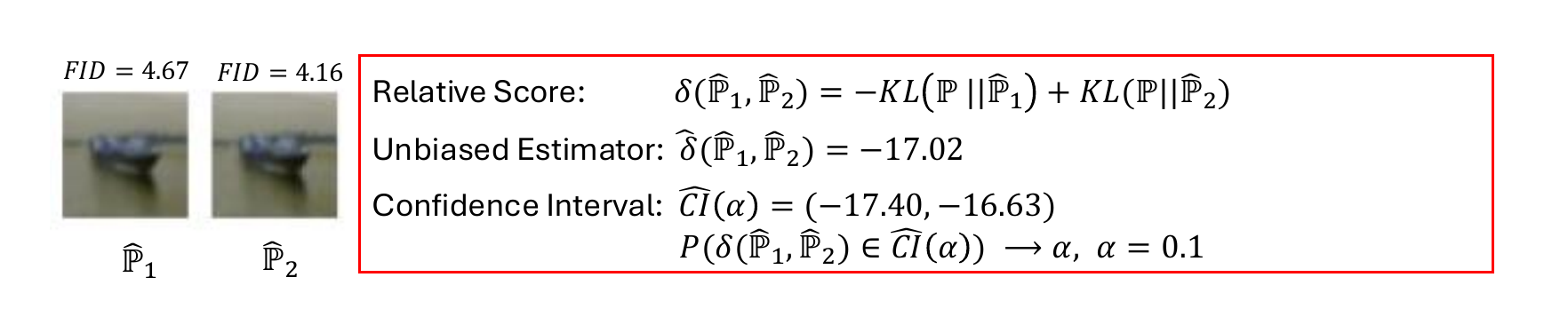}
        \end{minipage}
            \vspace{-0.25cm}
            \caption{\footnotesize An example of our method applied to comparing the diffusion models (DDIMs with different numbers of denoising steps \( S \)). Here, \( \PP \) represents the distribution of the test images, \( \hat{\PP}_1 \) corresponds to the DDIM model with \( S = 50 \) denoising steps, and \( \hat{\PP}_2 \) corresponds to the DDIM model with \( S = 100 \) denoising steps.  
        Our method demonstrates that the confidence interval for the relative score \( \delta(\hat{\PP}_1, \hat{\PP}_2) \) is significantly negative, indicating that \( \hat{\PP}_2 \) with \( S = 100 \) achieves significantly better performance.
        While consistent with the FID reported in \cite{song2021denoising}, our method further quantifies the statistical significance of the performance difference, which FID cannot provide.
        }
        \label{fig:summary}
\end{figure*}

\noindent\textbf{Organization}.
In Section~\ref{sec:background}, we formulate the problem of evaluating generative models.
In Section~\ref{sec:relative.error}, we introduce the concept of relative score which enables a key cancellation, and propose our uncertainty quantification method.
In Section~\ref{sec:conditional}, we extend the method to conditional generative model comparison and the cases with limited test data.
In Section~\ref{sec:simulations}, we evaluate the numerical performance of our methods using both simulated and real image and text datasets.
All proofs and additional literature, method details, and empirical results are provided in the Appendix.

\vspace{0.25cm}
\noindent \textbf{Notations}.
Let \(\Omega\) denote the space of test data, and let \(\mathbb{P}\) represent the true target distribution. 
Let \(\hat{\mathbb{P}}_1\) denote the distribution of data generated by a generative model.
If the generative process of the generative model involves sampling a random noise vector, we use $d_1$ to denote its dimension.
Similarly we define \(\hat{\mathbb{P}}_2\) and $d_2$.
Let \(\mathbb{P}_n\) represent the empirical distribution of \(n\) observations from \(\mathbb{P}\), and similarly for \(\hat{\mathbb{P}}_{1,n}\), \(\hat{\mathbb{P}}_{2,n}\).
We denote densities by lower-case letters, e.g., $p$ as the density of $\PP$.
We use \(\phi_{d}\) to denote the density of the standard multivariate Gaussian density in $\RR^d$.
For $x \in \RR^d$, we use $\|x\|$ to denote its $\ell_2$ norm.

\section{Formulation and background}\label{sec:background}

\subsection{Comparison of generative models}

In this paper, we focus on the case where a set of \(n_{\text{test}}\) test data points \(Y_i\), \(1 \leq i \leq n_{\text{test}}\) independently and identically distributed (i.i.d.) from the target distribution \(\mathbb{P}\), are provided.
We aim to compare generative models using the test dataset.
For conciseness, we focus on the comparison of two models, denoted by \(\hat{\mathbb{P}}_1\) and \(\hat{\mathbb{P}}_2\), while the generalization to multiple-model comparisons is straightforward.

Generative models are designed to produce samples whose distribution approximates the true underlying distribution $\PP$. To evaluate a generative model \(\hat{\mathbb{P}}_1\) quantitatively, a dissimilarity metric between \(\mathbb{P}\) and \(\hat{\mathbb{P}}_1\) is computed. In this work, we use the KL divergence, also known as relative entropy, for the following reasons: (1) KL divergence is the only $f$-divergence that satisfies a cancellation property (see Proposition~\ref{prop:KL.unique}), which enables unbiased estimation and valid uncertainty quantification; (2) KL divergence strongly penalizes mismatches in regions where one distribution assigns probability mass and the other does not, desirable when rare but consequential events shall drive conclusions; (3) unlike kernel or representation-learning–based metrics, KL divergence requires neither choosing a kernel nor learning latent features, reducing the risk of uninformative comparisons caused by suboptimal hyperparameter choice or insufficient tuning.
\begin{assumption}\label{assu:density.exist}
    Suppose the test data distribution \(\mathbb{P}\) and the generated data distribution \(\hat{\mathbb{P}}_1\) admit densities, and $\PP$ is absolutely continuous with respect to $\hat{\PP}_1$.
\end{assumption}
Under Assumption~\ref{assu:density.exist}, we formally define the absolute score for the generative model\footnote{For conciseness, we use the generating distribution to represent its corresponding generative model. For example, we may refer to the generative model as \(\hat{\mathbb{P}}_1\).} associated with \(\hat{\mathbb{P}}_1\) as the negative KL divergence between $\PP$ and $\hat{\PP}_1$,  
\begin{align}\label{defi:absolute.error}
    s(\hat{\PP}_1)
    := -\text{KL}(\mathbb{P} \| \hat{\mathbb{P}}_1)
    = -\int \log\left(\frac{p(y)}{\hat{p}_1(y)}\right) p(y) \, dy,
\end{align}  
where we denote the density of \(\mathbb{P}\), \(\hat{\mathbb{P}}_1\) by $p$, $\hat{p}_1$, respectively. 
A larger absolute score \(s(\hat{\mathbb{P}}_1)\) indicates better performance of the generative model \(\hat{\mathbb{P}}_1\).

In this paper, we focus on comparing generative models whose sampling density evaluated at a test data point is accessible. 
Many existing models satisfy this property. Details about the computation of the density function for different generative models are provided in \Cref{appe:sec:computation}:
\begin{itemize}
    \item \textbf{Generative models for image}. A significant proportion of generative models for images, 
including variational auto-encoders \citep{kingma2013auto}, autoregressive models \citep{van2016pixel}, normalizing flows \citep{rezende2015variational, dinh2016density}, diffusion models \citep{ho2020denoising, song2019generative, song2021denoising}, consist of a forward and a reverse process. 
Any image generative model that admits an accessible and invertible reverse process, such as normalizing flows \citep{chen2018neural}, Denoising Diffusion Implicit Model (DDIM) \citep{song2021denoising}, permits the density evaluation at a test data point. 
\item \textbf{Generative models for text}.
Autoregressive language models, which generate text token by token conditioned on previous context, are the most widely used LLMs nowadays.
Let $\mathcal{V}$ denote the vocabulary. 
Let $y = (r_1, r_2, \dots, r_L)$, $r_i \in \calV$ denote a response sequence, and let $r_{1:i}$ denote the first to the $i$-th tokens of $r$. 
Let $\PP$ denote the ground truth probability of responses.
An autoregressive language model defines the probability of the next token given previous tokens as $\hat{\PP}_1(r_{i+1} | r_{1:i})$.
By the chain rule of probability, the joint probability of the sequence $r$ is
\[
\hat{\PP}_1 (r) = \prod_{i=0}^{L-1} \hat{\PP}_1(r_{i+1} | r_{1:i}),
\]
where $\hat{\PP}_1(r_{i} | r_{1:0}) = \hat{\PP}_1(r_1)$.
For open-source LLMs, the estimated probability $\hat{p}_1(r)$ at a test data point is typically accessible through the forward pass.
\end{itemize}

\subsection{
Related works}\label{sec:literature.GenAI.evaluation}

We review literature on generative model evaluation based on distributional closeness.
We acknowledge that generative models are also evaluated along other dimensions (e.g., overfitting, diversity, fidelity, memorization), which are important in practice but lie beyond the scope of this work.

\paragraph{Wasserstein-distance-based evaluation}
Wasserstein-\( p \) distance \citep{villani2009optimal} is defined as
\begin{align*}
W_p(\PP, \hat{\mathbb{P}}_1) = \inf_{\gamma \in \Gamma(\PP, \hat{\mathbb{P}}_1)} \EE_{(x,y) \sim \gamma}^{1/p} \left[\|x - y\|_p^p\right],    
\end{align*}
where \(\Gamma(\PP, \hat{\mathbb{P}}_1)\) is the set of couplings between \(\PP\) and \(\hat{\mathbb{P}}_1\). 
There are several challenges of performing inference for evaluation methods based on the Wasserstein-\( p \) distance.
First, \cite{del2019central} shows that\footnote{CLT results of general Wasserstein-\( p \) distance are largely unknown.} 
\[
    \sqrt{n}\left( W_2^2(\PP_n, \hat{\PP}_{1,n}) - \mathbb{E}[W_2^2(\PP_n, \hat{\PP}_{1,n})] \right) \overset{d}{\longrightarrow} 
\calN\left(0, \sigma^2(\PP, \hat{\PP}_1)\right),
\] 
where the asymptotic variance $\sigma^2(\PP, \hat{\PP}_1)$ can be estimated consistently using a plug-in estimator.
The issue is that the center $\mathbb{E}[W_2^2(\PP_n, \hat{\PP}_{1,n})] $ is different from the desired $W_2^2(\PP, \hat{\PP}_{1})$, and the gap between $\mathbb{E}[W_2^2(\PP_n, \hat{\PP}_{1,n})]$ and $W_2^2(\PP, \hat{\PP}_{1})$ scales as \( n^{-1/d} \) \cite{villani2009optimal}.
Second, for the relative Wasserstein-2 distance \( W_2^2(\mathbb{P}, \hat{\PP}_1) \), the joint asymptotic distribution of \( (W_2^2(\mathbb{P}_n, \hat{\PP}_{1,n}), W_2^2(\mathbb{P}_n, \hat{\PP}_{2,n})) \) is required.  
It remains unclear whether \( W_2^2(\mathbb{P}_n, \hat{\PP}_{1,n}) \) and \( W_2^2(\mathbb{P}_n, \hat{\PP}_{2,n}) \) are asymptotically jointly Gaussian, as well as what the exact form of their covariance matrix is\footnote{The off-diagonal values of the covariance matrix is non-zero because both \( W_2^2(\mathbb{P}_n, \hat{\PP}_{1,n}) \) and \( W_2^2(\mathbb{P}_n, \hat{\PP}_{2,n}) \) depend on \(\mathbb{P}_n\).}. 
Third, subsampling methods \cite{dumbgen1993nondifferentiable} can be employed for conducting inference for Wasserstein-p distances; however, subsampling is computationally expensive, especially given that the Wasserstein-p distance is already difficult to compute.
For a comprehensive review of these issues, see \cite{panaretos2019statistical}.

\paragraph{FID/IS-based evaluation} We review the inception score (IS) \citep{salimans2016improved} and the Frechet Inception Distance (FID) \citep{heusel2017gans,Jayasumana2024RethinkingFID_cvpr,Jiralerspong2023FLD}, two most commonly used quantitative scores for generative models. 
IS evaluates the quality of a generative model by applying a separate, pretrained image classification model to a batch of images generated by the model. IS is maximized when the classifier confidently predicts a single label for each image, or when the predictions are evenly distributed across all possible labels.
The quality of IS depends heavily on the quality of the classifier (if the classifier consistently outputs a single label for all images, the IS becomes uninformative).
Another disadvantage is that IS does not compare generated images to test images.
FID compares the distribution between the distribution of test images and that of generated images. 
Mathematically, FID aims to approximate the Wasserstein-2 distance between the two distribution in two steps: (1) mapping the real and generated images to $\RR^d$ separately by passing them through the final layer of an image classifier to extract essential features; (2) fitting multivariate Gaussian distributions to the transformed data in $\RR^d$ and computing the Wasserstein-2 distance between these multivariate Gaussians.
The approximation accuracy depends on how well the transformation to $ \mathbb{R}^d $ captures the data characteristics and the quality of the multivariate Gaussians fit to the transformed data (the covariance matrix is typically not diagonal and the estimation involves $O(d^2)$ elements).
\cite{chong2020effectively} shows that FID could be significantly biased in finite sample.

\paragraph{Kernel-based evaluation}
There is a line of kernel-based evaluation metrics which admit uncertainty quantification \citep{Binkowski2018DemystifyingMMDGANs,Liu2016KSD,Chwialkowski2016KernelGoF,Bounliphone2016RelativeSimilarity,Kanagawa2023KernelSteinLatent}.
However, kernel-based evaluation hinges on having a powerful kernel, which is challenging for complex, high-dimensional data such as images and text. 
In particular, as we show in the simulations, even carefully designed kernels may fail to capture all fine-grained distinctions.
Moreover, kernel-based methods are computationally expensive, requiring quadratic pairwise computations, whereas our method scales linearly in sample size.
For kernelized Stein discrepancies \citep{Liu2016KSD,Kanagawa2023KernelSteinLatent}, scores of generative models are required, which can be expensive to compute. 
For large language models (LLMs), evaluating the score entails back-propagation through the network, which is substantially more costly than the single forward pass required by our method.

\paragraph{Likelihood-ratio-based evaluation}
There is a thread of works in using likelihood–ratio for comparing statistical models \citep{NeymanPearson1933MostEfficient,Wilks1938LRAsymptotics,Vuong1989LRTest}.
Given two candidate families of distributions (two statistical models), likelihood–ratio–based tests determines the more suitable family by first fitting each model to the data and then evaluating the difference in the fitted models' log-likelihoods. 
However, for modern generative models \citep{song2021denoising, dubey2024llama}, this fitting step is typically computationally expensive or even prohibitive.
In contrast, our approach operates on pre-trained generative models and requires no additional fitting.
In addition, we complement this line of work by offering conditional comparisons (Section~\ref{sec:conditional}), which are important in scenarios where poor performance within specific subpopulations is especially consequential or dangerous.
Finally, while this literature adopts log-likelihood to maximize test power, we work with the KL divergence mostly because it is the only $f$-divergence that admits the cancellation property (Proposition~\ref{prop:KL.unique}), crucial for valid inference.


\section{Relative score for generative model comparison}\label{sec:relative.error}

Despite the popularity of the KL-divergence (our absolute score), its estimation and inference are considerably challenging. 
According to \cite{zhao2020minimax}, the minimax optimal rate for estimating the KL divergence between two densities in \(\mathbb{R}^d\), based on a sample of size $n_{\text{test}}$ from each density, scales as slow as \(n_{\text{test}}^{-2/d}\).
In addition, the asymptotic distribution of the KL-divergence is typically intractable except for special cases \citep{belov2011distributions}, and computationally-heavy bootstrap or subsampling methods are called for to conduct inference based on the KL-divergence \citep{arizono1989test}.

Instead of investigating the absolute scores of two generative models separately, we propose to directly study the \emph{relative} score, the difference between their absolute scores, defined as
\begin{align}\label{defi:relative.score}
    \delta(\hat{\mathbb{P}}_1, \hat{\mathbb{P}}_2) 
    = -\text{KL}(\mathbb{P} \| \hat{\mathbb{P}}_1) + \text{KL}(\mathbb{P} \| \hat{\mathbb{P}}_2).
\end{align}
The relative score aims to quantify the performance gap between the two generative models.
If \(\delta(\hat{\mathbb{P}}_1, \hat{\mathbb{P}}_2) > 0\), it implies that \(\text{KL}(\mathbb{P} \| \hat{\mathbb{P}}_1) < \text{KL}(\mathbb{P} \| \hat{\mathbb{P}}_2)\), and we conclude that \(\hat{\mathbb{P}}_1\) is superior to \(\hat{\mathbb{P}}_2\).

In contrast to the absolute score, the relative score benefits from a nice cancellation of some hard-to-estimate term, facilitating its estimation and inference.
Explicitly, the absolute score~(\ref{defi:absolute.error}) contains two terms, with \(\int \log(\log(p(y)) \, d\mathbb{P}(y)\) being less tractable than \(\int \hat{p}_1(y)) \, d\mathbb{P}(y)\), as the generating mechanism \(\hat{p}_1(y)\) is essentially known, i.e.,
\begin{align*}
    s(\hat{\PP}_1)
    = -\Big(\underbrace{\int \log\left({{p}(y)}\right) ~d\PP(y)}_{\text{Challenging}} - \underbrace{\int \log\left({\hat{p}_1(y)}\right)  ~d\PP(y)}_{\text{Tractable}}\Big).
\end{align*}
By the definition (\ref{defi:relative.score}),
$\delta(\hat{\mathbb{P}}_1, \hat{\mathbb{P}}_2)$ equals
\begin{align}\label{eq:relative.error.2}
\hspace{-0.2cm}
\begin{split}
    &\quad~-\Big(\underbrace{\int \log({p}(y)) ~d{\PP}(y)}_{\text{Challenging}}- \underbrace{\int \log(\hat{p}_1(y)) ~d{\PP}(y)}_{\text{Tractable 1}}\Big)
    +\Big(\underbrace{\int \log({p}(y)) ~d{\PP}(y)}_{\text{Challenging}} - \underbrace{\int \log(\hat{p}_2(y)) ~d{\PP}(y)}_{\text{Tractable 2}}\Big) \\
    &= \underbrace{\int \log(\hat{p}_1(y)) ~d{\PP}(y)}_{\text{Tractable 1}}- \underbrace{\int \log(\hat{p}_2(y)) 
    ~d{\PP}(y)}_{\text{Tractable 2}}.
\end{split}
\end{align}
Here the challenging term \(\int \log(p(y)) \, d\PP(y)\), appearing in both \(\text{KL}(\PP \| \hat{\PP}_1)\) and \(\text{KL}(\PP \| \hat{\PP}_2)\), cancels out.
The remaining term \(\int \log(\hat{p}_1(y)) - \log(\hat{p}_2(y)) \, d\mathbb{P}(y)\) is the expectation of an effectively known function $\log(\hat{p}_1(y)) - \log(\hat{p}_2(y))$ regarding the test data distribution.
Therefore, the relative score can be efficiently estimated using a first-order U-statistic based on a set of test data points, detailed in Section~\ref{sec:method} below.

We conclude this section by showing that the attractive cancellation (\ref{eq:relative.error.2}) in the relative score is \emph{unique} to our choice of KL divergence.
For a convex function \( f : [0, +\infty) \to (-\infty, +\infty] \) such that \( f(x) \) is finite for all \( x > 0 \), \( f(1) = 0 \), and \( f(0) = \lim_{t \to 0^+} f(t) \), the f-divergence of \( \PP \) from \( \hat{\PP}_1 \) is defined as
$
D_f(\PP \| \hat{\mathbb{P}}_1) := \int_\Omega f\left( \frac{p(y)}{\hat{p}_1(y)} \right) \hat{p}_1(y) dy
$.
KL-divergence is a special case of $f$-divergence with \( f(x) = x \log(x) \).
\begin{proposition}\label{prop:KL.unique} 
    For an $f$-divergence with $f \in C^1$, if there exists a function $g$ such that for any $\hat{\PP}_1$, $\hat{\PP}_2$, $\PP$,
    \begin{align}\label{eq:KL.unique} 
         D_f({\PP} \| \hat{\PP}_1) - D_f({\PP} \| \hat{\PP}_2) = \int g(\hat{\PP}_1, \hat{\PP}_2) d\PP,
    \end{align}
    then there exists $\beta \ge 0$ such that $f(x) = \beta x \log(x)$, i.e., $D_f({\PP} \| \hat{\PP}_1) = \beta \mathrm{KL}({\PP} \| \hat{\PP}_1)$.
\end{proposition}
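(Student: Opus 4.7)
The plan is to reduce the integral identity to a pointwise functional equation for $f'$, which can then be solved to yield the logarithmic form. The starting observation is that, since the identity must hold for every choice of the three densities, the map $p \mapsto D_f(\PP \| \hat{\PP}_1) - D_f(\PP \| \hat{\PP}_2)$ is forced to be \emph{linear} in $p$ (for fixed $\hat{\PP}_1, \hat{\PP}_2$). I will exploit this linearity by taking a G\^ateaux derivative in $p$.

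Specifically, for fixed $\hat{p}_1, \hat{p}_2$, consider perturbations $p \mapsto p + \epsilon h$ with $\int h \, dy = 0$ (to preserve the unit-mass constraint). Differentiating the identity at $\epsilon = 0$ yields
\begin{equation*}
\int h(y) \left[ f'(p/\hat{p}_1) - f'(p/\hat{p}_2) - g(\hat{p}_1, \hat{p}_2) \right] dy = 0
\end{equation*}
for every mean-zero $h$. This forces the bracket in $y$ to be almost-everywhere constant. By subtracting the resulting identity evaluated at two distinct points and using localized bump constructions to realize arbitrary positive values of $(\hat{p}_1, \hat{p}_2, p)$ at a chosen point, I obtain the pointwise relation
\begin{equation*}
f'(c/a) - f'(c/b) = g(a, b) + K \quad \text{for all } a, b, c > 0
\end{equation*}
and some universal constant $K$, which I absorb into $g$.

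The next step solves this functional equation. Since the left-hand side depends on $c$ only through $c/a$ and $c/b$, a substitution (fix $u = c/a$, $v = c/b$, let $c$ vary) shows that $g(a, b)$ depends only on the ratio $b/a$, say $g(a,b) = G(b/a)$. Setting $v = 1$ identifies $G(u) = f'(u) - f'(1)$, and substituting back gives $f'(u) - f'(v) = f'(u/v) - f'(1)$. Defining $F(u) = f'(u) - f'(1)$ then yields the multiplicative Cauchy equation $F(uv) = F(u) + F(v)$ for all $u, v > 0$. Continuity of $F$ (from $f \in C^1$) forces $F(u) = \beta \log u$, and integrating together with the normalization $f(1) = 0$ gives $f(u) = \beta u \log u + \alpha (u - 1)$ for some $\alpha \in \RR$. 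The affine term $\alpha(u-1)$ contributes nothing to any $f$-divergence since $\int \alpha(p/\hat{p}_1 - 1) \hat{p}_1 \, dy = \alpha(1 - 1) = 0$, so modulo this trivial equivalence $f(u) = \beta u \log u$ and thus $D_f(\PP \| \hat{\PP}_1) = \beta \mathrm{KL}(\PP \| \hat{\PP}_1)$. Convexity of $f$ finally pins down $\beta \ge 0$ via $f''(u) = \beta/u$.

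The main obstacle, I expect, is the rigorous reduction from the integral identity to the pointwise relation. One must realize arbitrary positive triples $(a,b,c)$ as pointwise values of genuine probability densities while simultaneously handling the global additive constant $\lambda(p, \hat{p}_1, \hat{p}_2)$ that arises from the mean-zero constraint on $h$. My plan is to remove this nuisance constant by evaluating the bracket at two independently chosen base points and subtracting, which cleanly isolates a universal functional equation on $(0, \infty)^3$ and sets the stage for the Cauchy-equation argument.
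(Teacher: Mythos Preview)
Your proposal is correct and follows essentially the same strategy as the paper: take the G\^ateaux derivative of the identity in the direction of a mean-zero perturbation of $p$, pass from the resulting integral constraint to a pointwise functional equation, and then solve that equation via (a version of) Cauchy's functional equation, finally using convexity to fix the sign. Your execution is somewhat more direct than the paper's: you work with $f'$ and arrive at the multiplicative Cauchy equation $F(uv)=F(u)+F(v)$ immediately, whereas the paper introduces a chain of re-parametrizations $f_1(x)=f(e^x)$, $f_2=-f_1'+f_1$, $f_3=f_2+f_1'(0)$ to land on the additive Cauchy equation; these are of course equivalent through the logarithm, so the difference is cosmetic rather than substantive.
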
    
\noindent We prove Proposition~\ref{prop:KL.unique} by (1) reducing it to the Cauchy functional equation problem through multiple rounds of re-parametrization; (2) applying the uniqueness of the solution to the Cauchy functional equation.
Details are provided in Section~\ref{appe:sec:proof} of the Appendix.

\subsection{Estimation and Inference of Relative Score}\label{sec:method}


\subsubsection{Estimator}\label{sec:estimation}

By (\ref{eq:relative.error.2}), for generative models with accessible $\hat{p}_1(Y_i)$ and $\hat{p}_2(Y_i)$, we estimate the relative score by the first-order U-statistic (sample mean) on the test dataset,
\begin{align}\label{eq:estimator.1}
    \hat{\delta}(\hat{\mathbb{P}}_1, \hat{\mathbb{P}}_2) := \frac{1}{n_{\text{test}}} \sum_{i=1}^{n_{\text{test}}} \log(\hat{p}_1(Y_i)) - \log(\hat{p}_2(Y_i)).
\end{align}
According to the standard property of U-statistics, we establish the following unbiasedness result.
\begin{proposition}\label{prop:unbiased}
    The estimator \(\hat{\delta}(\hat{\mathbb{P}}_1, \hat{\mathbb{P}}_2)\) in (\ref{eq:estimator.1}) is unbiased, i.e.,
    $\mathbb{E}\left[\hat{\delta}(\hat{\mathbb{P}}_1, \hat{\mathbb{P}}_2)\right] = \delta(\hat{\mathbb{P}}_1, \hat{\mathbb{P}}_2)$.
\end{proposition}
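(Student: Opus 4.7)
The proposition is essentially immediate once one uses the cancellation identity already derived in~\eqref{eq:relative.error.2}, so the plan is short: rewrite $\delta(\hat{\PP}_1, \hat{\PP}_2)$ as a single expectation under $\PP$ of a ``known'' integrand, then invoke linearity of expectation together with the i.i.d.\ assumption on the test sample $Y_1,\dots,Y_{n_{\test}}$.

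Concretely, the first step is to record the consequence of~\eqref{eq:relative.error.2}:
\[
\delta(\hat{\PP}_1, \hat{\PP}_2) \;=\; \int \log \hat{p}_1(y)\, d\PP(y) \;-\; \int \log \hat{p}_2(y)\, d\PP(y) \;=\; \EE_{Y \sim \PP}\!\left[\log \hat{p}_1(Y) - \log \hat{p}_2(Y)\right],
\]
so that the a priori intractable differential-entropy term $\int \log p(y)\, d\PP(y)$ has cancelled and the target becomes an ordinary mean of a function of $y$ whose values are accessible at each $Y_i$. The second step is to apply linearity of expectation to the sample-mean estimator in~\eqref{eq:estimator.1}:
\[
\EE\!\left[\hat{\delta}(\hat{\PP}_1, \hat{\PP}_2)\right] \;=\; \frac{1}{n_{\test}} \sum_{i=1}^{n_{\test}} \EE\!\left[\log \hat{p}_1(Y_i) - \log \hat{p}_2(Y_i)\right].
\]
Since each $Y_i \sim \PP$, every summand on the right coincides with $\delta(\hat{\PP}_1, \hat{\PP}_2)$ by the previous display, and averaging recovers the claim.

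There is no substantive obstacle here; the only thing to verify is that the expectations above are well defined, so that linearity of expectation may be applied. This follows from Assumption~\ref{assu:density.exist} together with the standing (and implicit) requirement that $\KL(\PP \| \hat{\PP}_j) < \infty$ for $j=1,2$: under these, both $\log(\hat{p}_1/p)$ and $\log(\hat{p}_2/p)$ are $\PP$-integrable, hence so is their difference $\log \hat{p}_1 - \log \hat{p}_2$, and the exchange of integration and summation is immediate. In short, the substance of the proof has already been performed in deriving the cancellation~\eqref{eq:relative.error.2}; unbiasedness is then a one-line consequence.
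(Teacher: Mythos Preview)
Your proof is correct and follows exactly the paper's approach: the paper's own proof reads, in its entirety, ``\Cref{prop:unbiased} follows from the linearity of expectation and the fact that the test data \(Y_i \sim \mathbb{P}\).'' Your additional remark on integrability is a welcome bit of extra care, but the argument is otherwise identical.
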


In Section~\ref{appe:sec:computation} of the Appendix, we detail the computation of our estimator for both image and text generative models discussed in Section~\ref{sec:background}, along with several acceleration techniques to improve the computational efficiency.


    

\subsubsection{Inference}\label{sec:inference}

We describe the asymptotic distribution of the estimator in (\ref{eq:estimator.1}) in the following theorem.
\begin{theorem}\label{theo:CLT}
    Let \( V:= \)\(\var\left(\log(\hat{p}_1(Y_i)) - \log(\hat{p}_2(Y_i))\right)\). If $V < \infty$,
    \[
    \sqrt{{n_{\text{test}}}} \left(\hat{\delta}(\hat{\mathbb{P}}_1, \hat{\mathbb{P}}_2) - {\delta}(\hat{\mathbb{P}}_1, \hat{\mathbb{P}}_2)\right)
    \stackrel{d}{\to} \calN\left(0, V\right).
    \]
\end{theorem}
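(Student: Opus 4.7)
The plan is to recognize that the estimator $\hat{\delta}(\hat{\mathbb{P}}_1, \hat{\mathbb{P}}_2)$ defined in \eqref{eq:estimator.1} is simply the sample mean of i.i.d.\ random variables, and then invoke the classical (Lindeberg--L\'evy) central limit theorem. Concretely, define
\begin{equation*}
Z_i := \log(\hat{p}_1(Y_i)) - \log(\hat{p}_2(Y_i)), \qquad i = 1, \ldots, n_{\test}.
\end{equation*}
Since $Y_1, \ldots, Y_{n_{\test}}$ are i.i.d.\ samples from $\PP$ and the functions $\log \hat p_1, \log \hat p_2$ are fixed (deterministic) measurable functions, the $Z_i$ are i.i.d.\ random variables.

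The first step is to identify the mean and variance of $Z_i$. By Proposition~\ref{prop:unbiased} (or directly by Eq.~\eqref{eq:relative.error.2}), $\EE[Z_i] = \delta(\hat{\mathbb{P}}_1, \hat{\mathbb{P}}_2)$. By the definition given in the theorem, $\var(Z_i) = V$, which is assumed to be finite. The estimator can then be rewritten as
\begin{equation*}
\hat{\delta}(\hat{\mathbb{P}}_1, \hat{\mathbb{P}}_2) = \frac{1}{n_{\test}}\sum_{i=1}^{n_{\test}} Z_i.
\end{equation*}

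The second and final step is to apply the classical CLT for i.i.d.\ sequences with finite variance, which yields
\begin{equation*}
\sqrt{n_{\test}}\left( \frac{1}{n_{\test}}\sum_{i=1}^{n_{\test}} Z_i - \EE[Z_1] \right) \stackrel{d}{\to} \calN(0, V),
\end{equation*}
giving exactly the stated conclusion. There is essentially no hard step here, as long as we have the cancellation identity \eqref{eq:relative.error.2} in hand (which shows that $Z_i$ is a well-defined integrable function of $Y_i$ and that its mean equals $\delta(\hat{\mathbb{P}}_1, \hat{\mathbb{P}}_2)$). The only mild subtlety worth noting is that $V < \infty$ is imposed as a hypothesis rather than derived: it is a nontrivial condition that $\log(\hat p_1/\hat p_2)$ be square-integrable under $\PP$, since the densities $\hat p_i$ may be small in regions where $p$ has mass; but once assumed, no further work is needed beyond invoking Lindeberg--L\'evy.
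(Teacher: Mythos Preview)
Your proposal is correct and matches the paper's own proof essentially verbatim: the paper simply notes that the $Y_i$ are i.i.d.\ from $\PP$ and that Theorem~\ref{theo:CLT} is an application of the Lindeberg--L\'evy central limit theorem to the i.i.d.\ random variables $\log(\hat{p}_1(Y_i)) - \log(\hat{p}_2(Y_i))$.
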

The detailed proof is provided in Section~\ref{appe:sec:proof} of the Supplementary Material.
Other methods that provide uncertainty quantification \citep{Bounliphone2016RelativeSimilarity,Liu2016KSD,Kanagawa2023KernelSteinLatent} focus on non-KL distances (e.g., kernel-based distances) and establish the joint asymptotic distribution of estimated absolute scores $(\hat{s}(\hat{\PP}_1),\hat{s}(\hat{\PP}_2))$. 
In contrast, for the KL divergence in our method, the absolute score estimators $\hat{s}(\hat{\PP}_1)$ and $\hat{s}(\hat{\PP}_2)$ are  heavily biased with intractable asymptotic distribution, but {their difference} enjoys the cancellation and still converges at the $n^{-1/2}$ rate to a normal limit.

By the law of large numbers, the empirical variance of \(\log(\hat{p}_1(Y_i)) - \log(\hat{p}_2(Y_i))\), denoted by \(\hat{V}\), is a consistent estimator of \(V\). 
Using Slutsky's theorem (see e.g., Lemma 2.8 of \cite{van2000asymptotic}), we derive the following corollary of Theorem~\ref{theo:CLT}.
\begin{corollary}\label{coro:CLT}
    If $0 < V < \infty$, then
    \begin{align}\label{eq:CLT}
        \frac{\hat{\delta}(\hat{\mathbb{P}}_1, \hat{\mathbb{P}}_2) - {\delta}(\hat{\mathbb{P}}_1, \hat{\mathbb{P}}_2)}{\sqrt{\hat{V} / {n_{\text{test}}}}}
     \stackrel{d}{\to} \calN\left(0, 1\right).
    \end{align}
\end{corollary}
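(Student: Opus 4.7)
The plan is to obtain the result as a direct consequence of \cref{theo:CLT} combined with the consistency of $\hat{V}$ and Slutsky's theorem. By \cref{theo:CLT}, under the assumption $V<\infty$, I already have
\[
\sqrt{n_{\test}}\bigl(\hat{\delta}(\hat{\mathbb{P}}_1,\hat{\mathbb{P}}_2)-\delta(\hat{\mathbb{P}}_1,\hat{\mathbb{P}}_2)\bigr)\stackrel{d}{\to}\calN(0,V),
\]
so after dividing by $\sqrt{V}$ this is equivalent to $(\hat{\delta}-\delta)/\sqrt{V/n_{\test}}\stackrel{d}{\to}\calN(0,1)$. The remaining task is just to justify replacing $V$ in the denominator by its sample analogue $\hat{V}$.

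Next I would establish that $\hat{V}\stackrel{p}{\to}V$. Since $\hat{V}$ is the empirical variance of the i.i.d.\ summands $Z_i:=\log\hat{p}_1(Y_i)-\log\hat{p}_2(Y_i)$, it equals $\tfrac{1}{n_{\test}}\sum_i Z_i^{2}-(\tfrac{1}{n_{\test}}\sum_i Z_i)^{2}$ (up to an asymptotically negligible $n/(n-1)$ factor). The hypothesis $V<\infty$ together with the existence of $\delta=\mathbb{E}[Z_i]$ implies $\mathbb{E}[Z_i^{2}]<\infty$, so Kolmogorov's strong law of large numbers applies to both the sample mean and the sample second moment. The continuous mapping theorem then gives $\hat{V}\stackrel{p}{\to}V$, and using $V>0$ together with continuous mapping once more yields $\sqrt{V/\hat{V}}\stackrel{p}{\to}1$.

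Finally, I would write
\[
\frac{\hat{\delta}(\hat{\mathbb{P}}_1,\hat{\mathbb{P}}_2)-\delta(\hat{\mathbb{P}}_1,\hat{\mathbb{P}}_2)}{\sqrt{\hat{V}/n_{\test}}}=\underbrace{\frac{\hat{\delta}(\hat{\mathbb{P}}_1,\hat{\mathbb{P}}_2)-\delta(\hat{\mathbb{P}}_1,\hat{\mathbb{P}}_2)}{\sqrt{V/n_{\test}}}}_{\stackrel{d}{\to}\calN(0,1)}\cdot\underbrace{\sqrt{V/\hat{V}}}_{\stackrel{p}{\to}1},
\]
and Slutsky's theorem (Lemma 2.8 of \cite{van2000asymptotic}, as already cited in the statement) delivers the claimed convergence in distribution to $\calN(0,1)$. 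There is no genuine obstacle here; the only point worth double-checking is the moment condition needed to invoke the LLN for $\hat{V}$, which is already guaranteed by the hypothesis $V<\infty$ together with the implicit finiteness of $\delta$ used throughout the paper.
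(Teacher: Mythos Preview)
Your proposal is correct and follows essentially the same route as the paper: invoke \Cref{theo:CLT}, use the law of large numbers to get $\hat{V}\stackrel{p}{\to}V$, and conclude via Slutsky's theorem. The paper's proof is terser, but the substance is identical.
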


Corollary~\ref{coro:CLT} allows us to perform statistical inference on the relative score, enabling the determination of the better generative model with a specified level of confidence.
Explicitly, let $\alpha \in (0,1)$ be the confidence level, and we consider the following confidence interval (CI) of the relative score, denoted by $\widehat{\text{CI}}(\alpha)$,
\begin{align}\label{eq:CI}
\hspace{-0.5cm}
    \left[\hat{\delta}(\hat{\mathbb{P}}_1, \hat{\mathbb{P}}_2) - q_{1-\tfrac{\alpha}{2}}\sqrt{\tfrac{\hat{V}}{n_{\text{test}}}},~
    \hat{\delta}(\hat{\mathbb{P}}_1, \hat{\mathbb{P}}_2) + q_{1-\tfrac{\alpha}{2}}\sqrt{\tfrac{\hat{V}}{n_{\text{test}}}}\right]
\end{align}
where $q_{1-\frac{\alpha}{2}}$ denotes the upper $1 - {\alpha}/{2}$ quantile of a standard normal.
The following statement states the validity of the confidence interval.
\begin{corollary}\label{coro:coverage}
    Under the conditions in Corollary~\ref{coro:CLT}, for any $\alpha \in (0,1)$, $\PP\left({\delta}(\hat{\mathbb{P}}_1, \hat{\mathbb{P}}_2) \in \widehat{\mathrm{CI}}(\alpha) \right) \to 1 - \alpha$.
\end{corollary}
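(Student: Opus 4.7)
The plan is to recast the coverage probability as a pivotal statement about a studentized quantity, and then invoke Corollary~\ref{coro:CLT} together with standard continuity arguments; no machinery beyond what is already established in the paper is needed.

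First, I would rewrite the event $\{\delta(\hat{\PP}_1, \hat{\PP}_2) \in \widehat{\mathrm{CI}}(\alpha)\}$ using the explicit form of the interval in Eq.~\eqref{eq:CI}: it is exactly
\[
\left\{ \left| \hat{\delta}(\hat{\PP}_1, \hat{\PP}_2) - \delta(\hat{\PP}_1, \hat{\PP}_2) \right| \leq q_{1-\alpha/2} \sqrt{\hat{V}/n_{\test}} \right\}.
\]
Since $V > 0$ is assumed and $\hat{V} \to V$ in probability (by the law of large numbers, as argued just before Corollary~\ref{coro:CLT}), we have $\hat{V} > 0$ on an event of probability tending to one. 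Dividing through by $\sqrt{\hat{V}/n_{\test}}$ on this event turns the coverage event into $\{ |T_{n_{\test}}| \leq q_{1-\alpha/2}\}$, where $T_{n_{\test}} := (\hat{\delta}(\hat{\PP}_1, \hat{\PP}_2) - \delta(\hat{\PP}_1, \hat{\PP}_2))/\sqrt{\hat{V}/n_{\test}}$ is precisely the pivot appearing on the left of Eq.~\eqref{eq:CLT}.

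Second, Corollary~\ref{coro:CLT} gives $T_{n_{\test}} \stackrel{d}{\to} Z$ with $Z \sim \calN(0,1)$. Since $t \mapsto |t|$ is continuous and the distribution function of $|Z|$ is continuous at $q_{1-\alpha/2}$, the Portmanteau theorem (or equivalently the continuous mapping theorem combined with the fact that $q_{1-\alpha/2}$ is a continuity point of the CDF of $|Z|$) yields
\[
\PP\left( |T_{n_{\test}}| \leq q_{1-\alpha/2}\right) \to \PP\left( |Z| \leq q_{1-\alpha/2}\right) = 1 - \alpha,
\]
where the final equality follows from the definition of $q_{1-\alpha/2}$ as the upper $\alpha/2$ quantile of the standard normal, so that the two symmetric tails contribute total mass $\alpha$.

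There is no genuinely hard step here: the result is a textbook Slutsky-type consequence of Corollary~\ref{coro:CLT}, which itself has already absorbed the work of estimating $V$ consistently. The only mild subtlety is ensuring the denominator $\sqrt{\hat{V}/n_{\test}}$ is positive, which is why the condition $V > 0$ is imposed in Corollary~\ref{coro:CLT}; once that is in hand, the proof collapses to a few lines.
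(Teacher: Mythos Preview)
Your proposal is correct and follows essentially the same approach as the paper's proof, which simply states that the corollary ``comes from the definition of convergence in distribution and Corollary~\ref{coro:CLT}.'' You have merely fleshed out this one-line justification with the explicit pivot rewriting and the continuity-point argument, which is exactly what the paper's terse proof is gesturing at.
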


\section{Extensions}\label{sec:conditional}

\subsection{Conditional generative model comparison}
Our method can also be extended to the comparison of conditional generative models. Suppose that the data $(X_i, Y_i)_{i=1}^{n}$ are generated from joint distribution $\mathbb{P}$ with density $p(X,Y)$. Conditional generative models aim to approximate the conditional distribution $p(Y|X)$ via $\hat{p}(Y|X)$. Then $\hat{p}(Y|X)p(X)$ serves as an approximation of the joint distribution\footnote{In the setting of conditional generative models, the conditioned quantity $X$ is usually given, therefore we don't need to approximate $p(X)$.}.
Given two conditional generative models $\hat{\mathbb{P}}_1, \hat{\mathbb{P}}_2$ that aim to approximate the conditional distribution $p(y|x)$ by $\hat{p}_1(y|x)$ and  $\hat{p}_2(y|x)$, similar to the Section \ref{sec:relative.error}, we consider the absolute score $-\text{KL}(p(x,y) \| \hat{p}_1(y|x)p(x))$.
Then the corresponding relative score $\delta(\hat{\mathbb{P}}_1, \hat{\mathbb{P}}_2)$ can be defined as
\begin{align*}
\begin{split}
& -{\rm KL}\left(p(x,y) \| \hat{p}_1(y|x)p(x)\right) + {\rm KL}\left(p(x,y) \| \hat{p}_2(y|x)p(x)\right) \\
= & \int \left(\log \hat{p}_1(y|x) - \log \hat{p}_2(y|x)\right) p(x,y) dydx,
\end{split}
\end{align*}
where the hard-to-estimate entropy term also cancels out. Thus, we can define the unbiased estimator by:
\begin{equation}
\label{eq:conditional_estimator}
   \hat{\delta}(\hat{\mathbb{P}}_1, \hat{\mathbb{P}}_2) = \frac{1}{n} \sum_{i=1}^{n} \log \hat{p}_1(y_i|x_i) - \log \hat{p}_2(y_i|x_i),
\end{equation}
which is again a first-order U-statistic. All theoretical results in Section~\ref{sec:relative.error} naturally extend to the conditional setting.

\subsection{Edgeworth expansions for limited test data}
\label{sec:edgeworth}

Corollary \ref{coro:coverage} ensures asymptotic correct coverage rate of the $\widehat{\mathrm{CI}}(\alpha)$, but when we have limited test data ($n \leq 100$) (which may arise in settings with costly human labels, slow data collection, online experiments with early stopping), or non-Gaussian output (i.e., $\widehat{\mathbb{P}}_i$ deviates from the normal distribution), then $\widehat{\mathrm{CI}}(\alpha)$ may not always provide reliable coverage rate.
In this subsection, we focus on further theoretical discussions on the inference of the relative score. 
By introducing a high-order expansion for the distribution of testing statistics, called {E}dgeworth {E}xpansion\textbf{s} (EEs), we can obtain more accurate statistical inference results.
We first present the superiority of EEs over classical Central Limit Theorem (CLT, e.g., Theorem~\ref{theo:CLT}) with respect to constructing CI of the relative score $\delta(\widehat{\mathbb{P}}_1,\widehat{\mathbb{P}}_2)$ when sample size $n$ is small, as shown in Figure \ref{fig:linear_optimal}. The detailed setting is given in Section~\ref{sec:simulation.simulation}.


\begin{figure}[htbp]
\hspace{-0.5cm}
\centering
\includegraphics[width=0.8\textwidth]{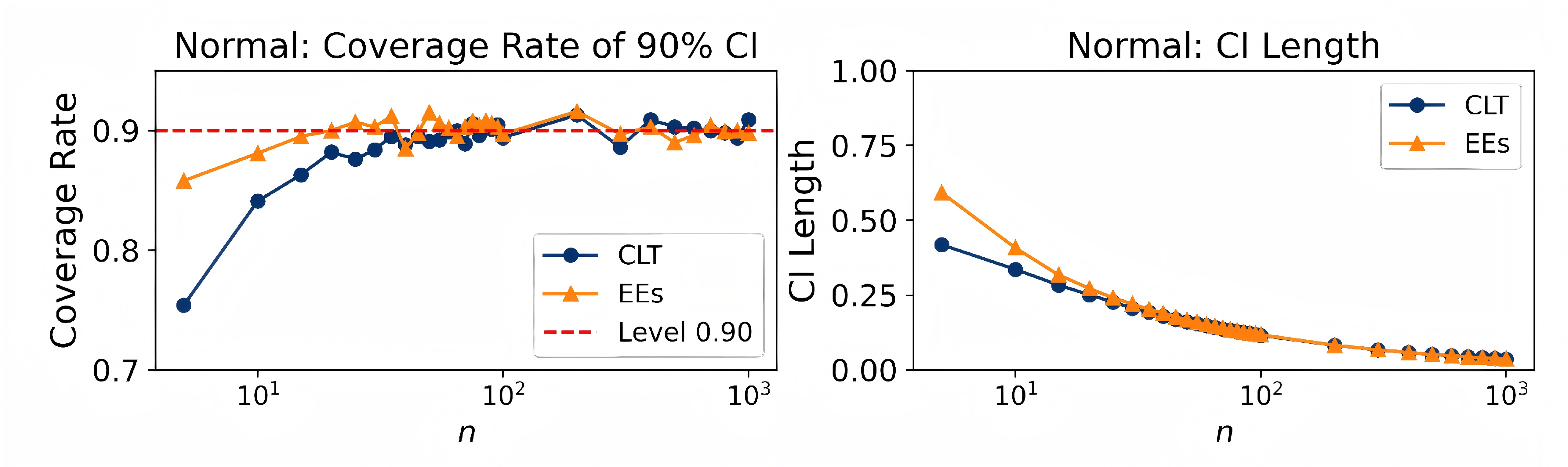}
\vspace{-0.25cm}
\caption{\footnotesize 
Coverage rates and length of confidence intervals obtained by Edgeworth Expansions (\ref{eq:CI-EEs}) and Central Limit Theorem (\ref{eq:CI})
across different sample sizes $n$ using simulated data generated by (\ref{eq:simu_model}) with $\epsilon = 0.07$.} 
\label{fig:linear_optimal}
\end{figure}


By utilizing information from higher-order moments, such as skewness and kurtosis, EEs can yield more precise limiting distributions and faster convergence rate against CLT.
Specifically, we revisit the proof of Theorem \ref{theo:CLT} of $Z$-statistics $Z_n={(V/ {n_{\text{test}}}})^{-1/2}[\hat{\delta}(\hat{\mathbb{P}}_1, \hat{\mathbb{P}}_2) - {\delta}(\hat{\mathbb{P}}_1, \hat{\mathbb{P}}_2)]$ in Theorem \ref{theo:CLT}. 
For simplicity, we slightly abuse the notations, let $\{X_{i}=\log(\hat{p}_1(Y_i)) - \log(\hat{p}_2(Y_i))\sim F\}_{i=1}^n $ be i.i.d. relative scores, with $\mathbb{E}(X)=\mu$, $0 < \var(X)=\sigma^2<\infty$, and let $\overline{X}_n = n^{-1}\sum_{i=1}^n X_i$.
By the independence of $X_i$, the characteristic function (c.f.) of $Z_n$ has the expansion form based on the first and second moments of $X_i$:
\begin{equation}\label{eq:cfZn}
\psi_{Z_n}(t) = \psi_{\frac{X_i-\mu}{\sigma}}^{n}\left(\frac{t}{\sqrt{n}}\right)= \left(1-\frac{t^2}{2n}+o\left(\frac{t^2}{n}\right)\right)^n. 
\end{equation}
Combined (\ref{eq:cfZn}) with the L\'{e}vy continuity theorem (see e.g., Theorem 2.13 of \cite{van2000asymptotic}), we can derive the CLT 
\begin{equation}
Z_n := \frac{\sqrt{n}(\overline{X}_n-\mu)}{\sigma} \stackrel{d}{\to} \calN\left(0, 1\right).
\label{eq:CLT2}
\end{equation}
However, in generative models, both $\{(\hat{p}_1(Y_i),\hat{p}_2(Y_i))\}_i$ and $\{X_{i}\}_i$ are usually far away from Gaussian. CLT unify the Gaussian distribution as the asymptotic distribution for the inference of relative scores $\delta$, which may result in unreliable comparison results, especially when $n$ is relatively small.

In fact, CLT (\ref{eq:cfZn}) only performs a second-order moment expansion of the c.f. of $Z_n$, and does not utilize higher-order moments such as skewness and kurtosis. 
Therefore, one can further expand the c.f. of $Z_n$ based on higher-order population moments of $X$ (if exist),
such that the limiting distribution of $Z_n$ can be characterized more precisely, further making reliable inferences in generative model comparisons. 
Based on the calculation of cumulants and the Fourier inversion of $\psi_{Z_n}(t)$, the following theorem gives the EEs of $Z_n$.

\begin{theorem}
\label{theo:edgeworth}
Under certain regularity assumptions on the distribution of $X_i$ (details in Section \ref{appe:sec:proof} of the Appendix), the distribution function $F_n(x)$ of $Z_n$ satisfies
\begin{align}\label{eq:edgeworth}
\hspace{-0.2cm}
\begin{split}
F_n(x)=
\underbrace{\Phi(x)-n^{-1/2}\frac{\kappa_3}{6}H_2(x)\phi(x)-n^{-1}\left\{\frac{\kappa_4}{24}H_3(x)\phi(x)+\frac{\kappa_3^2}{72}H_5(x)\phi(x) \right\}}_{{G(x)}}+o(n^{-1}),
\end{split}
\end{align}
where $\kappa_3 = m_3/\sigma^3$ and $\kappa_4 = m_4/\sigma^4-3$ is the skewness and kurtosis of $X_i$, respectively, $H_k(x)$ is the $k$-order Hermite polynomial, $\phi(x)$ is the density of $\Phi(x)$. The distribution $G(x)$ is 
called the EEs of $Z_n$. Furthermore, 
\begin{equation*}
    \sup_{x\in\mathbb R} \left|
    F_n(x)-G(x)\right|
    =o\left(n^{-1}\right).    
\end{equation*}
\end{theorem}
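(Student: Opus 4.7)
The plan is to follow the classical Edgeworth expansion argument adapted to our standardized sum $Z_n$, combining a careful Taylor expansion of the characteristic function with a smoothing inequality to pass from c.f.-level control to uniform distribution-level control. Since Assumption~\ref{assu:high-order-moment} guarantees finiteness of cumulants up to order four and Assumption~\ref{assu:absolute-continous} is precisely Cram\'er's condition on $F$, both ingredients needed for the standard result are in place.

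First, I would expand $\psi_{Z_n}(t) = [\psi_{(X-\mu)/\sigma}(t/\sqrt{n})]^n$ by writing the log-characteristic function in terms of cumulants. Denote by $\kappa_3$ and $\kappa_4$ the standardized skewness and excess kurtosis. A fourth-order Taylor expansion gives, for $|t| \le c\sqrt{n}$,
\begin{equation*}
\log \psi_{(X-\mu)/\sigma}\!\left(\tfrac{t}{\sqrt{n}}\right)
= -\frac{t^2}{2n} + \frac{(it)^3}{6}\,\frac{\kappa_3}{n^{3/2}} + \frac{(it)^4}{24}\,\frac{\kappa_4}{n^{2}} + R_n(t),
\end{equation*}
where $R_n(t)$ is controlled by $\mathbb{E}|X|^4 < \infty$. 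Multiplying by $n$, exponentiating, and expanding $\exp$ in a Taylor series around $e^{-t^2/2}$, collecting terms up to order $n^{-1}$, yields
\begin{equation*}
\psi_{Z_n}(t) = e^{-t^2/2}\left\{1 + \frac{(it)^3}{6\sqrt{n}}\kappa_3 + \frac{1}{n}\!\left[\frac{(it)^4}{24}\kappa_4 + \frac{(it)^6}{72}\kappa_3^2\right]\right\} + o(n^{-1}),
\end{equation*}
uniformly on compact sets in $t$.

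Next, I would invert term-by-term using the identity $\int e^{-itx}(it)^k e^{-t^2/2}\, dt / (2\pi) = (-1)^k H_k(x)\phi(x)$, where $H_k$ is the $k$-th Hermite polynomial. This produces exactly the density version of the target expansion; integrating (or applying Fourier inversion to the distribution function directly) gives the claimed $G(x)$, since the Hermite polynomial orders shift by one when passing from density to c.d.f.\ (e.g.\ the $(it)^3$ term yields $H_2(x)\phi(x)$, $(it)^4$ yields $H_3(x)\phi(x)$, and $(it)^6$ yields $H_5(x)\phi(x)$ with the stated coefficients).

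The main obstacle, and the reason Assumption~\ref{assu:absolute-continous} is needed, is upgrading this \emph{pointwise} c.f. expansion to a \emph{uniform} bound $\sup_x |F_n(x) - G(x)| = o(n^{-1})$. For this I would apply the Berry--Esseen/Esseen smoothing inequality: for any $T > 0$,
\begin{equation*}
\sup_x |F_n(x) - G(x)| \;\lesssim\; \int_{-T}^{T} \left|\frac{\psi_{Z_n}(t) - \widehat{G}(t)}{t}\right| dt \;+\; \frac{1}{T},
\end{equation*}
where $\widehat{G}$ is the Fourier--Stieltjes transform of $G$. I would split the integral into (i) $|t| \le \delta \sqrt{n}$, where the Taylor expansion above controls the integrand by $o(n^{-1})$ after choosing $T = n^{(1+\eta)/2}$ for suitable $\eta > 0$; and (ii) $\delta\sqrt{n} < |t| \le T$, where Cram\'er's condition gives $|\psi_X(t/\sqrt{n})|^n \le \rho^n$ for some $\rho < 1$ and hence this contribution is exponentially small. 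Combining the two regimes and choosing $T$ large enough delivers the uniform $o(n^{-1})$ rate. The delicate bookkeeping is in region (i): one must ensure the remainder $R_n(t)$ after exponentiation is genuinely $o(n^{-1})$ integrated against $|t|^{-1}$, which is where the fourth-moment assumption is used sharply rather than just qualitatively.
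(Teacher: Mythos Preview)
Your proposal is correct and follows essentially the same route as the paper: expand the cumulant generating function of the standardized variable, exponentiate and collect terms to obtain the same expression for $\psi_{Z_n}(t)$, and invert term-by-term via the Hermite-polynomial identity to recover $G(x)$. For the uniform $o(n^{-1})$ bound the paper simply cites Theorem~2.1 of Hall's \emph{The Bootstrap and Edgeworth Expansion} and omits the details, so your sketch of the Esseen smoothing inequality with the two-region split under Cram\'er's condition is in fact more explicit than what the paper provides.
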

The detailed proofs of Theorem \ref{eq:edgeworth} can be found in Section \ref{appe:sec:proof} of the Appendix.
Intuitively, the Edgeworth expansions (\ref{eq:edgeworth}) have a similar form to the Taylor expansion, but are in a stochastic version. The distribution  $G(x)$ from EEs is a more precise and absolutely consistent expansion of $F_n(x)$, including additional reminder terms of $O(n^{-1/2})$ and $O(n^{-1})$ on the basis of normal distribution $\Phi(x)$. For comparison, the remainder term for CLT in Theorem \ref{theo:CLT} is $o(1)$. 
The coefficients of the $n^{-1/2}$ and $n^{-1}$ term contain higher-order skewness and kurtosis information of relative scores, 
which more accurately characterizes the approximation error from $\Phi(x)$ to $F_n(x)$.

Theorem \ref{theo:edgeworth} enables a more accurate determination of the better generative model compared to the CLT version (\ref{eq:CI}).
Similarly, we can obtain the following $(1-\alpha)$-level confidence interval $\widehat{\mathrm{CI}}_{\mathrm{EEs}}(\alpha)$:
\begin{equation}\label{eq:CI-EEs}
    \left[\hat{\delta}(\hat{\mathbb{P}}_1, \hat{\mathbb{P}}_2)-\alpha_1\sqrt{\frac{{V}}{n}}, \:\hat{\delta}(\hat{\mathbb{P}}_1, \hat{\mathbb{P}}_2)-\alpha_2\sqrt{\frac{{V}}{n}}\right],
\end{equation}
where $\alpha_1,\alpha_2$ are two quantiles of $G(x)$ resulting in the shortest CI length of $\widehat{\mathrm{CI}}_{\mathrm{EEs}}$, satisfying $\int_{\alpha_1}^{\alpha_2} {\rm d}{G(x)}=1-\alpha,~{\rm s.t.}~ {\rm d}{G(\alpha_1)}={\rm d}{G(\alpha_2)}.$

In practice, the variance of relative scores $\log(\hat{p}_1(Y_i)) - \log(\hat{p}_2(Y_i))$ is usually unknown, one can consider the $T$-statistics $T_n={(\hat{V}/ {n_{\test}}})^{-1/2}[\hat{\delta}(\hat{\mathbb{P}}_1, \hat{\mathbb{P}}_2) - {\delta}(\hat{\mathbb{P}}_1, \hat{\mathbb{P}}_2)]$, similar to Corollary \ref{coro:CLT}. The details of EEs of $T_n$ and the counterpart bias-corrected CI are given in Section \ref{appe:sec:proof} of the Appendix\footnote{When the sample size $n$ is relatively small, the EEs can provide more accurate statistical inference results, such as a lower coverage error for confidence intervals, lower type-I error for hypothesis testing, and so on. We leave these theoretical problems for further research.}

\section{Numerical analysis}\label{sec:simulations}


\subsection{Unconditional comparison}
\subsubsection{Simulated data}\label{sec:simulation.simulation}

We use $\mathbb{P}, \hat{\mathbb{P}}_1, \hat{\mathbb{P}}_2$ to denote the distribution of $Y, Y_1,$ and $Y_2$ and generate the data by:
\begin{equation}
\label{eq:simu_model}
\begin{split}
& X, X_1, X_2 \sim \mathcal{N}(0, I_d), \\
& Y \sim A X + B, \quad Y_1 \sim AX_1 + B, \\
& Y_2 \sim (A + \epsilon I_d) X_2 + B + \epsilon,
\end{split}
\end{equation}
where $d = 10$, $A \in \mathbb{R}^{d \times d}$ is a constant diagonal matrix with diagonal elements generated from $\text{U}(0.8,1.2)$, $ B \in \mathbb{R}^{d}$ is a constant vector generated from $\mathcal{N}(0, I_d)$, and $\epsilon \in \mathbb{R}$ is a constant controls the difference between $\hat{\mathbb{P}}_1$ and $\hat{\mathbb{P}}_2$. 
We consider $\epsilon \in \{0.01, 0.02,\dots, 0.2\}$, generate $n = 1000$ sample from $Y$, and construct the confidence interval via (\ref{eq:CI}) with $\alpha = 0.1$\footnote{As discussed in Section~\ref{sec:edgeworth}, the sample size $n=1000$ is relatively large, therefore we consider the CI using standard CLT.}. 
In this setting, we deliberately design $Y$, $Y_1$, and $Y_2$ to be multivariate normal, since this is the only multivariate setting in which the Wasserstein-2 distance admits a closed-form expression, which is used by the coverage rate and power calculation below.

We compute the coverage rate and power of our confidence interval over 1000 repeated experiments. 
The results are shown in Figure \ref{fig:simu_coverage_power}. 
Our confidence intervals achieve coverage rates close to the target level. 
For comparison, we estimate $-\text{KL}(\mathbb{P} \| \hat{\mathbb{P}}_1)$ and $-\text{KL}(\mathbb{P} \| \hat{\mathbb{P}}_2)$ \emph{separately} using the $k$ nearest neighbor (kNN) based estimator in \cite{zhao2020minimax}, take their difference, and construct confidence intervals for $-\text{KL}(\mathbb{P} \| \hat{\mathbb{P}}_1) + \text{KL}(\mathbb{P} \| \hat{\mathbb{P}}_2)$ using resampling methods including 
Subsampling \citep{politis1994large} 
and Adaptive HulC \citep{kuchibhotla2024hulc}. 
We also examine the estimation and resampling-based inference of the Wasserstein-2 distance difference, $-W_2^2(\mathbb{P}, \mathbb{P}_1) + W_2^2(\mathbb{P}, \mathbb{P}_2)$, where each  Wasserstein-2 distance is estimated using the empirical distributions.
As illustrated in Figure~\ref{fig:simu_coverage_power}, these methods fail to provide faithful confidence intervals. More discussion on why existing estimators fail to provide valid confidence intervals can be found in \Cref{app:add_simu}

\begin{figure}
    \centering
    \includegraphics[width=0.45\linewidth]{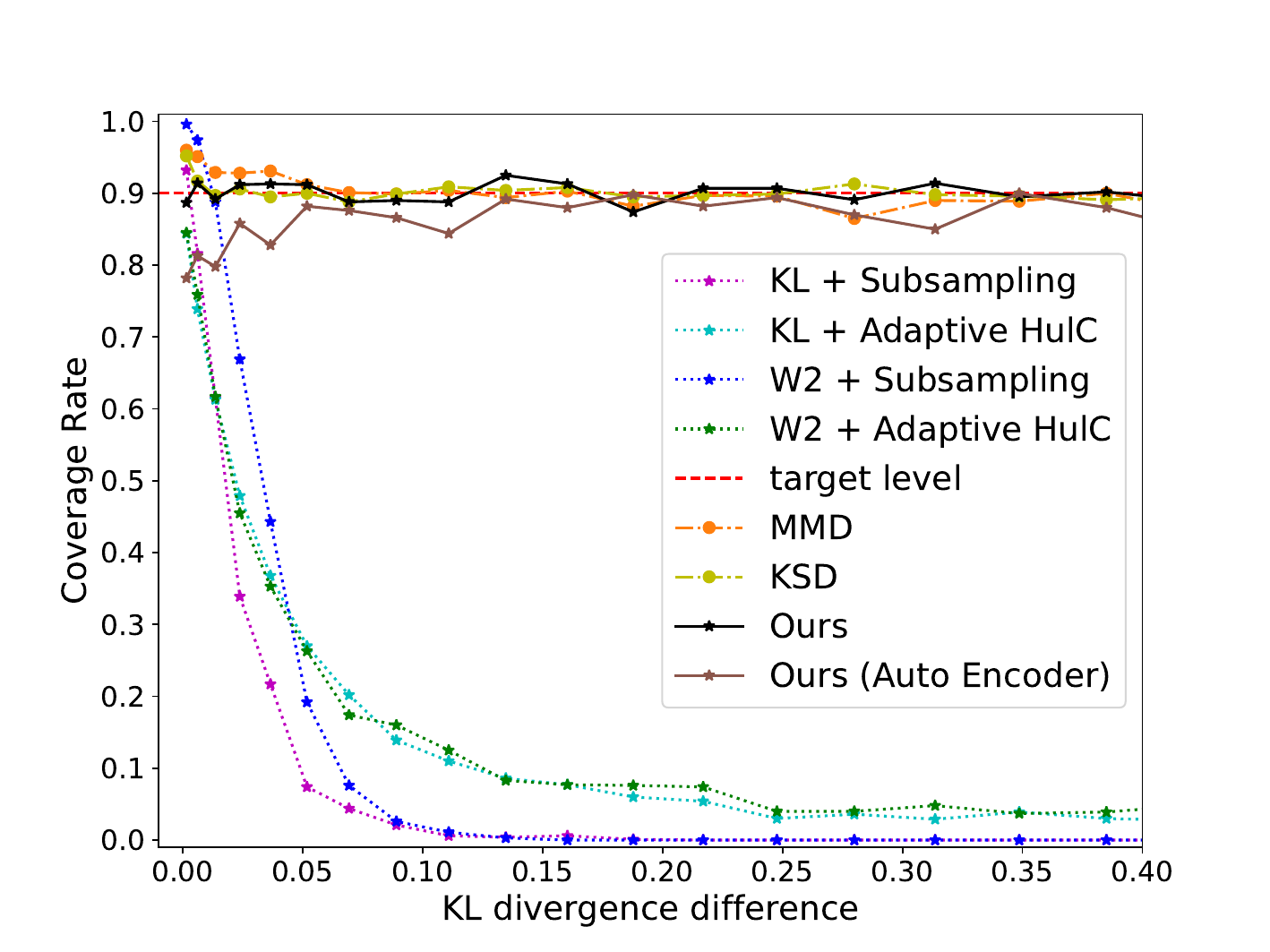}
    \includegraphics[width=0.45\linewidth]{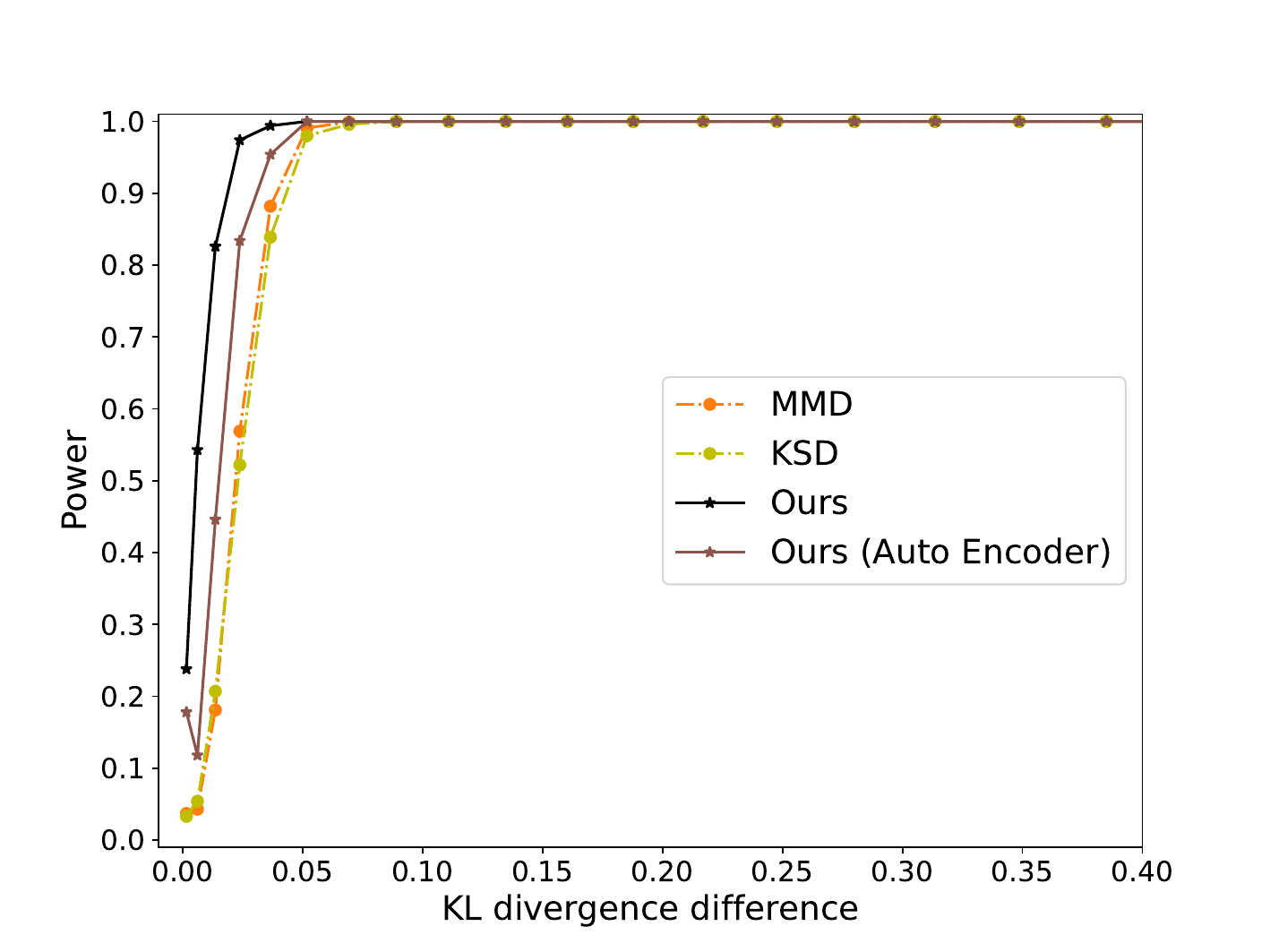}
    \vspace{-0.25cm}
    \caption{\footnotesize Coverage rate and power of confidence intervals constructed by our methods (\ref{eq:CI}) and existing KL divergence and $W_2$ distance estimator paired with resampling methods (Subsampling and
and Adaptive HulC).     
We provide two implementations of our procedure: ``Ours'' can access $\hat{p}_1$, $\hat{p}_2$, while ``Ours (auto-encoder)'' uses an auto-encoder to approximate $\hat{p}_1$, $\hat{p}_2$ (further details are provided in Section~\ref{appe:sec:simulations} of the Supplementary Material).}
    \label{fig:simu_coverage_power}
\end{figure}



In addition, we compare our procedure with other kernel-based methods, where the asymptotic distributions of the estimators are well established. Specifically, we consider the kernel Stein discrepancy (KSD) \citep{Kanagawa2023KernelSteinLatent} and maximum mean discrepancy (MMD) \citep{Bounliphone2016RelativeSimilarity}. For KSD, we use the estimator in \cite{Kanagawa2023KernelSteinLatent} and use the jackknife method to estimate the variance. For MMD, we use the estimator and variance estimator in \cite{Bounliphone2016RelativeSimilarity}. We compute the coverage rate and power of the confidence intervals over 1000 repeated experiments. As shown in Figure \ref{fig:simu_coverage_power}, the kernel-based methods provide valid confidence intervals, but their powers are lower compared to our methods.





In Section~\ref{appe:sec:simulations} of the Supplementary Materials, we illustrate the inference performance of our proposed CLT and EEs methods on more complicated data generation processes.
Across various input distributions 
of $X$ and various smooth output models of $Y$,
the two methods are similar and achieve desirable coverage for a moderately large test sample.
For small test datasets, EEs correct the coverage rate to $1-\alpha$ without sacrificing too much CI length and power, realizing reliability and detection ability trade-off of generative model comparison when the number of data is small. More discussion on when to use CLT intervals versus EEs intervals are provided in \Cref{appe:sec:simulations}.



\subsubsection{Generative models for image}\label{sec:simulation.diffusion}
We apply our methods to real image datasets: evaluating generative models on the CIFAR-10 dataset. We consider denoising diffusion implicit model (DDIM) \citep{song2021denoising}, normalizing flow (NF) model  \cite{dinh2016density}, and variational auto encoder (VAE) model \citep{kingma2013auto}. For DDIM, we consider the deterministic forward pass (see e.g. Section 4.3 of \cite{song2021denoising}) as the inverse transformation of the generative model. We compare pretrained models from \cite{song2021denoising} with different numbers of denoising steps, $S = 20, 50, 100$, and denote the corresponding generative distribution by $\hat{\mathbb{P}}_{\rm{DDIM}_{S}}$. Following \cite{song2021denoising}, we select the sub-sequence time steps using the quadratic schedule.
For the NF and VAE models, we trained them using the default settings provided in the respective repositories\footnote{\url{https://github.com/chrischute/real-nvp}, \url{https://github.com/AntixK/PyTorch-VAE}}. 
Our results are consistent with those of \cite{song2021denoising} based on FID.
Specifically, 
\cite{song2021denoising} shows that DDIM models with a larger number of denoising steps $S$ achieve better FID scores, our method also indicates that larger $S$ results in lower KL divergence. Moreover, while the FID scores of $\hat{\mathbb{P}}_{50}$ and $\hat{\mathbb{P}}_{100}$ are similar, our results indicate that DDIM model with $S = 100$ is significantly better.

For comparison, we also evaluate the relative Kernel Inception Distance \citep{Binkowski2018DemystifyingMMDGANs}, which evaluates the maximum mean discrepancy (MMD) with a third order polynomial kernel, computed on InceptionV3 \citep{szegedy2015going} features of the images. We follow \cite{Bounliphone2016RelativeSimilarity} to compute the relative KID scores and construct the confidence intervals. In the comparison between $\rm{DDIM}_{50}$ and $\rm{DDIM}_{100}$, the confidence interval for KID covers 0, which fails to distinguish these two models. These results are consistent with our observations in Section~\ref{sec:simulation.simulation}, that our method has better power compared to kernel-based method. 


\begin{table*}
    \centering
        \caption{\footnotesize Comparison of VAE model, NF model and DDIM model with different number of denoising steps $S$. For DDIM models, a larger number of denoising steps $S$ leads to a model with better FID scores and KL divergence. Our confidence interval doesn't cover $0$, indicating $\hat{\mathbb{P}}_{100}$ is significantly better than both $S = 20$ and $S = 50$. }
    \vspace{-0.25cm}
    \footnotesize 
    \begin{tabular}{c|c|c|c}
    \toprule
       Model  & FID & $\widehat{\text{CI}}$ of $\delta(\hat{\mathbb{P}}_{M}, \hat{\mathbb{P}}_{\rm{DDIM}_{100}}) $ & $\widehat{\text{CI}}$ of relative KID\\
       \midrule
       VAE  & 175.68 & (-8556.13, -8422.06) & (-1.6107e-01, -1.5412e-01)\\
       NF & 83.26 & (-147888.77, -118430.73) & (-7.8398e-02, -7.2446e-02)\\
       $\rm{DDIM}_{20}$  & 6.84 & (-39.91, -38.70) & (-2.5521e-03, -1.4817e-03)	\\
       $\rm{DDIM}_{50}$  & 4.67 & (-17.40, -16.63) & (-2.3001e-04, 1.5825e-04)\\
       $\rm{DDIM}_{100}$ & 4.16 & -\\
       \bottomrule
    \end{tabular}
    \label{tab:diffusion}
\end{table*}

\subsubsection{Generative models for text}\label{sec:simulation.LLM}

We compare variants of the pre-trained GPT-2 model on the Wikitext-2 data. Particularly, we compare GPT-2 model with different sizes and quantization (models with $12$, $24$, $36$, $48$ layers are denoted as GPT2-small, GPT2-medium, GPT2-large and GPT2 respectively, and models with FP16 quantization are denoted by GPT2 (FP16)). 
We use the test set of the Wikitext-2 data, treating each non-title, non-empty line as a data point, estimate the relative KL divergence by (\ref{eq:estimator.1}), and construct the confidence intervals via (\ref{eq:CI}). The results are summarized in Table \ref{tab:gpt2}.

Our results yield a model ranking that is consistent with the ordering based on the perplexity.
But in contrast to perplexity, our framework further provides uncertainty quantification.

Notably, the performances of GPT2 (FP16) and GPT2 are statistically indistinguishable under our metric, indicating that the observed gap is not significant.
Our result is consistent with the common practice of using quantization to improve efficiency without sacrificing performance.

\begin{table}
    \centering
    \caption{\footnotesize Comparison of GPT-2 model variants on the WikiText-2 dataset. Perplexity values are taken from the public leaderboard. Our results show that larger models achieve better performance in terms of KL divergence, consistent with the standard perplexity metric. The confidence interval for $\delta(\hat{\PP}{\text{GPT (FP16)}}, \hat{\PP}{\text{GPT2}})$ includes zero, indicating no significant difference between GPT-2 and its quantized version. This supports the practical use of the quantized model.
}
\vspace{-0.25cm}
\footnotesize
    \begin{tabular}{c|c|c}
    \toprule
    Model & Perplexity & $\widehat{\mathrm{CI}}$ of $\delta(\hat{\PP}_M, \hat{\PP}_{\text{GPT2}})$ \\
    \midrule
    GPT2-Small   & 29.41 & (67.763, 71.609) \\
    GPT2-Medium  & 22.76 & (28.874, 30.652) \\
    GPT2-Large   & 19.93 & (12.413, 13.351) \\
    GPT2 (FP16)  & --    & (-0.002, 0.008) \\
    GPT2         & 18.34 & -- \\
    \bottomrule
    \end{tabular}
    \label{tab:gpt2}
\end{table}

\subsection{Conditional comparison}
\label{sec:conditional_comparison}

We compare three LLMs, OPT-1.3B \citep{zhang2022opt}, Mistral-7B \citep{jiang2023mistral}, and Llama3-8B \citep{dubey2024llama} on the validation data of the TriviaQA data set. We treat each language model as a conditional generative model that models the conditional distribution $p(\text{Answer} | \text{Question})$. The detailed settings are given in Section~\ref{appe:sec:simulations} of the Appendix. The results are summarized in Table \ref{tab:triviaqa}. Our results are consistent with commonly used metrics such as the F1 score.

We further investigate how sample size impacts the confidence intervals of our method by constructing confidence intervals using a subset of the validation data. Figure \ref{fig:triviaqa_mis_meta} shows the comparison between Mistral-7B and Llama3-8B. Even with a relatively small sample size (e.g., $n = 100$), our method (both CLT and EEs-based) can identify the better model with statistical significance.
Since the comparison starts with a very small test sample, our EEs procedure is more appropriate in the starting stage (e.g., for $n=50$, the CLT-based interval includes zero, whereas the EEs-based interval does not).

\begin{table}
    \centering
    \caption{\footnotesize Comparison of language models on the TriviaQA dataset.}
    \vspace{-0.25cm}
    \label{tab:triviaqa}
    \footnotesize
    \begin{tabular}{c|c|c}
    \toprule
    Model &	F1 score &	$\widehat{\text{CI}}$ of $\delta(\hat{\mathbb{P}}_{M}, \hat{\mathbb{P}}_{\rm{Llama3-8B}}) $ \\
    \midrule
    OPT-1.3B &	7.47 &	(-7.19, -7.01) \\
    Mistral-7B & 12.24	& (-0.22, -0.14) \\
    Llama3-8B &	36.34 &	-- \\
    \bottomrule
    \end{tabular}
\end{table}







\begin{figure}[h]
\centering
\includegraphics[clip, trim = 0cm 0cm 0cm 1.3cm, width=0.5\textwidth]{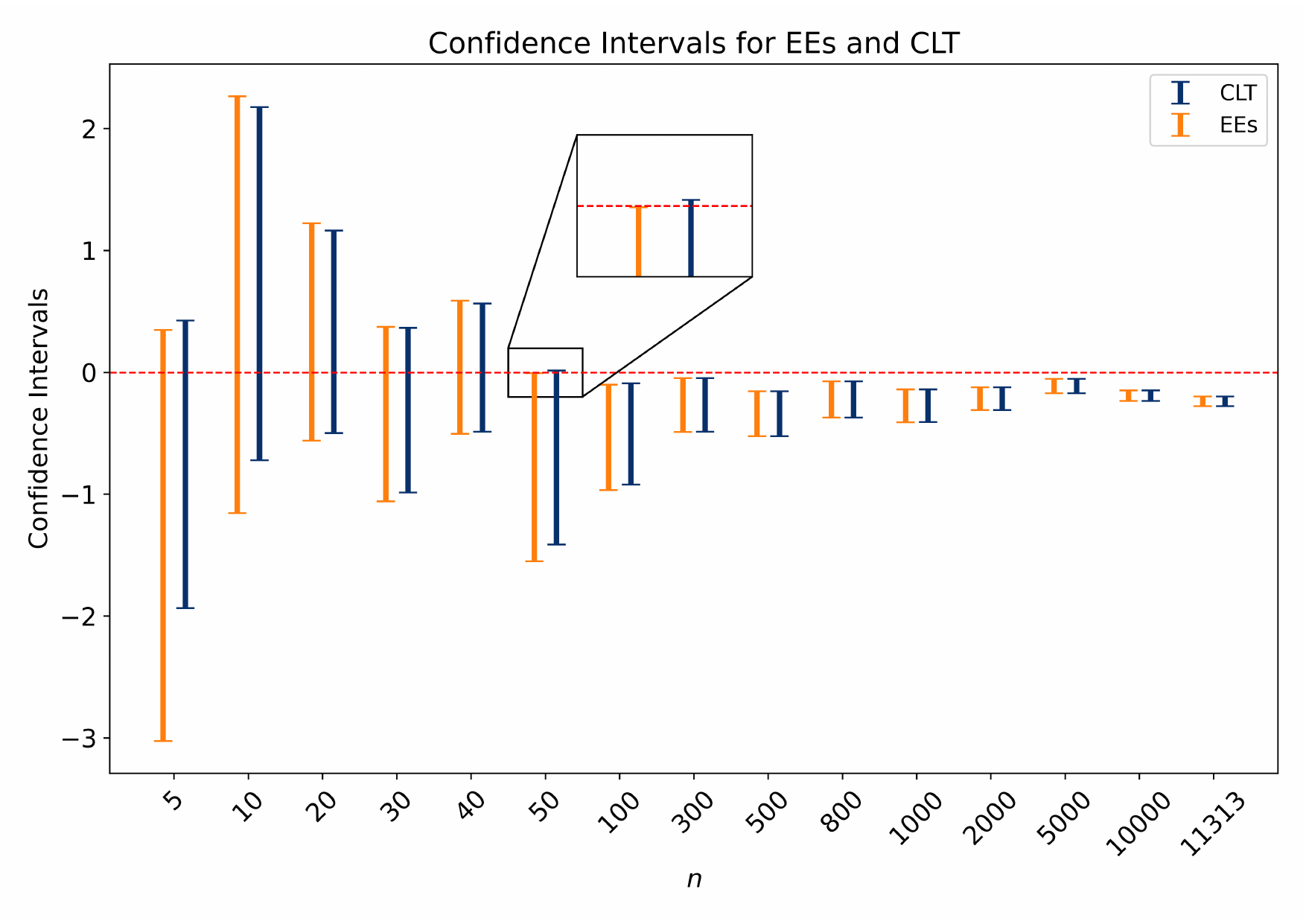}
\vspace{-0.5cm}
\caption{\footnotesize Confidence intervals of Llama3-8B vs. Mistral-7B for various sample size $n$}
\label{fig:triviaqa_mis_meta}
\end{figure}







\section{Discussions}\label{sec:discussion}

In this paper, we proposed a model-free and nuisance-free approach for quantitatively comparing generative models with statistical confidence. 
First, we propose focusing on the relative performance gap (relative score) between two generative models, rather than evaluating their absolute performances individually.
Second, we developed an unbiased estimator for the relative score that achieves parametric convergence rates and is asymptotically normal, enabling rigorous inference. 
Third, on simulated datasets with known ground truth, our method effectively controls type I error while achieving comparable or superior power, whereas existing metrics often exhibit near-zero coverage; when applied to generative models on real image or text datasets, our approach yields statistically confident conclusions consistent with existing metrics but with uncertainty quantification.

The statistical significance under the proposed relative-KL metric should not be over-interpreted. The comparison only concerns whether the distribution of one generative model is closer to the true data distribution than the distribution of another generative model in terms of the KL-divergence. It doesn't directly imply other properties such as safety. When safety and harmful behaviors are the main concerns, specific metrics should be designed, and our method can serve as a complementary metric.

\section*{Acknowledgement}

Portions of this work was performed on High Performance Computing resources provided by the Advanced Research Computing Services team at NJIT.
Han Su was supported by the Doctoral Student Program of the Young S\&T Talents Cultivation Project, CAST.

\newpage


\bibliographystyle{plain}
\bibliography{ref.bib}

\newpage
\appendix
\section*{Appendix}

\noindent\textbf{Overview}. 
The appendix is organized as follows. 
\Cref{appe:sec:literature.image} reviews related literature on distance measures between distributions and quantitative evaluation metrics for image generative models.
\Cref{appe:sec:proof} provides theoretical proofs of the main results. 
\Cref{appe:sec:computation} outlines computational and implementation details. 
\Cref{appe:sec:simulations} presents additional empirical results and further descriptions of the simulation setups introduced in the main text.

\section{Literature}\label{appe:sec:literature.image}

\subsection{Dissimilarity metrics between distributions}\label{sec:literature.distribution.distance}

Quantitative evaluation of generative models typically involves computing some distance between probability distributions. In this section, we review commonly used dissimilarity metrics between distributions and evaluation metrics of generative models.

\subsubsection{f-Divergence}

For a convex function \( f : [0, +\infty) \to (-\infty, +\infty] \) such that \( f(x) \) is finite for all \( x > 0 \), \( f(1) = 0 \), and \( f(0) = \lim_{t \to 0^+} f(t) \), the f-divergence of \( P \) from \( Q \) is given by
\begin{align*}
    D_f(\PP \| \hat{\mathbb{P}}_1) \equiv \int_\Omega f\left( \frac{p(y)}{\hat{p}_1(y)} \right) \hat{p}_1(y) dy
\end{align*}
KL-divergence is a special case of f-divergence with \( f(x) = x \log(x) \).

Estimating \(f\)-divergence typically requires the estimation of two densities, \(p\) and \(\hat{p}_1\), where the estimation of \(p\) exposes us to the curse of dimensionality. 
For the associated relative score, the convenient cancellation that simplifies computation is specific to the KL divergence (Proposition 3.1). Consequently, for divergences other than KL, estimating the density \( p \) becomes unavoidable, and the estimation of both absolute and relative \( f \)-divergences no longer achieves the parametric rate.

\subsubsection{Wasserstein-p Distance}

Wasserstein-\( p \) distance \citep{villani2009optimal} is defined as
\begin{align*}
W_p(\PP, \hat{\mathbb{P}}_1) = \inf_{\gamma \in \Gamma(\PP, \hat{\mathbb{P}}_1)} \EE_{(x,y) \sim \gamma}^{1/p} \left[\|x - y\|_p^p\right],    
\end{align*}
where \(\Gamma(\PP, \hat{\mathbb{P}}_1)\) is the set of couplings between \(\PP\) and \(\hat{\mathbb{P}}_1\). 

We next discuss the challenges of performing inference for evaluation methods based on the Wasserstein-\( p \) distance.
First, \cite{del2019central} shows that\footnote{CLT results of general Wasserstein-\( p \) distance are largely unknown.} 
\[
    \sqrt{n}\left( W_2^2(\PP_n, \hat{\PP}_{1,n}) - \mathbb{E}[W_2^2(\PP_n, \hat{\PP}_{1,n})] \right) \overset{d}{\longrightarrow} 
\calN\left(0, \sigma^2(\PP, \hat{\PP}_1)\right),
\] 
where the asymptotic variance $\sigma^2(\PP, \hat{\PP}_1)$ can be estimated consistently using a plug-in estimator.
The issue is that the center $\mathbb{E}[W_2^2(\PP_n, \hat{\PP}_{1,n})] $ is different from the desired $W_2^2(\PP, \hat{\PP}_{1})$, and the gap between $\mathbb{E}[W_2^2(\PP_n, \hat{\PP}_{1,n})]$ and $W_2^2(\PP, \hat{\PP}_{1})$ scales as \( n^{-1/d} \) \citep{villani2009optimal}.
Second, for the relative Wasserstein-2 distance \( W_2^2(\mathbb{P}, \hat{\PP}_1) \), the joint asymptotic distribution of \( (W_2^2(\mathbb{P}_n, \hat{\PP}_{1,n}), W_2^2(\mathbb{P}_n, \hat{\PP}_{2,n})) \) is required.  
It remains unclear whether \( W_2^2(\mathbb{P}_n, \hat{\PP}_{1,n}) \) and \( W_2^2(\mathbb{P}_n, \hat{\PP}_{2,n}) \) are asymptotically jointly Gaussian, as well as what the exact form of their covariance matrix is\footnote{The off-diagonal values of the covariance matrix is non-zero because both \( W_2^2(\mathbb{P}_n, \hat{\PP}_{1,n}) \) and \( W_2^2(\mathbb{P}_n, \hat{\PP}_{2,n}) \) depend on \(\mathbb{P}_n\).}. 
Third, subsampling methods \citep{dumbgen1993nondifferentiable} can be employed for conducting inference for Wasserstein-p distances; however, subsampling is computationally expensive, especially given that the Wasserstein-p distance is already difficult to compute.
For a comprehensive review of these issues, see \cite{panaretos2019statistical}.

\subsection{Existing evaluation metrics of generative models for image}\label{app:sec:literature.GenAI.evaluation}

We describe the inception score (IS) \citep{salimans2016improved} and the Frechet Inception Distance (FID) \citep{heusel2017gans}, two most commonly used quantitative scores for generative models. We note that these metrics were originally designed for training GANs rather than for model evaluation, with more emphasis placed on computational efficiency than on statistical rigor.

IS evaluates the quality of a generative model by applying a separate, pretrained image classification model to a batch of images generated by the model. IS is maximized when the classifier confidently predicts a single label for each image, or when the predictions are evenly distributed across all possible labels.
The quality of IS depends heavily on the quality of the classifier (if the classifier consistently outputs a single label for all images, the IS becomes uninformative).
Another disadvantage is that IS does not compare generated images to test images.

FID compares the distribution between the distribution of test images and that of generated images. 
Mathematically, FID is defined as the Wasserstein-2 distance between the two distributions.
However, the Wasserstein-2 distance is computationally expensive for random vectors, except for multivariate Gaussians.
To approximate the FID, the default approach involves two steps: (1) mapping the real and generated images to $\RR^d$ separately by passing them through the final layer of an image classifier to extract essential features; (2) fitting multivariate Gaussian distributions to the transformed data in $\RR^d$ and computing the Wasserstein-2 distance between these multivariate Gaussians.
The approximation accuracy depends on how well the transformation to $ \mathbb{R}^d $ captures the data characteristics and the quality of the multivariate Gaussians fit to the transformed data (the covariance matrix is typically not diagonal and the estimation involves $O(d^2)$ elements).
\cite{chong2020effectively} shows that FID could be significantly biased in finite sample.



\section{Proof}\label{appe:sec:proof}

\subsection{Proof of results in \texorpdfstring{\Cref{sec:relative.error}}{Section~\ref{sec:relative.error}}}

\begin{proof}[Proof of \Cref{prop:KL.unique}]
For simplicity, we use $\PP_1$ instead of $\hat{\PP}_1$, $p_1$ instead of $\hat{p}_1$, and similarly for $\PP_2$ and $p_2$.
    For any densities $p(x)$, $p_1(x)$, $p_2(x)$, we define
    \begin{align*}
        h(p, p_1, p_2) 
        := \int f\left(\frac{p_1(x)}{p(x)}\right) - f\left(\frac{p_2(x)}{p(x)}\right) p(x) dx.
    \end{align*}
    For any $\delta(x)$ such that $\int \delta(x) dx = 0$, and $t$ such that $p(x) + t \delta(x) \ge 0$, by the calculus of variations and (\ref{eq:KL.unique}),
    \begin{align}\label{proof:eq:f}
        \begin{split}            
            \int g(p_1(x), p_2(x)) \delta(x) dx 
            &= \frac{\partial}{\partial t}h(p + t \delta, p_1, p_2) |_{t = 0}\\
            &= \int \left(- f'\left(\frac{p_1(x)}{p(x)}\right) \frac{p_1(x)}{p^2(x)} + f'\left(\frac{p_2(x)}{p(x)}\right) \frac{p_2(x)}{p^2(x)} \right) \delta(x) p(x) dx \\
            &\quad~ + \int \left(f\left(\frac{p_1(x)}{p(x)}\right) - f\left(\frac{p_2(x)}{p(x)}\right) \right) \delta(x) dx \\
            &= \int - f'\left(\frac{p_1(x)}{p(x)}\right) \frac{p_1(x)}{p(x)} \delta(x) + f'\left(\frac{p_2(x)}{p(x)}\right) \frac{p_2(x)}{p(x)} \delta(x) \\
            &\quad~ + f\left(\frac{p_1(x)}{p(x)}\right) \delta(x) - f\left(\frac{p_2(x)}{p(x)}\right) \delta(x) dx.
        \end{split}
    \end{align}
    We let $f_1(x) = f(e^x)$, then $f_1(\log(x)) = f(x)$ and $f'_1(\log(x)) \cdot(1/x) = f'(x)$, which implies $f'_1(\log(x)) = x f'(x)$. We replace $f(x)$ by $f_1(x)$ in Eq.~(\ref{proof:eq:f}), 
    \begin{align}\label{proof:eq:f1}
        \begin{split}
            \int g(p_1(x), p_2(x)) \delta(x) dx 
            &=  \int \left(- f'_1\left(\log\left(\frac{p_1(x)}{p(x)}\right)\right)  + f_1\left(\log\left(\frac{p_1(x)}{p(x)}\right)\right) \right. \\
            &\quad~ 
            \left.+f'_1\left(\log\left(\frac{p_2(x)}{p(x)}\right)\right)  - f_1\left(\log\left(\frac{p_2(x)}{p(x)}\right)\right) \right) \delta(x) dx.
        \end{split}
    \end{align}
    Since \Cref{proof:eq:f1} is true for arbitrary $\delta(x)$ such that $\int \delta(x) = 0$, then we let $f_2(x) = -f_1'(x) + f_1(x)$ and have
    \begin{align}\label{proof:eq:f2}
        \begin{split}
            g(p_1(x), p_2(x)) =
            f_2\left(\log(p_1(x)) - \log(p(x))\right) - f_2\left(\log(p_2(x)) - \log(p(x))\right) + C, \quad \forall x.
        \end{split}
    \end{align}
    for some constant $C \in \RR$. We assume $C = 0$, otherwise, we replace $g(p_1(x), p_2(x))$ by $g(p_1(x), p_2(x)) - C$.
    Note that $f_2(0) = -f_1'(0) + f_1(0) = -f_1'(0) + f(1) = -f_1'(0)$.
    We let $f_3(x) = f_2(x) + f_1'(0)$, then $f_3'(0) = 0$ and $g(p_1(x), p_2(x)) =
            f_3\left(\log(p_1(x)) - \log(p(x))\right) - f_3\left(\log(p_2(x)) - \log(p(x))\right)$.
    Take $p(x) = p_1(x)$ in Eq.~(\ref{proof:eq:f2}),
    \begin{align}\label{proof:eq:f3}
        \begin{split}
            g(p_1(x), p_2(x)) =
            f_3(0) - f_3\left(\log(p_2(x)) - \log(p_1(x))\right) = f_3\left(\log(p_2(x)) - \log(p_1(x))\right).
        \end{split}
    \end{align}
    We let $a = \log(p_2(x)) - \log(p_1(x))$, $b = \log(p(x)) - \log(p_2(x))$, then by (\ref{proof:eq:f3}),
    \begin{align}\label{proof:eq:cauchy}
        \begin{split}
            f_3(a + b) - f_3(b) = f_3(a).
        \end{split}
    \end{align}
    Since $f \in C^1$, then $f_3$ is continuous. 
    By the result of Cauchy's functional equation, there exists $c$, $d \in \RR$ such that
    \begin{align}\label{proof:eq:cauchy.solution}
        \begin{split}
            f_3(x) = c x + d, \quad \forall x.
        \end{split}
    \end{align}
    By the definition of $f_3(x)$ and (\ref{proof:eq:cauchy.solution}), we have $f_2(x) = f_3(x) - f_1'(0) = cx + d'$ for some $d' \in \RR$.
    By the definition of $f_2(x)$, 
    \begin{align*}
        -f_1'(x) + f_1(x) = cx + d'.
    \end{align*}
    This is a standard ODE problem, and we multiply both sides by $e^{-x}$ to get
    \begin{align*}
        &\quad~-\left(e^{-x}f_1(x)\right)' = cx e^{-x} + d'e^{-x} \\
        &\implies
        e^{-x} f_1(x) = - cx e^{-x} - c e^{-x} - d'e^{-x} + d''  \\
        &\implies 
        f_1(x) = \alpha e^x + \beta x + \theta,
    \end{align*}
    where $d''$, $\alpha$, $\beta$, $\theta \in \RR$.
    By the definition of $f_1(x)$, $f(x) = \alpha x + \beta \log(x) + \theta$.
    Note that $f(1) = 0$, which implies $\alpha + \theta = 0$.
    Note that $\int (\alpha (d\PP_1/d\PP) - \alpha) d\PP = 0$, therefore, the divergence with $f(x) = \alpha x + \beta \log(x) + \theta$ is the same as that of $\beta \log(x)$.
    Since $f(x)$ is convex, we have $\beta \le 0$, and we finish the proof.
\end{proof}

\subsection{Proof of results in \texorpdfstring{\Cref{sec:method}}{Section~\ref{sec:method}}}

\begin{proof}[Proof of \Cref{prop:unbiased}]
    \Cref{prop:unbiased} follows from the linearity of expectation and the fact that the test data \(Y_i \sim \mathbb{P}\).
\end{proof}

\begin{proof}[Proof of \Cref{theo:CLT}]
    Recall that $Y_i$ are i.i.d. sampled from $\PP$.
    \Cref{theo:CLT} is an application of the Lindeberg-L\'{e}vy central limit theorem to i.i.d. random variables $\log(\hat{p}_1(Y_i)) - \log(\hat{p}_2(Y_i))$.
\end{proof}

\begin{proof}[Proof of \Cref{coro:CLT}]
    When $0 < V < \infty$, by the law of large numbers of i.i.d. random variables,
    the empirical variance $\hat{V}$ of $\log(\hat{p}_1(Y_i)) - \log(\hat{p}_2(Y_i))$ converges to $V$ in probability. 
    We further combine Slutsky’s theorem and \Cref{coro:CLT} to arrive at \Cref{theo:CLT}.
\end{proof}

\begin{proof}[Proof of \Cref{coro:coverage}]
    \Cref{coro:coverage} comes from the definition of convergence in distribution and \Cref{coro:CLT}.
\end{proof}

\subsection{Proof of results in \texorpdfstring{\Cref{sec:edgeworth}}{Section~\ref{sec:edgeworth}} and additional discussions on EEs}

The following assumptions ensure the existence of EEs of $Z$-statistics
\begin{assumption}
\label{assu:high-order-moment}
    Suppose the relative scores $X_i$ have high-order moments, $\mathbb{E}|X|^4 < \infty$.
\end{assumption}

\begin{assumption}
\label{assu:absolute-continous}
    Suppose that the distribution $F$ of relative scores admits absolutely continuous components, that is, $\limsup_{|t|\to\infty} |\psi_X(t)| < 1$.
\end{assumption}

\begin{proof}[Proof of Theorem 4.1]
Under Assumptions \ref{assu:high-order-moment} and \ref{assu:absolute-continous}, the EEs of $Z_n$ can be conducted in three steps: (1) Calculation of cumulants; (2) High-order expansion of c.f. $\psi_{Z_n}(t)$; (3) The Fourier inversion of c.f. $\psi_{Z_n}(t)$.

\vskip2mm
\noindent
{\bf Step 1: The definition and calculation of cumulants.}
The definition of the cumulants of a random variable $X$ is related to the polynomial expansion of its logarithmic characteristic function. Specifically, we first consider the logarithmic transformation of $\psi_X(t)$, i.e.,
\begin{equation*}
\log \psi_X(t) = \log \left\{ 1+\sum_{l \ge 1} \frac{1}{l!} \mu_l (it)^l\right\} := \sum_{l \ge 1} \frac{1}{l!} c_l (it)^l,
\end{equation*}
where $\{\mu_l\}_{l\ge1}$ are the raw moments of $X$, and $\{c_l\}_{l\ge1}$ are defined as the \textit{cumulants} of $X$, obtained by expanding the $\log(1+a)$ terms in the above equation via the Taylor series and matching the corresponding coefficients of polynomials. By simple calculations, the  first four cumulants of $X$ are defined as:
\begin{align*}
c_1 &= \mu,  \qquad c_2 = \mu_2-\mu^2 = \sigma^2,  \\
c_3 &= \mu_3-3\mu_2\mu+2\mu^3 = E(X-\mu)^3:= m_3, \\
c_4 &= \mu_4 - 4\mu_3\mu-3\mu_2^2+12\mu^2\mu_2-6\mu^4 = E(X-\mu)^4-3\sigma^4:=m_4-3\sigma^4
\end{align*}
where $m_3$ and $m_4$ are the third and fourth central moments of $X$, respectively, indicating the close relationship between cumulants and moments. Thus, $\psi_X(t)$ can be expressed as an exponential function of the cumulants $\{c_l\}_{l\ge 1}$, shown as follows:
\begin{align*}
\psi_{X}(t)
= \exp\{\log \psi_X(t)\}
= \exp\left\{ itc_1+ \frac{(it)^2}{2}c_2+ \frac{(it)^3}{3!}c_3+\frac{(it)^4}{4!}c_4+o\left(t^4\right) \right\}.
\end{align*}

\vskip2mm
\noindent
{\bf Step 2: Expansion of the Characteristic Function.}
Similarly, the logarithmic characteristic function $\psi_Y(t)$ of the standardized variable $Y = (X-\mu)/\sigma$ can also be expressed in terms of the cumulants $\{\kappa_l\}_{l\ge1}$ of $Y$:
\begin{equation}
\psi_{Y}(t)
= \exp\left\{ it\kappa_1+ \frac{(it)^2}{2}\kappa_2+ \frac{(it)^3}{3!}\kappa_3+\frac{(it)^4}{4!}\kappa_4+o\left(t^4\right) \right\},  \label{eq:standard_expansion}
\end{equation}
where $\kappa_1 = 0$,  $\kappa_2 = 1$,  $\kappa_3 = m_3/\sigma^3$ is the skewness of $X$, and $\kappa_4 = m_4/\sigma^4-3$ is the kurtosis of $X$.

\vskip2mm
\noindent
{\bf Step 3: Inversion of the Distribution Function.}
According to (\ref{eq:standard_expansion}), the characteristic function $\psi_{Z_n}(t)$ of $Z_n$ in its expansion of CLT version (\ref{eq:cfZn}) can be rewritten as a function of the cumulants as follows:
\begin{align}
\psi_{Z_n}(t)
&= \psi_{Y}^n\left(\frac{t}{\sqrt{n}}\right) \notag\\
&= \exp \left\{n \left(-\frac{1}{2}\frac{t^2}{n}+\frac{1}{3!} \kappa_3\left(\frac{it} {\sqrt{n}}\right)^3+\frac{1}{4!} \kappa_4\left(\frac{it}{\sqrt{n}}\right)^4+o\left(\frac{1}{n^2}\right) \right)\right\} \notag \\
&=  \exp \left\{-\frac{t^2}{2}+\frac{1}{3!} \kappa_3\frac{(it)^3}{\sqrt{n}}+\frac{1}{4!} \kappa_4\frac{(it)^4}{n}+o\left(\frac{1}{n}\right) \right\} \notag   \\
&= e^{-t^2/2} \exp\left\{\frac{1}{3!} \kappa_3\frac{(it)^3}{\sqrt{n}}+\frac{1}{4!} \kappa_4\frac{(it)^4}{n}+o\left(\frac{1}{n}\right)\right\} \notag \\\label{eq:char_expansion}
&= e^{-t^2/2} \left\{ 1+\frac{1}{3!} \kappa_3\frac{(it)^3}{\sqrt{n}}+\frac{1}{4!} \kappa_4\frac{(it)^4}{n}+\frac{1}{2\times(3!)^2} \kappa_3^2\frac{(it)^6}{n}+o\left(\frac{1}{n}\right) \right\}.
\end{align}

Furthermore, by inverting the distribution function via Fourier inversion, we can obtain the Edgeworth expansion results.
Specifically, let $\Phi^{(j)}(x)$ denote the $j$th derivative of $\Phi(x)$.
Noting that the characteristic function is the Fourier transformation of the distribution function, along with the formula of integration by parts and some straightforward calculations, we obtain the following series of results:
\begin{align*}
e^{-t^2/2} &=  \int_{-\infty}^{\infty} e^{itx} \text{d}\Phi(x) \notag \\
& = \int_{-\infty}^{\infty} \frac{1}{it} \Phi^{(1)}(x) \text{d}e^{itx} \notag \\
& = \frac{1}{it} \left\{\Phi^{(1)}(x) e^{itx} \bigg|_{-\infty}^{\infty} -\int_{-\infty}^{\infty} e^{itx} \text{d}\Phi^{(1)}(x)\right\} \notag \\
& = (-it)^{-1} \int_{-\infty}^{\infty} e^{itx} \text{d}\Phi^{(1)}(x) \\
& = \cdots \notag \\
& = (-it)^{-2} \int_{-\infty}^{\infty} e^{itx} \text{d}\Phi^{(2)}(x) \\
& = \cdots \notag \\
& = (-it)^{-j} \int _{-\infty}^{\infty}e^{itx} \text{d}\Phi^{(j)}(x), \notag
\end{align*}
which indicates that $(-it)^j e^{-t^2/2}$ is the Fourier transform of $\Phi^{(j)}(x)$. Denoting the differential operator as $D$, we have
\begin{equation}\label{eq:Fourier}
\int_{-\infty}^{\infty} e^{itx} {\rm d}{(-D)^j\Phi(x)} = (it)^j e^{-t^2/2}.
\end{equation}

Applying the differential operator repeatedly to $\Phi(x)$, we can obtain that
\begin{equation*}
\begin{split}
D\Phi(x) = \phi(x) &: = H_0(x)\phi(x),  \\
D^2\Phi(x) = -x\phi(x) &: = -H_1(x)\phi(x), \\
D^3\Phi(x) = (x^2-1)\phi(x) &: = H_2(x)\phi(x),  \\
D^4\Phi(x) = -(x^3-3x)\phi(x) &: =- H_3(x)\phi(x),  \\
&\vdots
\end{split}
\end{equation*}
In sum up, we have
\begin{equation}\label{eq:derivate}
(-D)^j\Phi(x) = -H_{j-1}(x)\phi(x), \quad j =0, 1, 2, \cdots
\end{equation}
where $\left\{H_j(x)\right\}_{j\geq 0}$ are the Hermite polynomials, satisfying the following properties:
(1)~$\left\{H_j(x)\right\}_{j\geq 0}$ are orthogonal with respect to $\phi(x)$, i.e., $\int_{-\infty}^{\infty} H_j(x) H_k(x) \phi(x) \text{d}x = 0,  j\neq k$;
(2)~$H_{j}(x)$ is a polynomial of degree $j$, and its parity (even/odd) matches the parity of the index $j$.

Naturally, according to (\ref{eq:char_expansion})--(\ref{eq:derivate}), the characteristic function $\psi_{Z_n}(t)$ of $Z_n$ can be viewed as the Fourier transform of the following distribution:
\begin{align}
&\Phi(x)+\frac{\kappa_3}{6\sqrt{n}}(-D)^3\Phi(x)+\frac{\kappa_4}{24n}(-D)^4\Phi(x)+\frac{\kappa_3^2}{72n}(-D)^6\Phi(x)+o\left(\frac{1}{n}\right) \notag \\
=&\Phi(x)-n^{-1/2}\frac{\kappa_3}{6}H_2(x)\phi(x)-n^{-1}\left\{\frac{\kappa_4}{24}H_3(x)\phi(x)+\frac{\kappa_3^2}{72}H_5(x)\phi(x) \right\}+o\left(n^{-1}\right). \label{eq:fourier_trans_match}
\end{align}
This indicates that equation (\ref{eq:fourier_trans_match}) represents the form of the distribution function of $Z_n$. Combining with the discussion in Section \ref{sec:edgeworth}, equation (\ref{eq:CLT2}) can be further written as
\begin{equation}\label{eq:EES}
F_n(x)=\Phi(x)-n^{-1/2}\frac{\kappa_3}{6}H_2(x)\phi(x)-n^{-1}\left\{\frac{\kappa_4}{24}H_3(x)\phi(x)+\frac{\kappa_3^2}{72}H_5(x)\phi(x) \right\}+o\left(n^{-1}\right).
\end{equation}
(\ref{eq:EES}) is called the Edgeworth expansion for the $Z$-statistic. Furthermore, the result in (\ref{eq:EES}) holds uniformly for $x\in\mathbb{R}$, that is,
\begin{equation*}
\sup_{x\in\mathbb R} \left|F_n(x)-\left\{\Phi(x)-n^{-1/2}\frac{\kappa_3}{6}H_2(x)\phi(x)-n^{-1}\left\{\frac{\kappa_4}{24}H_3(x)\phi(x)+\frac{\kappa_3^2}{72}H_5(x)\phi(x) \right\}\right\}\right|=o\left(n^{-1}\right),
\end{equation*}
which can be similarly derived following Theorem 2.1 in \cite{hall2013bootstrap}, and we omit the details here. Thus, the proof of Theorem \ref{theo:edgeworth} is complete.
\end{proof}

As the other side of coins, the variance of relative scores $\log(\hat{p}_1(Y_i)) - \log(\hat{p}_2(Y_i))$ is usually unknown in practice, one can consider the $T$-statistics $T_n={(\hat{V}/ {n_{\text{test}}}})^{-1/2}(\hat{\delta}(\hat{\mathbb{P}}_1, \hat{\mathbb{P}}_2) - {\delta}(\hat{\mathbb{P}}_1, \hat{\mathbb{P}}_2))$, similar to Corollary \ref{coro:CLT}. 
Formally, the EEs of $T_n$ and its counterpart bias-corrected CI can be derived, as shown in the following theorem.

\begin{theorem}
\label{theo:edgeworth-T}
The distribution function of $T_n$ satisfies
\begin{equation}
F^*_n(x) = \underbrace{\Phi(x)
+n^{-1/2}\frac{\kappa_3}{6}(2x^2+1)\phi(x)
+n^{-1}\left\{\frac{1}{12}\kappa_4x(x^2-3)-\frac{1}{18}\kappa_3^2x(x^4+2x^2-3)-\frac{1}{4}x(x^2+3) \right\}\phi(x)}_{{G^*(x)}} + o(n^{-1}).
\label{eq:edgeworth-T}
\end{equation}
Furthermore, we can also derive the consistent convergence result of $T_n$
\begin{align*}
&\sup_{x\in\mathbb R} \left|F^*_n(x)-\left\{\Phi(x)
+n^{-1/2}\frac{\kappa_3}{6}(2x^2+1)\phi(x)
+n^{-1}\left\{\frac{1}{12}\kappa_4x(x^2-3)-\frac{1}{18}\kappa_3^2x(x^4+2x^2-3)-\frac{1}{4}x(x^2+3) \right\}\phi(x)\right\}\right|\\
=&o\left(n^{-1}\right).
\end{align*}
\end{theorem}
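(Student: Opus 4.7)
The plan is to reduce Theorem~\ref{theo:edgeworth-T} to a smooth-function-of-mean Edgeworth expansion, using Theorem~\ref{theo:edgeworth} as the scalar template. Define the bivariate summands $U_i = (X_i - \mu,\ (X_i - \mu)^2 - \sigma^2)$ and let $\bar U_n = n^{-1}\sum_{i=1}^n U_i$. Then $\hat V = \sigma^2 + (\bar U_n)_2 + O_p(n^{-1})$, where the $n/(n-1)$ factor contributes only at order $n^{-1}$, and one can write $T_n = H(\bar U_n)$ for a smooth $H$ with $H(0)=0$, $\nabla H(0) = (\sigma^{-1}, 0)$, namely
\begin{equation*}
T_n \;=\; Z_n \left(1 + \tfrac{(\bar U_n)_2}{\sigma^2}\right)^{-1/2}.
\end{equation*}
Assumption~\ref{assu:high-order-moment} gives the finite fourth moments needed for the joint vector $U_i$, and Assumption~\ref{assu:absolute-continous} on $X_i$ transfers to a multivariate Cram\'er condition on $U_i$ because $U_i$ is a measurable function of $X_i$ whose first coordinate retains the absolutely continuous component. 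This places us inside the Bhattacharya--Ghosh framework for Edgeworth expansions of smooth functions of sample means up to error $o(n^{-1})$.

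Second, I would perform a stochastic Taylor expansion of $T_n$ around $\bar U_n = 0$. Writing $W_n = (\hat V - V)/V$, one obtains
\begin{equation*}
T_n \;=\; Z_n \;-\; \tfrac{1}{2} Z_n W_n \;+\; \tfrac{3}{8} Z_n W_n^2 \;+\; O_p(n^{-3/2}),
\end{equation*}
so that the distribution of $T_n$ agrees with the distribution of a degree-three polynomial in the standardized sample means up to $o(n^{-1})$. Substituting this polynomial into a formal Fourier-inverted expansion of the characteristic function of $\bar U_n$ yields an asymptotic expansion whose leading term is $\Phi(x)$ and whose $n^{-1/2}$ and $n^{-1}$ corrections are polynomials in $x$ times $\phi(x)$, with coefficients built from joint cumulants of the components of $U_i$ up to fourth order. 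The relevant joint moments reduce to univariate quantities: $\mathrm{Cov}(X-\mu, (X-\mu)^2) = \sigma^3 \kappa_3$, $\mathrm{Var}((X-\mu)^2) = \sigma^4(\kappa_4 + 2)$, $\mathbb{E}[(X-\mu)^3] = \sigma^3 \kappa_3$, and $\mathbb{E}[(X-\mu)^4] = \sigma^4(\kappa_4 + 3)$. After grouping, these produce exactly the polynomial $\tfrac{\kappa_3}{6}(2x^2+1)$ in the $n^{-1/2}$ term and the sum $\tfrac{1}{12}\kappa_4 x(x^2-3) - \tfrac{1}{18}\kappa_3^2 x(x^4+2x^2-3) - \tfrac{1}{4} x(x^2+3)$ in the $n^{-1}$ term of \eqref{eq:edgeworth-T}.

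Third, the uniform bound $\sup_x |F^*_n(x) - G^*(x)| = o(n^{-1})$ follows from the standard Bhattacharya--Ghosh theorem applied to $H(\bar U_n)$: Assumption~\ref{assu:high-order-moment} gives the required moment control on $U_i$, and the multivariate Cram\'er condition inherited from Assumption~\ref{assu:absolute-continous} ensures the Fourier-analytic smoothing step is valid uniformly in $x$. The main obstacle I anticipate is the algebraic bookkeeping of the $n^{-1}$ polynomial: the coefficient must collect contributions from (i) the quadratic correction $-\tfrac{1}{2}Z_n W_n$ expanded via joint third cumulants, (ii) the cubic correction $\tfrac{3}{8}Z_n W_n^2$ evaluated at leading order, and (iii) the $O(n^{-1})$ bias from replacing $\hat V$ by $V$, including the $n/(n-1)$ factor. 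Balancing these three sources so that the $\kappa_3^2$ coefficient collapses to $-1/18$ and the purely variance-driven $x(x^2+3)$ term carries coefficient $-1/4$ is the delicate computation; the remainder bound itself is routine once the Cram\'er condition has been lifted to $U_i$.
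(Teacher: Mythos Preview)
Your proposal is correct and is essentially the approach the paper defers to: the paper's proof simply cites Section~2.6 of Hall's \emph{The Bootstrap and Edgeworth Expansion}, which develops exactly the smooth-function-of-means (Bhattacharya--Ghosh) framework you outline, with the studentized mean written as $H(\bar U_n)$ for $U_i=(X_i-\mu,(X_i-\mu)^2-\sigma^2)$ and the $n^{-1/2}$, $n^{-1}$ polynomials extracted from the joint cumulants.

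One technical point to tighten: your justification that the multivariate Cram\'er condition on $U_i$ holds ``because the first coordinate retains the absolutely continuous component'' is not sufficient as stated---the joint law of $U_i$ is supported on a parabola and is singular in $\mathbb{R}^2$, so absolute continuity of a marginal does not directly imply $\limsup_{\|t\|\to\infty}|\psi_{U}(t)|<1$. The conclusion is nonetheless true: when $X$ has a nontrivial absolutely continuous component, the curvature of the map $x\mapsto(x,x^2)$ forces decay of the joint characteristic function via a stationary-phase argument (this is the content of the lemma Hall cites to Bhattacharya). You should invoke that result rather than the marginal argument.
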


\begin{proof}[Proof of Theorem \ref{theo:edgeworth-T}]
Similar to the proof of Theorem \ref{theo:edgeworth}, we can obtain the high-order expansion term of $F^*_n(x)$ based on the calculation of cumulants, the expansion and Fourier inversion of c.f. of $T$-statistics. The detailed proofs can be found in Section 2.6 in \cite{hall2013bootstrap}, and we omit the details here. 
\end{proof}

Theorem \ref{theo:edgeworth-T} also determins a $(1-\alpha)$-level confidence interval of the relative score based on $T$-statistics
\begin{equation}\label{eq:CI-EEs-T}
    \widehat{\mathrm{CI}}^*_{\mathrm{EEs}}(\alpha):=\left[\hat{\delta}(\hat{\mathbb{P}}_1, \hat{\mathbb{P}}_2)-\beta_1\sqrt{\frac{\hat{V}}{n}}, \:\hat{\delta}(\hat{\mathbb{P}}_1, \hat{\mathbb{P}}_2)-\beta_2\sqrt{\frac{\hat{V}}{n}}\right],
\end{equation}
where $\beta_1,\beta_2$ satisfies
\begin{equation*}
    \int_{\beta_1}^{\beta_2} {\rm d}{G^*(x)}=1-\alpha, \qquad {\rm s.t.}~ {\rm d}{G^*(\beta_1)}={\rm d}{G^*(\beta_2)}.
\end{equation*}

Both (\ref{eq:CI-EEs}) and (\ref{eq:CI-EEs-T}) derived by the limiting distributions of Theorems \ref{theo:edgeworth} and \ref{theo:edgeworth-T} are valid CIs,
shown in the following statement.
\begin{corollary}\label{coro:coverage-T}
    Under the Assumptions \ref{assu:high-order-moment} and \ref{assu:absolute-continous}, for any $\alpha \in (0,1)$, 
    $$\PP\left({\delta}(\hat{\mathbb{P}}_1, \hat{\mathbb{P}}_2) \in \widehat{\mathrm{CI}}_{\mathrm{EEs}}(\alpha) \right)=\PP\left({\delta}(\hat{\mathbb{P}}_1, \hat{\mathbb{P}}_2) \in \widehat{\mathrm{CI}}_{\mathrm{EEs}}^*(\alpha) \right) \to 1 - \alpha.$$
\end{corollary}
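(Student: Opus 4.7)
The plan is to reduce the coverage probability to a tail probability of the pivot statistic and then invoke the uniform Edgeworth approximations from Theorems~\ref{theo:edgeworth} and \ref{theo:edgeworth-T}. Since the quantiles $\alpha_1,\alpha_2$ (resp.\ $\beta_1,\beta_2$) are defined via the limiting Edgeworth cdf $G$ (resp.\ $G^*$), the uniform convergence bounds will translate directly into coverage bounds, with no delta-method or continuity arguments needed.

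First I would handle the $Z$-statistic case. Recall $Z_n = \sqrt{n}(\hat\delta(\hat\PP_1,\hat\PP_2) - \delta(\hat\PP_1,\hat\PP_2))/\sqrt{V}$. Inverting the construction of $\widehat{\mathrm{CI}}_{\mathrm{EEs}}(\alpha)$ in \eqref{eq:CI-EEs}, I would observe
\begin{align*}
\delta(\hat\PP_1,\hat\PP_2) \in \widehat{\mathrm{CI}}_{\mathrm{EEs}}(\alpha)
&\iff \hat\delta - \alpha_1\sqrt{V/n} \le \delta \le \hat\delta - \alpha_2\sqrt{V/n} \\
&\iff \alpha_2 \le Z_n \le \alpha_1,
\end{align*}
so the coverage probability equals $F_n(\alpha_1) - F_n(\alpha_2)$, where $F_n$ is the cdf of $Z_n$. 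Next, I would apply the uniform bound $\sup_x |F_n(x) - G(x)| = o(n^{-1})$ from Theorem~\ref{theo:edgeworth} at the two points $\alpha_1$ and $\alpha_2$ to obtain
\[
F_n(\alpha_1) - F_n(\alpha_2) = G(\alpha_1) - G(\alpha_2) + o(n^{-1}).
\]
Finally, the defining property $\int_{\alpha_2}^{\alpha_1} \mathrm{d}G(x) = 1-\alpha$ of the quantiles yields $G(\alpha_1) - G(\alpha_2) = 1-\alpha$, so the coverage is $1-\alpha + o(n^{-1}) \to 1-\alpha$.

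The argument for $\widehat{\mathrm{CI}}^*_{\mathrm{EEs}}(\alpha)$ is structurally identical: use the Studentized pivot $T_n = \sqrt{n}(\hat\delta - \delta)/\sqrt{\hat V}$, reduce the coverage event to $\{\beta_2 \le T_n \le \beta_1\}$, and apply the uniform bound $\sup_x |F_n^*(x) - G^*(x)| = o(n^{-1})$ from Theorem~\ref{theo:edgeworth-T} together with $G^*(\beta_1) - G^*(\beta_2) = 1-\alpha$. A minor point is that $\hat V$ is random, but this randomness is already absorbed into the cdf $F_n^*$ whose uniform approximation is asserted by Theorem~\ref{theo:edgeworth-T}; no extra probabilistic argument (Slutsky or delta method) is required.

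The main potential obstacle is the fact that $G$ and $G^*$ (and hence the quantiles $\alpha_i,\beta_i$) themselves depend on $n$ through the $n^{-1/2}$ and $n^{-1}$ correction terms, so one might worry that the quantiles drift. However, the Edgeworth bound holds uniformly in $x \in \mathbb{R}$, which means it holds at any $n$-dependent choice of evaluation points, so the drift is immaterial. I would, as a matter of care, note that the $\alpha_i$ (resp.\ $\beta_i$) are uniformly bounded for all sufficiently large $n$ because $G \to \Phi$ and $G^* \to \Phi$ uniformly, so the shortest-length quantiles are $O(1)$ perturbations of standard normal quantiles; this is not needed for the stated $o(1)$ conclusion but confirms the coverage error is in fact $o(n^{-1})$.
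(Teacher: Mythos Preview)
Your argument is correct and is exactly the natural deduction the paper leaves implicit (no separate proof of this corollary appears in the appendix; it is treated as an immediate consequence of the uniform Edgeworth bounds in Theorems~\ref{theo:edgeworth} and~\ref{theo:edgeworth-T}). The only cosmetic point is that the paper writes the quantile constraint as $\int_{\alpha_1}^{\alpha_2}\mathrm{d}G=1-\alpha$ whereas your inversion of the interval \eqref{eq:CI-EEs} gives $\alpha_2\le Z_n\le \alpha_1$ and hence $\int_{\alpha_2}^{\alpha_1}\mathrm{d}G=1-\alpha$; this is a labeling inconsistency in the paper, and your orientation is the one consistent with the interval as written.
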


\section{Computation}\label{appe:sec:computation}

Computing our relative score estimator involves the evaluation of the probability densities $\hat{p}_1(\cdot)$ and $\hat{p}_2(\cdot)$ at each test point.
We provide explicit details on how this computation is conducted for generative models for image and text, respectively.

\subsection{Generative models for text}
For models such as auto-regressive language models using transformers, $\hat{p}_1(\cdot)$ can be computed directly through a forward pass. The computational cost of our method is comparable to that of standard metrics such as perplexity, 
which also requires evaluations of the log-likelihood.

\subsection{Generative models for image}

For generative models for images, samples are usually generated via an invertible transformation $g_1(\cdot)$\footnote{For simplicity, we consider generative models with an encoder-decoder structure and treat the encoder as the inverse of the transformation $g_1(\cdot)$. As shown in Figure 3, despite using approximate likelihoods, the resulting confidence intervals achieve coverage rates close to the nominal level. We provide further sensitivity analysis in \Cref{app:simu}.} of a standard Gaussian variable $\calN(0, I_{d_1})$. 
The density can be computed by mapping data points back to the latent space, where the latent embeddings follow a \emph{known} multivariate normal distribution.
Explicitly, let $g_1^{-1}$ be the inverse of $g_1$, then by the change of variable formula,
\begin{align}\label{eq:estimator.2}
    \log(\hat{p}_1(y))
    = -\|g_1^{-1}(y)\|_2^2/2 - d_1 \log(\sqrt{2 \pi}) + \log\left(|J_{g_1}|(g_1^{-1}(y))\right).
\end{align}
Similarly, we obtain (\ref{eq:estimator.2}) for $ \log(\hat{p}_2(y))$. Then the estimator $\hat{\delta}(\hat{\mathbb{P}}_1, \hat{\mathbb{P}}_2)$ in 
(\ref{eq:estimator.1})
takes the form 
\begin{align}\label{eq:estimator.3}
\begin{split}
     \frac{1}{n_{\text{test}}} \sum_{i=1}^{n_{\text{test}}} &\frac{1}{2}\left(\|g_2^{-1}(Y_i)\|_2^2-\|g_1^{-1}(Y_i)\|_2^2\right) + (d_2 - d_1) \log(\sqrt{2 \pi})\\
     &\quad~+ \log\left(|J_{g_1}|(g_1^{-1}(Y_i))\right) - \log\left(|J_{g_2}|(g_2^{-1}(Y_i))\right).
\end{split}
\end{align}
What remains is to determine the inverse functions, \( g_1^{-1} \) and \( g_2^{-1} \), as well as the corresponding Jacobian determinants, \( |J_{g_1}|(g_1^{-1}(Y_i)) \) and \( |J_{g_2}|(g_2^{-1}(Y_i)) \).
Below, we discuss this computation over various generative models for images.
\begin{itemize}
    \item For normalizing flows \citep{rezende2015variational, dinh2016density}, the models are constructed to allow both the inverse transformation and the log-determinant of the Jacobian to be computed in closed form and evaluated efficiently. 
    
    \item In auto-encoders, the encoder effectively serves the role of inverse of the decoder, i.e., $g^{-1}$. 
    
    \item For diffusion models such as DDIM \citep{song2021denoising}, the forward process admits an inversion by solving the associated reverse-time ODE.
    Details in \Cref{appe:sec:DDIM}.

\end{itemize}

\begin{remark}[Computation of the Jacobian determinant] 
In our experiments, we mainly consider image datasets with low resolutions. For this setting, we compute the exact Jacobian log determinants using \textsc{torch.autograd.functional.jacobian}. In settings with very high image resolution or an exceptionally large number of test data points, computation of the exact  Jacobian log determinants may be infeasible. We suggest approximating the log-determinant of the Jacobian using Stochastic Lanczos Quadrature (SLQ) \citep{ubaru2017fast}. Using the identity
    \[
    \log|\det(J)| = \frac{1}{2} \log (J^{\top}J) = \frac{1}{2}\text{trace}(\log(J^{\top}J)),
    \]
    the trace term can be estimated via stochastic trace estimation \citep{hutchinson1989stochastic}:
    \[
    \text{trace}(\log(J^{\top}J)) \approx\sum_{i=1}^m z_i^{\top}\log(J^{\top}J)z_i, 
    \]
    with random probes $z_i \sim \mathcal{N}(0,I)$. Each quadratic form $z_i^{\top}\log(J^{\top}J)z_i$ can be computed using Lanczos iterations \citep{ubaru2017fast}, which only require matrix–vector products of the form $J^{\top}Jz_i$. These products can be implemented using PyTorch automatic differentiation (e.g., \textsc{torch.func.jvp} and \textsc{torch.func.vjp}), with computational cost comparable to standard backpropagation. With SLQ, the overall cost of evaluating (\ref{eq:estimator.2}) is on the same order as training the model for one epoch on the evaluation dataset.
     We further empirically assess the approximation error of SLQ. Using the first test image of CIFAR-10 and the pretrained DDIM model in \Cref{sec:simulation.diffusion}, we compute $\log\left(|\det(J_{g_1^{-1}})|\right)$ exactly using \textsc{torch.autograd.functional.jacobian}, obtaining a value of 16177.51. We then apply SLQ with a single random probe and 10 Lanczos steps. Across 50 runs, SLQ yields a mean estimate of approximately 16189.15 with a standard deviation of 57.13. This demonstrates that SLQ provides a reasonably accurate approximation, even with a very small number of probes and iterations.
\end{remark}

\begin{remark}[Equivalent forms of Jacobian determinant]
    
    Since the Jacobian of the inverse function is the inverse matrix of the Jacobian of the original function, we have multiple equivalent forms for the term \(\log(|J_{g_1}|(g_1^{-1}(Y_i))) - \log(|J_{g_2}|(g_2^{-1}(Y_i)))\) in (\ref{eq:estimator.3}),
    \begin{align*}
       &\quad~\log\left(|J_{g_1}|(g_1^{-1}(Y_i))\right) - \log\left(|J_{g_2}|(g_2^{-1}(Y_i))\right) \\
        &= -\log\left(|J_{g_1^{-1}}|(Y_i)\right) 
        + \log\left(|J_{g_2^{-1}}|(Y_i)\right) \\  
        &= \log\left(|J_{g_1^{-1} \circ g_2}|(g_2^{-1}(Y_i))\right) \\ 
        & = \log\left(|J_{g_2^{-1} \circ g_1}|(g_1^{-1}(Y_i))\right). 
    \end{align*}
    Explicitly, \(\log\left(|J_{g_1^{-1}}|(Y_i)\right)\) computes the log determinant of the Jacobian matrix of the inverse function \( g_1^{-1} \), evaluated at the test data point \( Y_i \); \(\log\left(|J_{g_1^{-1} \circ g_2}|(g_2^{-1}(Y_i))\right)\) computes the log determinant of the composite function \( g_1^{-1} \circ g_2 \), evaluated at the latent embedding \( g_2^{-1}(Y_i) \) of the test data point under the second generative model.
    In practice, either of these equivalent formulations can be chosen based on which Jacobian determinants can be computed more efficiently and accurately.

    In practice, the choice of form depends on whose Jacobian determinants can be computed more efficiently and accurately: the original (\(g_1\), \(g_2\)), the inverse (\(g_1^{-1}\), \(g_2^{-1}\)), or the composite (\(g_1^{-1} \circ g_2\)).
\end{remark}

\subsubsection{DDIM}\label{appe:sec:DDIM}
The log of the Jacobian determinant can be expressed as the sum of $S$ (total number of iterations in the sampling process) sub log-Jacobian determinants, with each term corresponding to the transformation in one iteration. 
In each iteration, the transformation is given by,
\begin{equation}
    x_{s-1} = \sqrt{\alpha_{s-1}} \underbrace{\left(\frac{x_s - \sqrt{1 - \alpha_s} \epsilon_\theta^{(s)}(x_s)}{\sqrt{\alpha_s}}\right)} 
    + \underbrace{\sqrt{1 - \alpha_{s-1}} \cdot \epsilon_\theta^{(s)}(x_s)},
\end{equation}
for a sequence of $\alpha_s \in (0,1)$.
To compute the Jacobian determinant for this transformation, it suffices to compute the Jacobian of $\epsilon_\theta^{(s)}$. Since $\epsilon_\theta^{(s)}$ is parameterized by a U-Net architecture, its Jacobian can be directly read from the trained U-Net.

The reverse of the sampling process, which encodes a test image into latent noise, can be achieved by simulating the reverse of an ODE.
In fact, DDIM can be considered as an Euler method to solve ODEs. Specifically, the iterative formula can be represented as
\begin{equation}
    \sqrt{\frac{1}{\bar{\alpha}_{s-1}}} x_{s-1} - \sqrt{\frac{1}{\bar{\alpha}_s}} x_s = \left( \sqrt{\frac{1}{\bar{\alpha}_{s-1}}} - \sqrt{\frac{1}{\bar{\alpha}_s}} \right) \epsilon_\theta(x_s, s),
\end{equation}
We set $y_s := \sqrt{\frac{1}{\bar{\alpha}_s}} x_s$ and $p_s := \sqrt{\frac{1}{\bar{\alpha}_s}} - 1$, 
\begin{equation}
    y_{s-1} - y_s = (p_{s-1} - p_s) \epsilon_\theta(x_s, s).
\end{equation}
In the limit of small steps, this equation becomes an ODE:
\begin{equation}
    dy_s = \epsilon_\theta(x_s, s) dp_s.
\end{equation}
Then, the reversal of this ODE can be derived as follows:
\begin{equation}
    y_{s+1} - y_s = (p_{s+1} - p_s) \epsilon_\theta(x_s, s),
\end{equation}
which becomes,
\begin{equation}
    \sqrt{\frac{1}{\bar{\alpha}_{s+1}}} x_{s+1} - \sqrt{\frac{1}{\bar{\alpha}_s}} x_s = \left( \sqrt{\frac{1}{\bar{\alpha}_{s+1}}} - \sqrt{\frac{1}{\bar{\alpha}_s}} \right) \epsilon_\theta(x_s, s).
\end{equation}

\begin{remark}    
    Extending our evaluation method to generative models with a non-deterministic reverse process, such as the Denoising Diffusion Probabilistic Models (DDPM) \citep{ho2020denoising}, would be an intriguing direction for future research.
\end{remark}

\section{Additional empirical results}\label{appe:sec:simulations}

\subsection{Additional experiments for comparing CLT and EEs}

In this section, we also design more complicated data generation process to compare the inference performances between our proposed EEs methods our proposed CLT ($Z$-statistics and $T$-statistics in Theorem \ref{theo:CLT} and Corollary \ref{coro:CLT}, respectively) and EEs ($Z$-statistics and $T$-statistics in Theorem \ref{theo:edgeworth} and (\ref{theo:edgeworth-T}), respectively).
Beyond the linear transformation of normal input (\ref{eq:simu_model}), we also consider 5 types of input distribution of $X,X_1,X_2$: normal, skew-normal, $\chi^2$, Gamma and Beta distributions. The output data generation mechanism follows:
\begin{equation}\label{eq:simu_model_2}
    Y_1 = A \odot g(X_1) + B, \qquad Y_2 = (A+\epsilon I_d) \odot g(X_2) + B + \epsilon,
\end{equation}
where $\odot$ denotes the element-wise product, $g(\cdot)$ is the transformation function with certain smoothness to control the non-Gaussian degree of outputs, closer to the generative models in certain sense. Consider the following 8 forms of $g(\cdot)$ for evaluations: (1)
$g(X)=X$, (2) $g(X)=X^2$, (3) $g(X)=|X|^{3/2}$, (4) $g(X)=\sin(2\pi x)$, (5) $g(X)=\operatorname{logit}(X)$, (6) $g(X)=\sigma(X)$ with sigmoid transformation $\sigma(z)=1/(1+e^{-z})$, (7) $g(X)=\operatorname{ReLu}(X)=\max(0,X)$, (8) $g(X)=\sigma(A \odot\sigma(A \odot\sigma(X)+B)+B)$ with multilayer sigmoid transformation. 
Other settings follow the simulated model (\ref{eq:simu_model}). For each setting, we evaluate the performance of each method based on the following metrics: (1) empirical CI coverage rate, (2) empirical statistical power, and (3) empirical CI length.

\begin{figure}[htbp]
\centering
\includegraphics[width=0.85\textwidth]{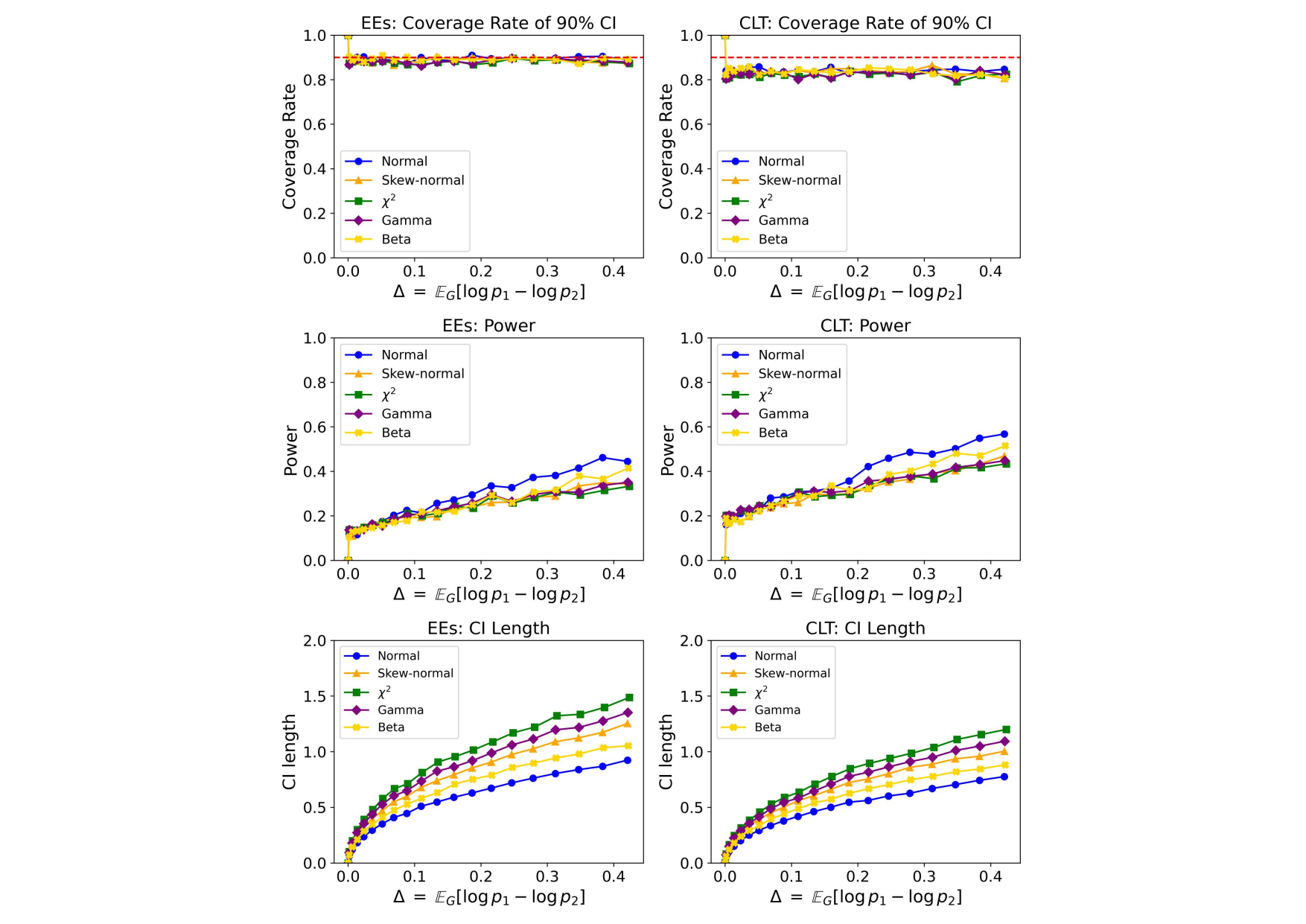}
\caption{Coverage rate, power and length of CIs constructed by EEs and CLT of $T$-statistics under linear transformation.}
\label{fig:linear_metrics}
\end{figure}

\begin{figure}[htbp]
\centering
\includegraphics[width=0.85\textwidth]{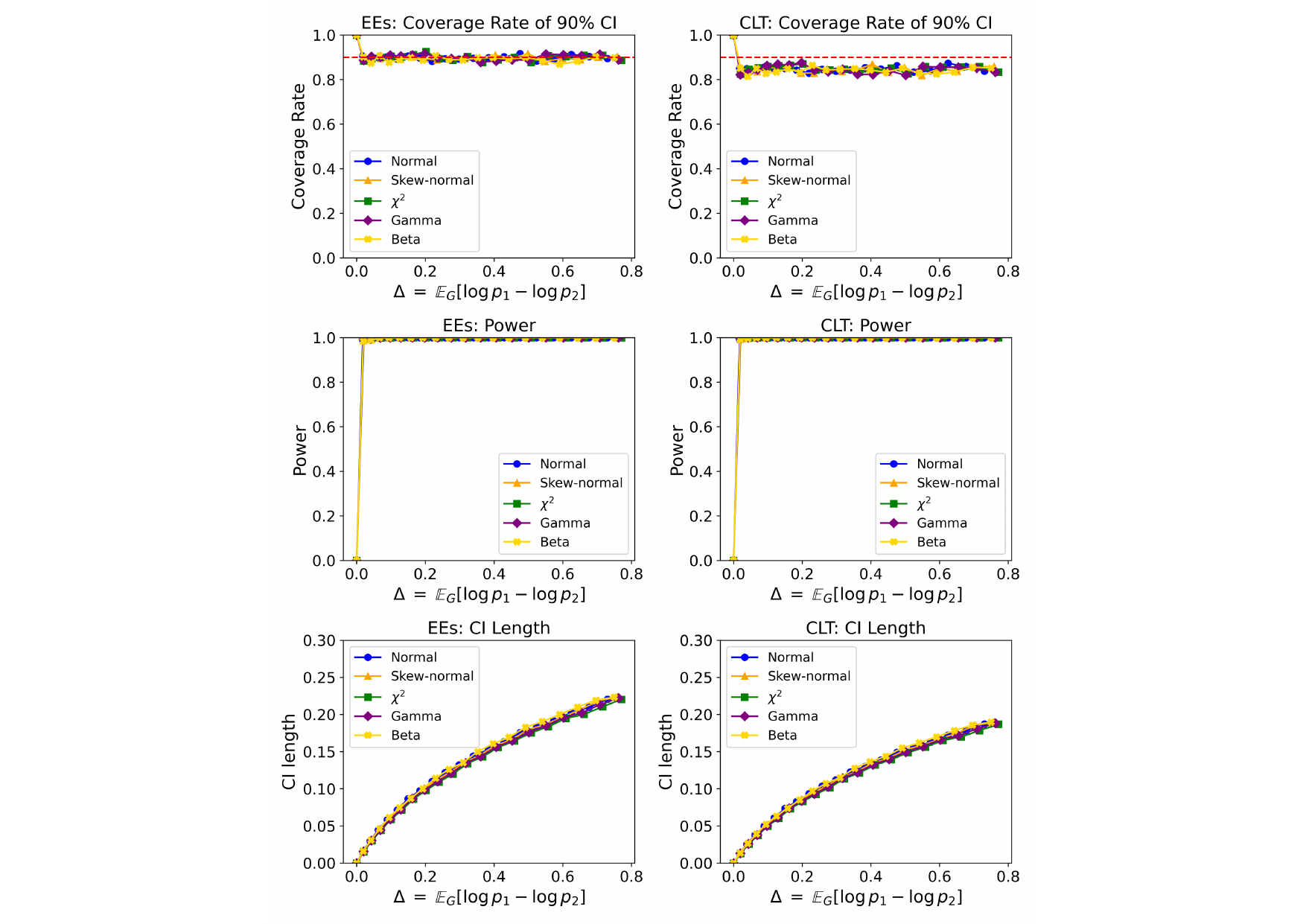}
\caption{Coverage rate, power and length of CIs constructed by EEs and CLT of $T$-statistics under sigmoid transformation.}
\label{fig:sigmoid_metrics}
\end{figure}

The results of CIs based on $T$-statistics (\ref{eq:CI-EEs-T}) with unknown variance for the linear transformation and the sigmoid transformation are presented, respectively, in Figures~\ref{fig:linear_metrics} and \ref{fig:sigmoid_metrics}.
In Figures~\ref{fig:linear_metrics} and \ref{fig:sigmoid_metrics}, the plots in the left column show the results for the EEs method, while the plots in the right column show the results for the CLT method. Each row corresponds to one of the three inference metrics. In all Figures, the $x$-axis represents the ground truth value \( \Delta = \mathbb{E}_{G}[\log p_1 - \log p_2] \), and a dashed line marks the nominal 90\% coverage level.
Figures~\ref{fig:linear_metrics} and \ref{fig:sigmoid_metrics} show that the CIs based on the EEs method not only can coverage the true interested parameter relative score, but also not sacrifice the testing capability and the length of CIs, yielding a better trade-off between type I error and type II error of statistical inference.

The additional simulation results of $T$-statistics under other transformations and those of $Z$-statistics are presented in Figures \ref{fig:quadratic_unknown}--\ref{fig:3sigmoid_known}, respectively.

Both CLT-based and Edgeworth-expansion-based intervals are asymptotically valid. The experiment results show that when the sample size is large ($n\geq 50$), and the output distribution has a normal tail, intervals constructed from two methods are almost the same; CLT-based methods would be easier to use and understand. When sample sizes are small ($n \leq 50$), the output distribution has a heavy tail, non-negligible skewness and kurtosis, CLT-based intervals could have lower coverage rates. 
When the valid coverage rate (or type I error) is the main concern, Edgeworth-expansion-based methods should be preferred. If the interval length is the main concern and slightly miscoverage is acceptable, CLT-based methods will be preferred.  
Overall, we recommend the users to choose the CLT-based statistics and intervals when the sample size $n\geq 50$ and the output distribution has a normal tail; and to choose the Edgeworth-expansion-based ones when $n \leq 50$ or the output distribution is far away from normal.

\begin{figure}[htbp]
\centering
\includegraphics[width=0.85\textwidth]{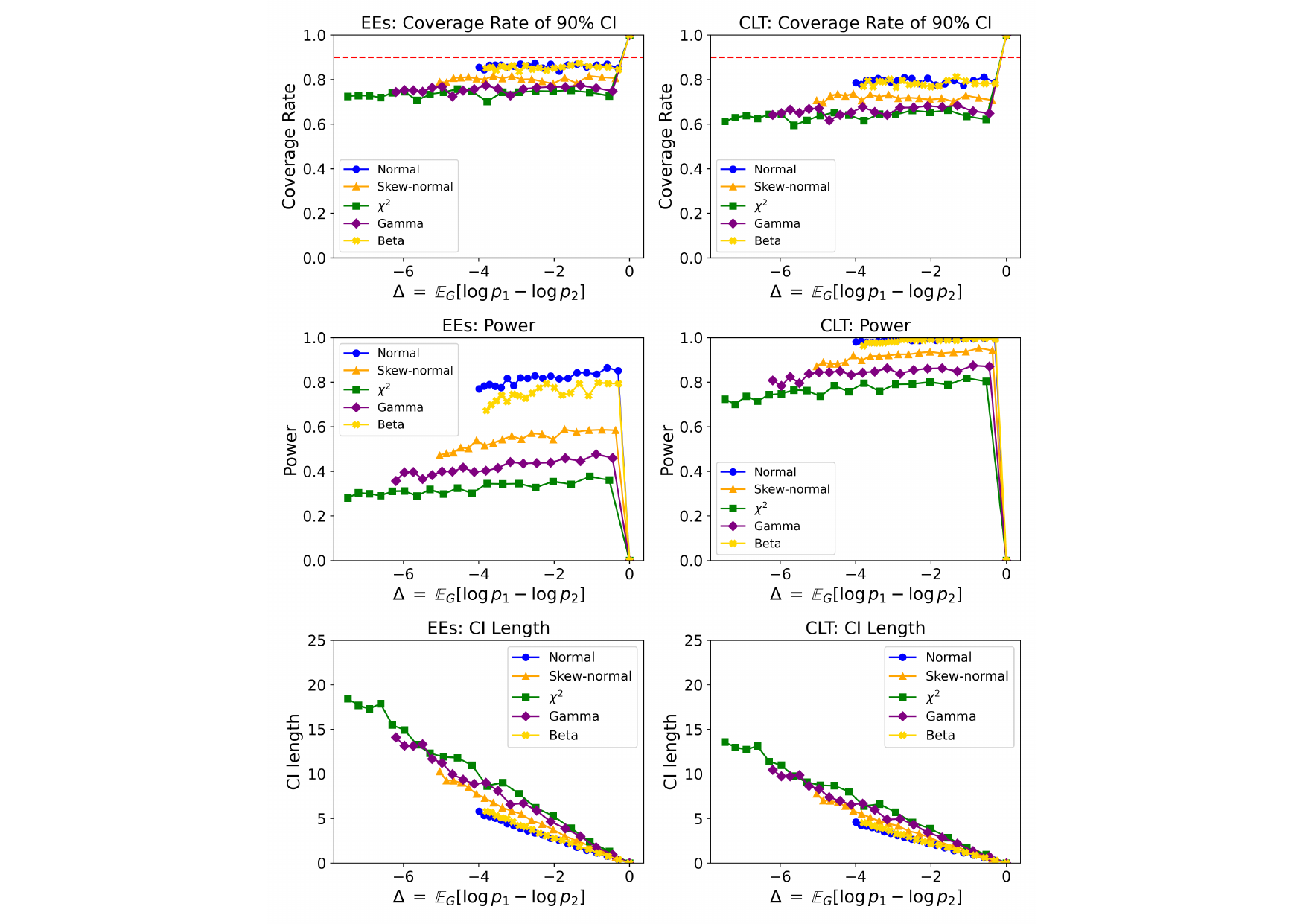}
\caption{Coverage rate, power and length of CIs constructed by EEs and CLT of $T$-statistics under transformation $g(X)=X^2$.}
\label{fig:quadratic_unknown}
\end{figure}

\begin{figure}[htbp]
\centering
\includegraphics[width=0.85\textwidth]{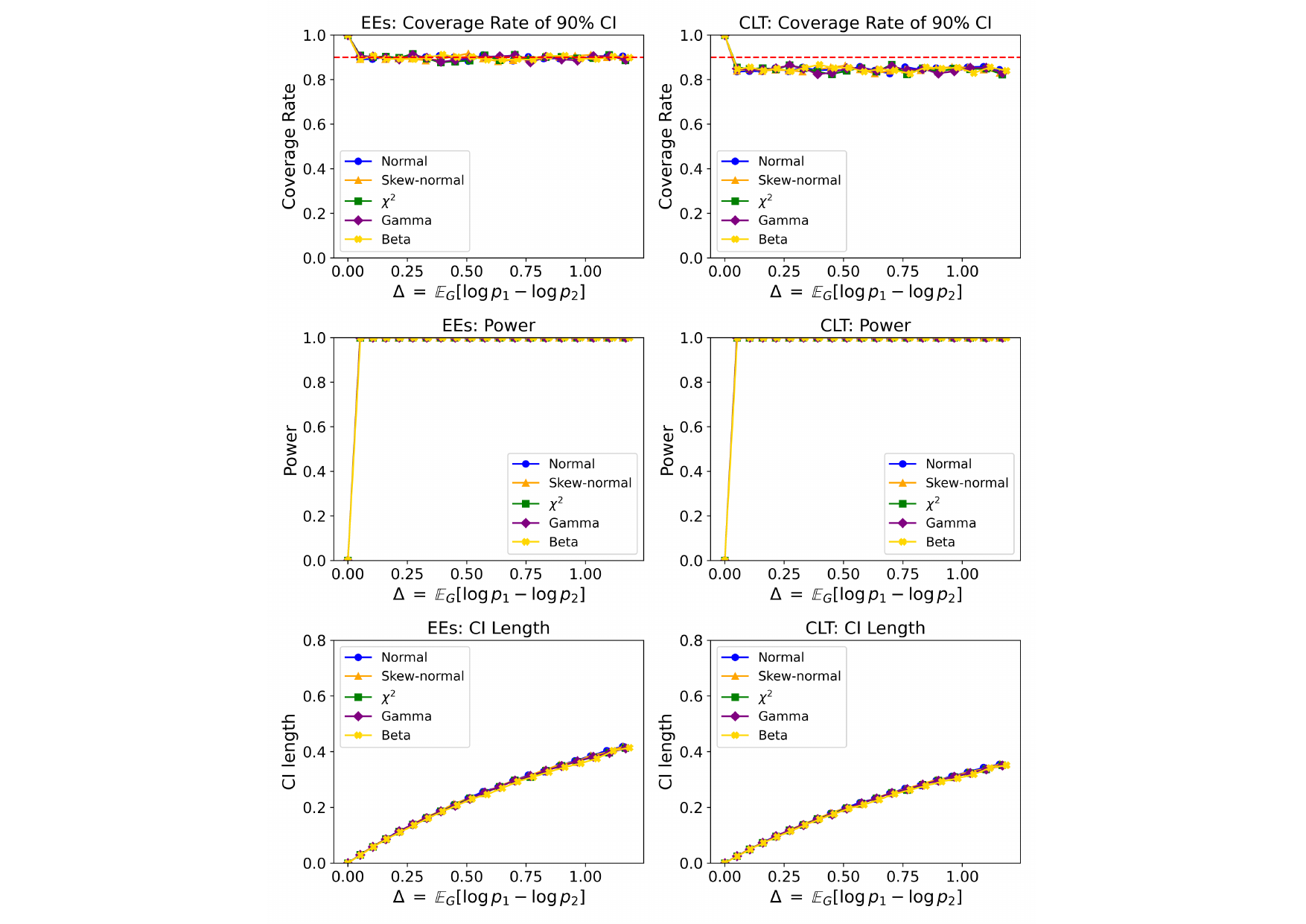}
\caption{Coverage rate, power and length of CIs constructed by EEs and CLT of $T$-statistics under transformation $g(X)=sin(2\pi X)$.}
\label{fig:sin_unknown}
\end{figure}

\begin{figure}[htbp]
\centering
\includegraphics[width=0.85\textwidth]{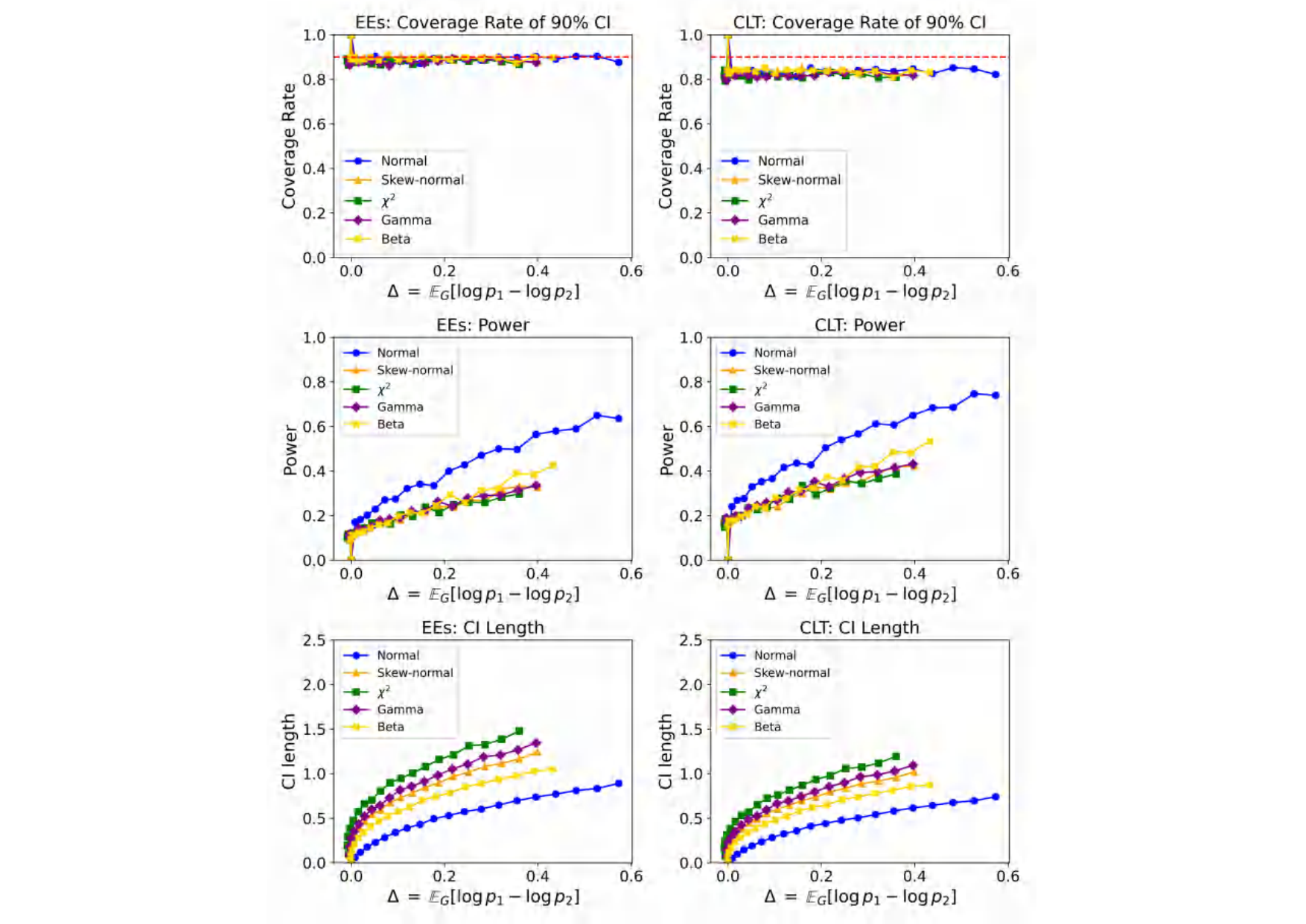}
\caption{Coverage rate, power and length of CIs constructed by EEs and CLT of $T$-statistics under ReLU transformation $g(X)=\operatorname{ReLu}(X)=\max(0,X)$.}
\label{fig:relu_unknown}
\end{figure}

\begin{figure}[htbp]
\centering
\includegraphics[width=0.85\textwidth]{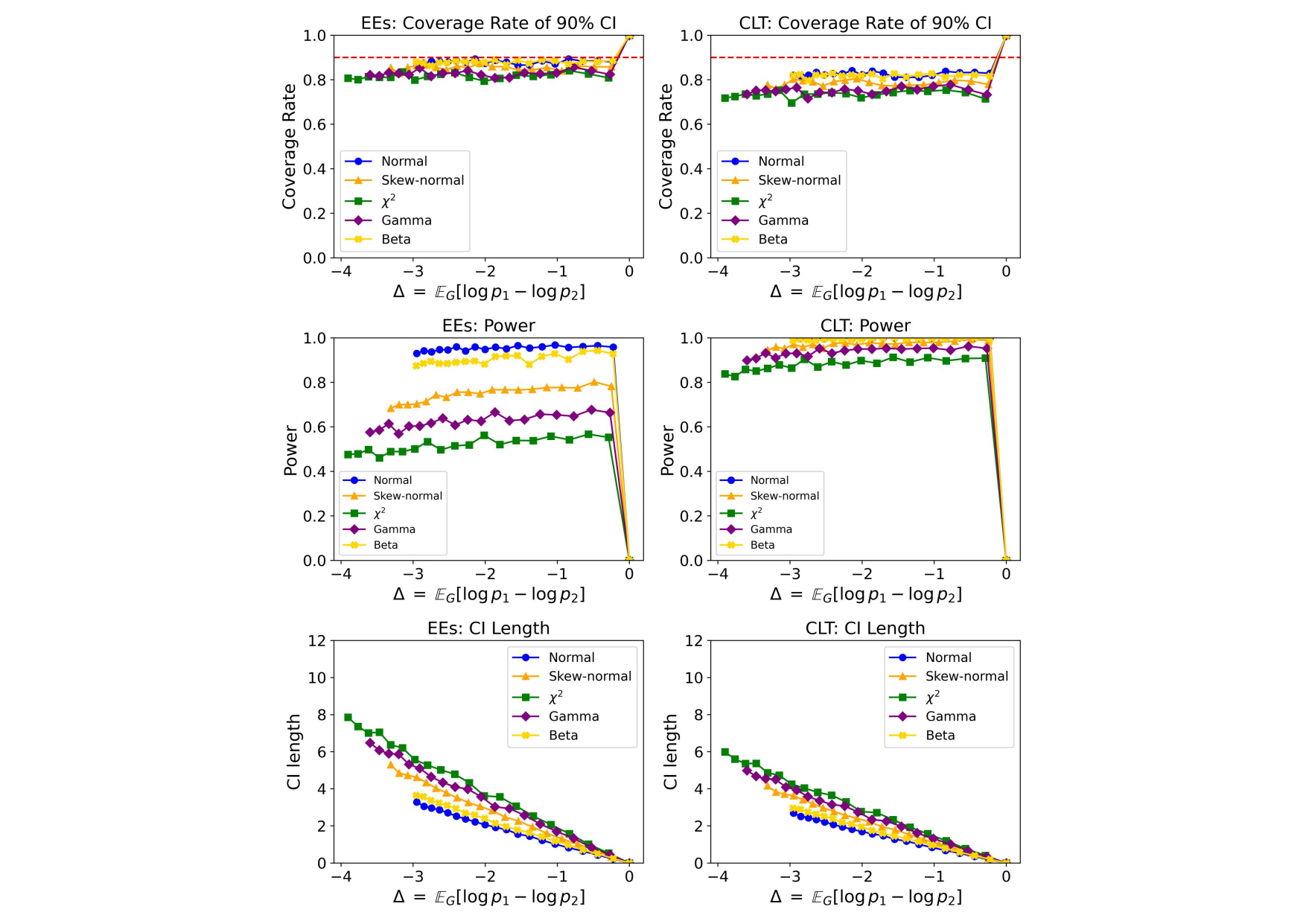}
\caption{Coverage rate, power and length of CIs constructed by EEs and CLT of $T$-statistics under Logit transformation $g(X)=\operatorname{logit}(X)$.}
\label{fig:logit_unknown}
\end{figure}

\begin{figure}[htbp]
\centering
\includegraphics[width=0.85\textwidth]{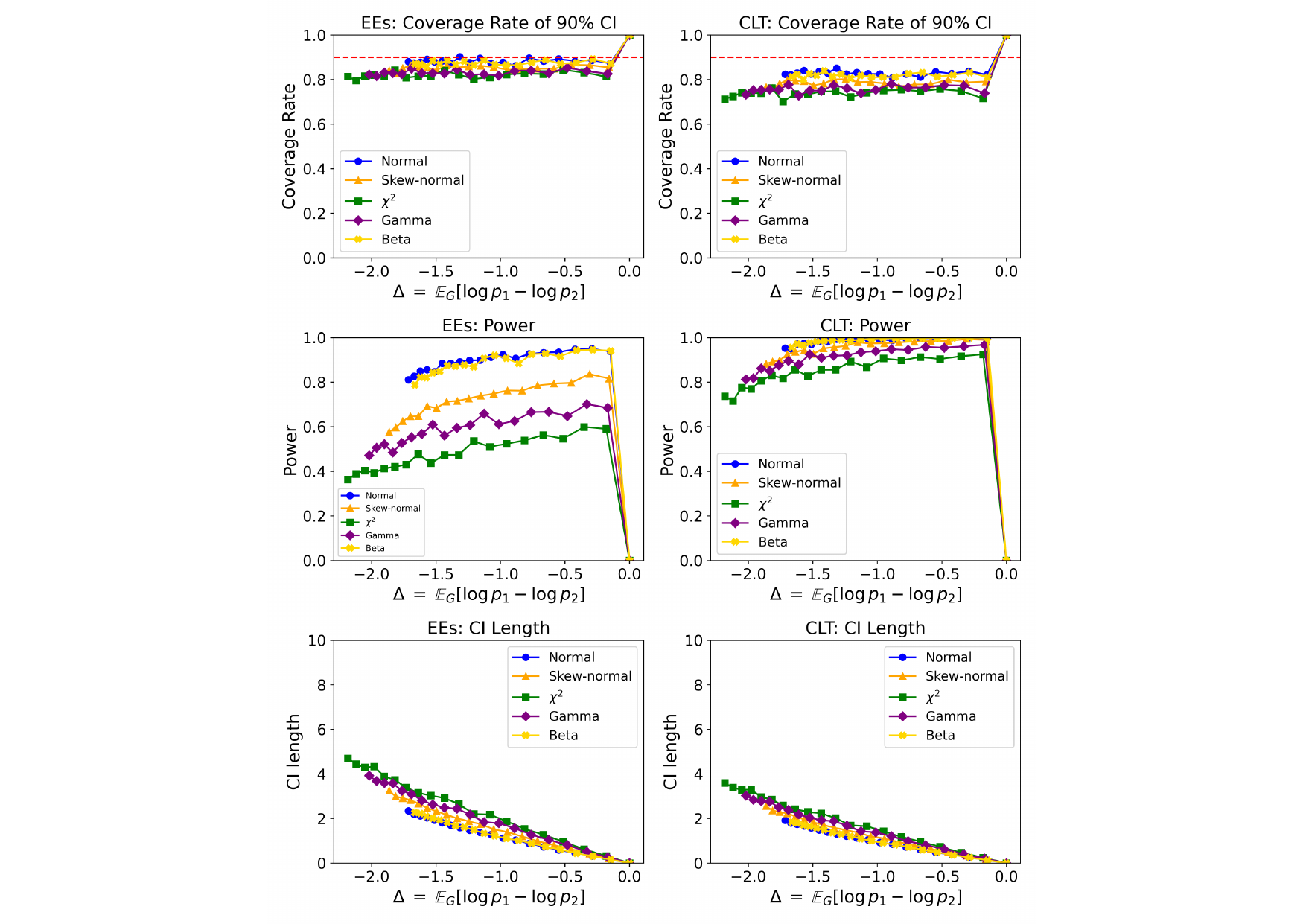}
\caption{Coverage rate, power and length of CIs constructed by EEs and CLT of $T$-statistics under transformation $g(X)=|X|^{3/2}$.}
\label{fig:power_unknown}
\end{figure}

\begin{figure}[htbp]
\centering
\includegraphics[width=0.85\textwidth]{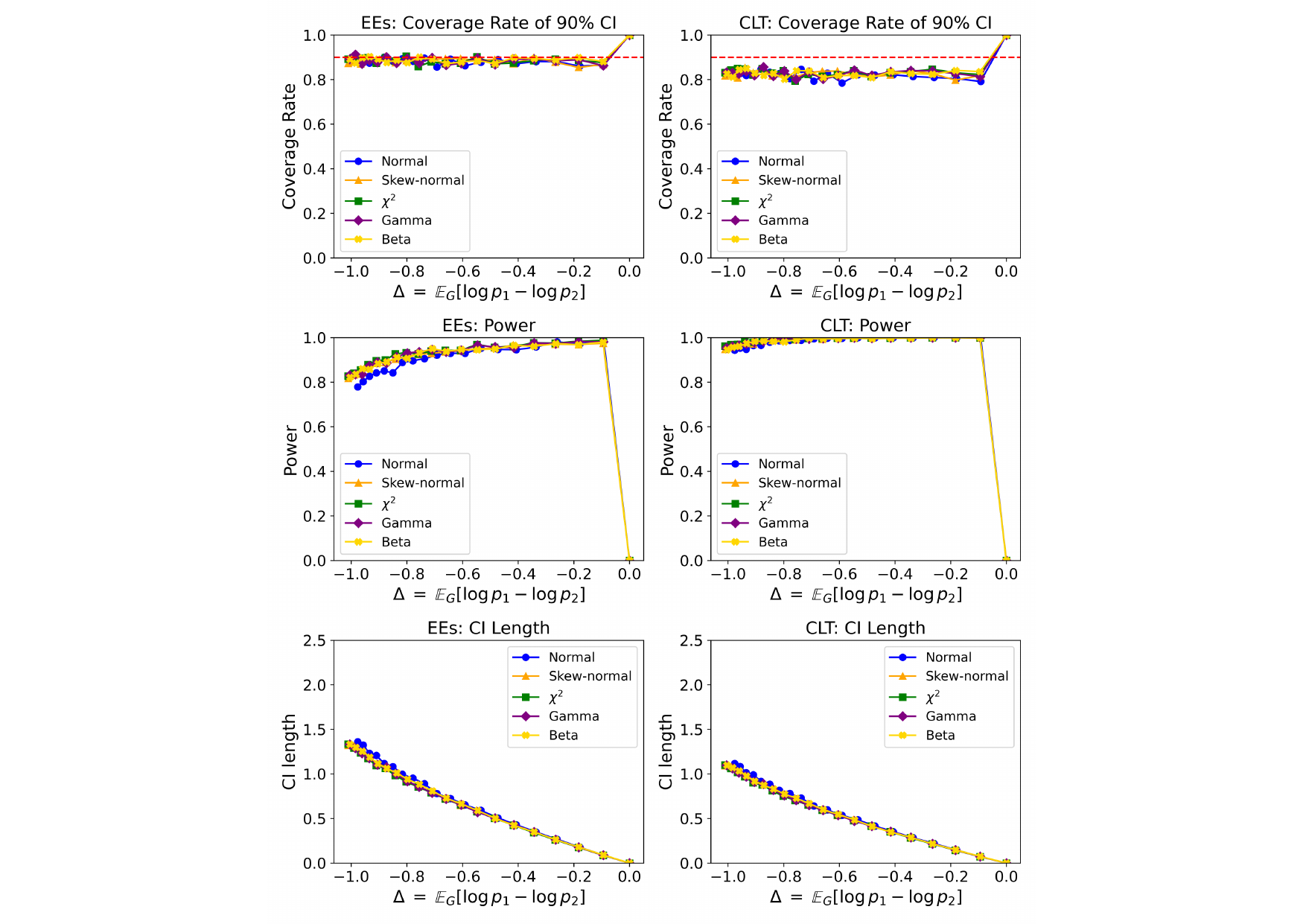}
\caption{Coverage rate, power and length of CIs constructed by EEs and CLT of $T$-statistics under multilayer sigmoid transformation $g(X)=\sigma(A \odot\sigma(A \odot\sigma(X)+B)+B)$.}
\label{fig:3sigmoid_unknown}
\end{figure}

\begin{figure}[htbp]
\centering
\includegraphics[width=0.85\textwidth]{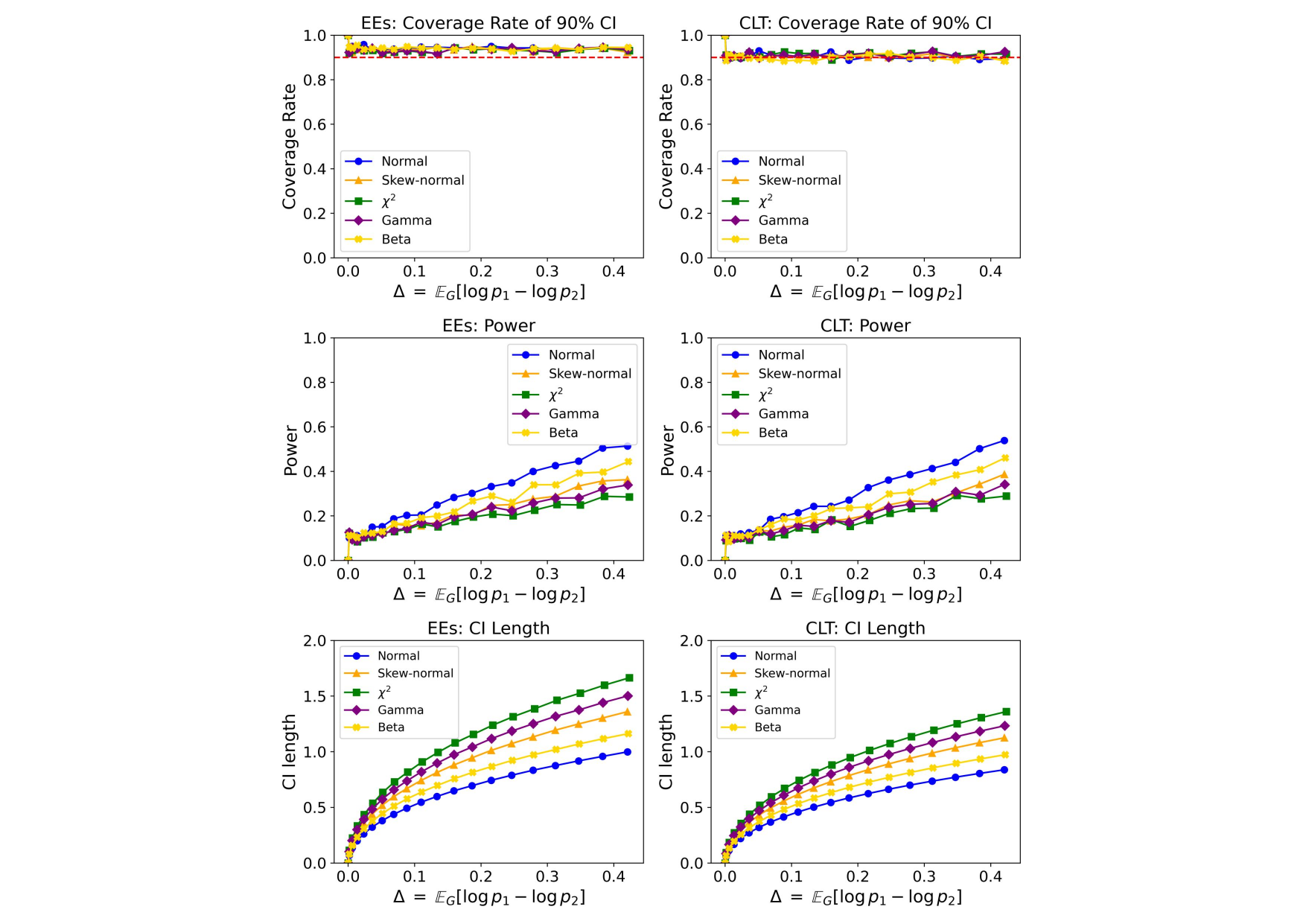}
\caption{Coverage rate, power and length of CIs constructed by EEs and CLT of $Z$-statistics under linear transformation.}
\label{fig:linear_known}
\end{figure}

\begin{figure}[htbp]
\centering
\includegraphics[width=0.85\textwidth]{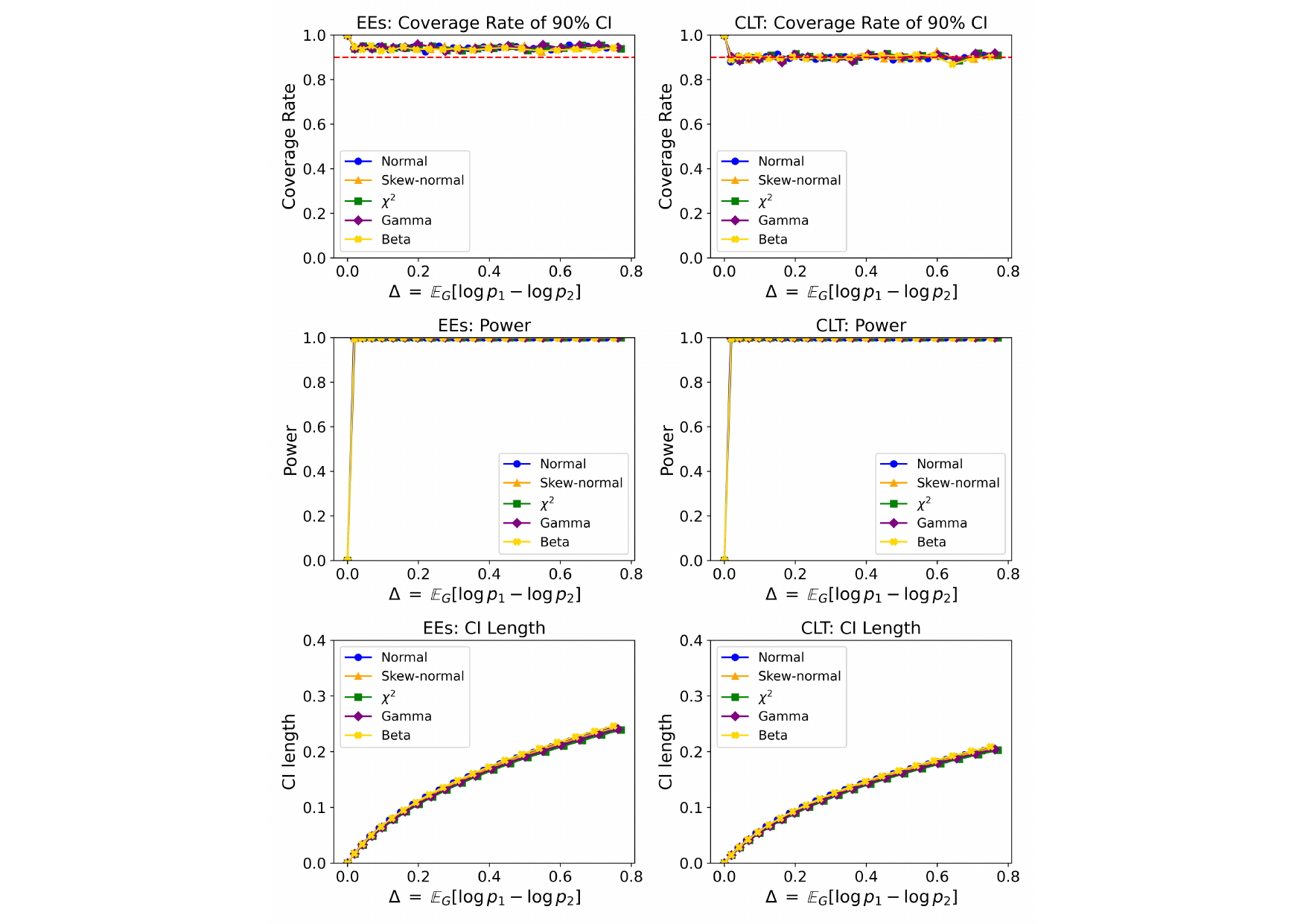}
\caption{Coverage rate, power and length of CIs constructed by EEs and CLT of $Z$-statistics under sigmoid transformation.}
\label{fig:sigmoid_known}
\end{figure}

\begin{figure}[htbp]
\centering
\includegraphics[width=0.85\textwidth]{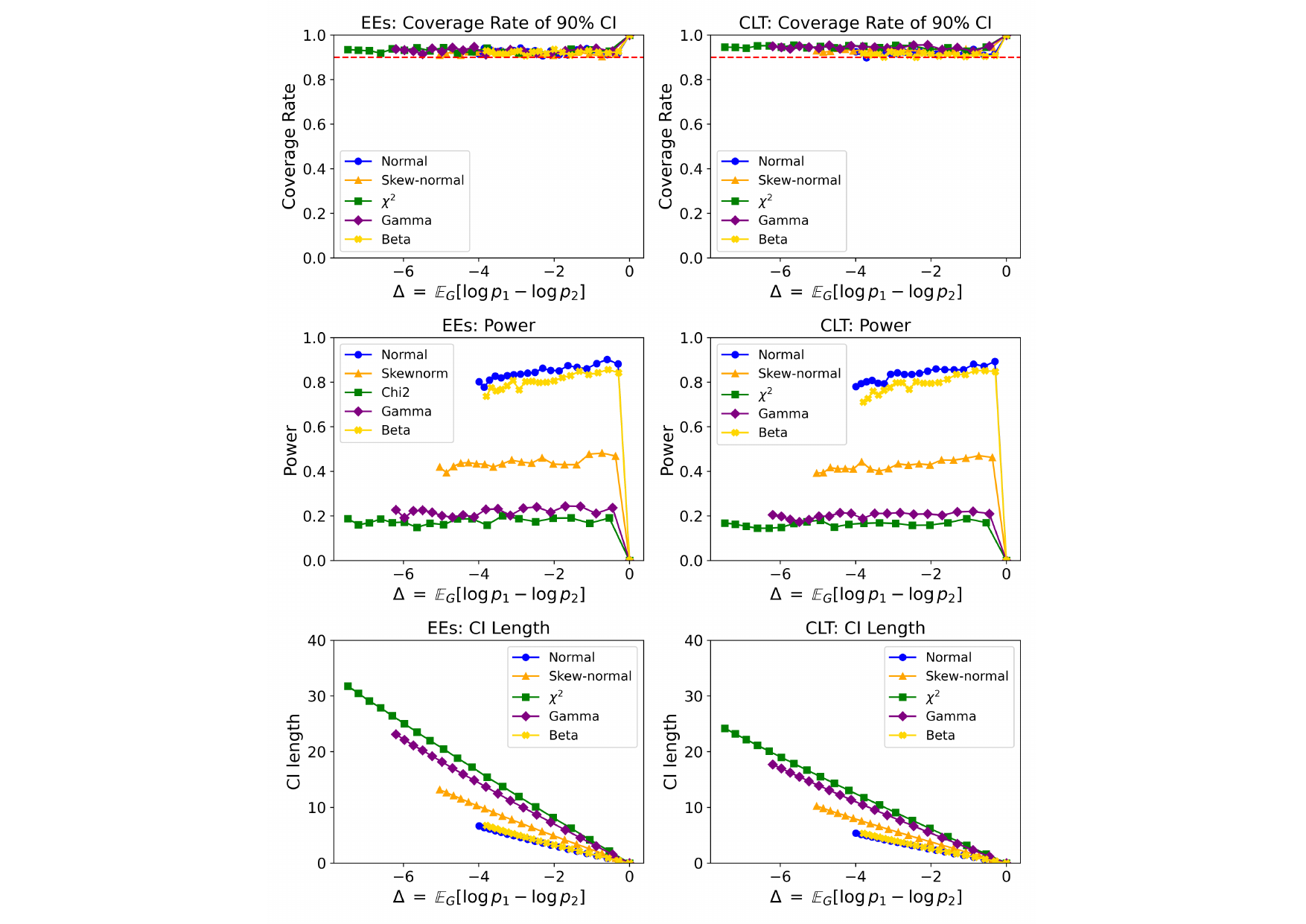}
\caption{Coverage rate, power and length of CIs constructed by EEs and CLT of $Z$-statistics under transformation $g(X)=X^2$.}
\label{fig:quadratic_known}
\end{figure}

\begin{figure}[htbp]
\centering
\includegraphics[width=0.85\textwidth]{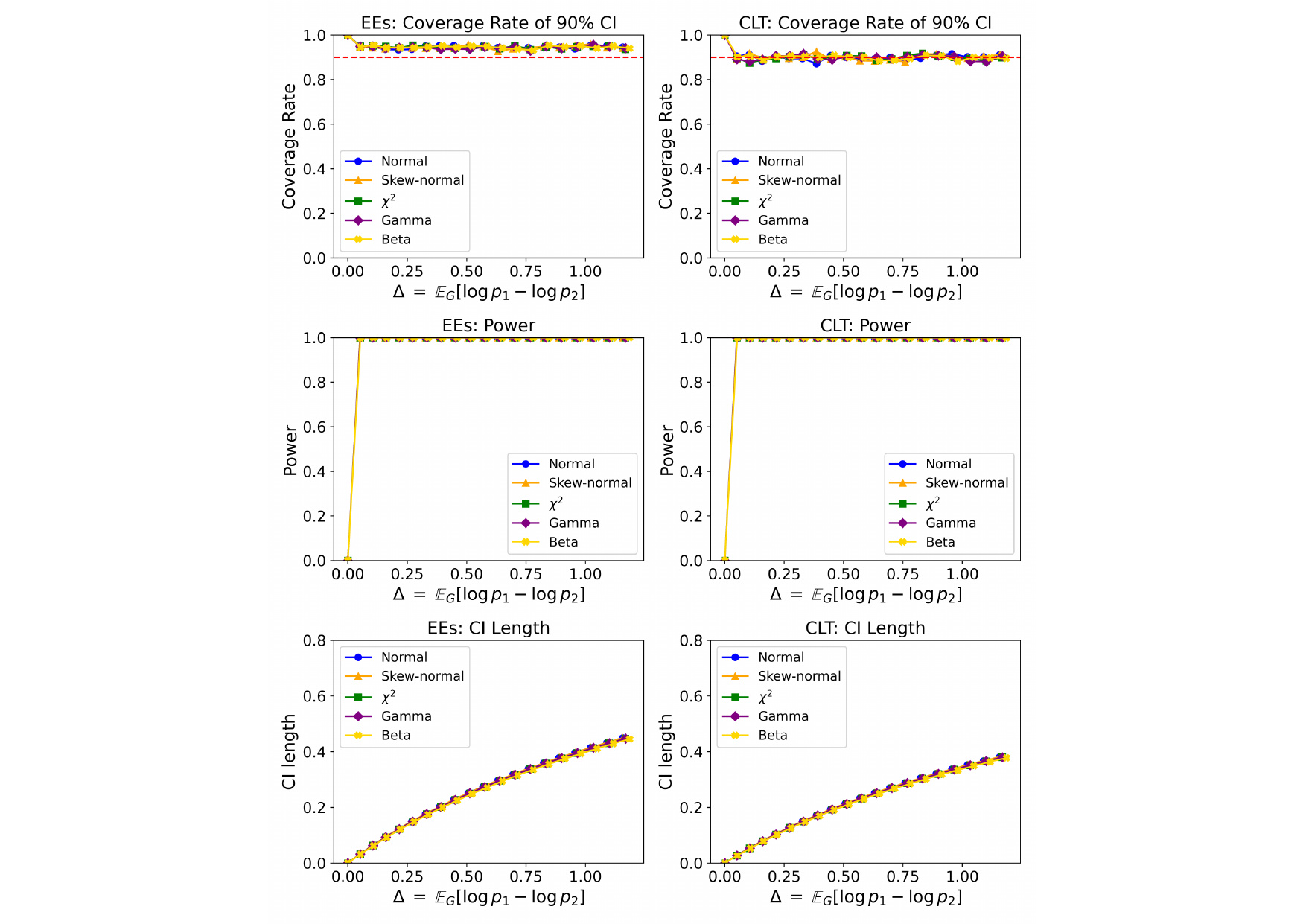}
\caption{Coverage rate, power and length of CIs constructed by EEs and CLT of $Z$-statistics under transformation $g(X)=sin(2\pi X)$.}
\label{fig:sin_known}
\end{figure}

\begin{figure}[htbp]
\centering
\includegraphics[width=0.85\textwidth]{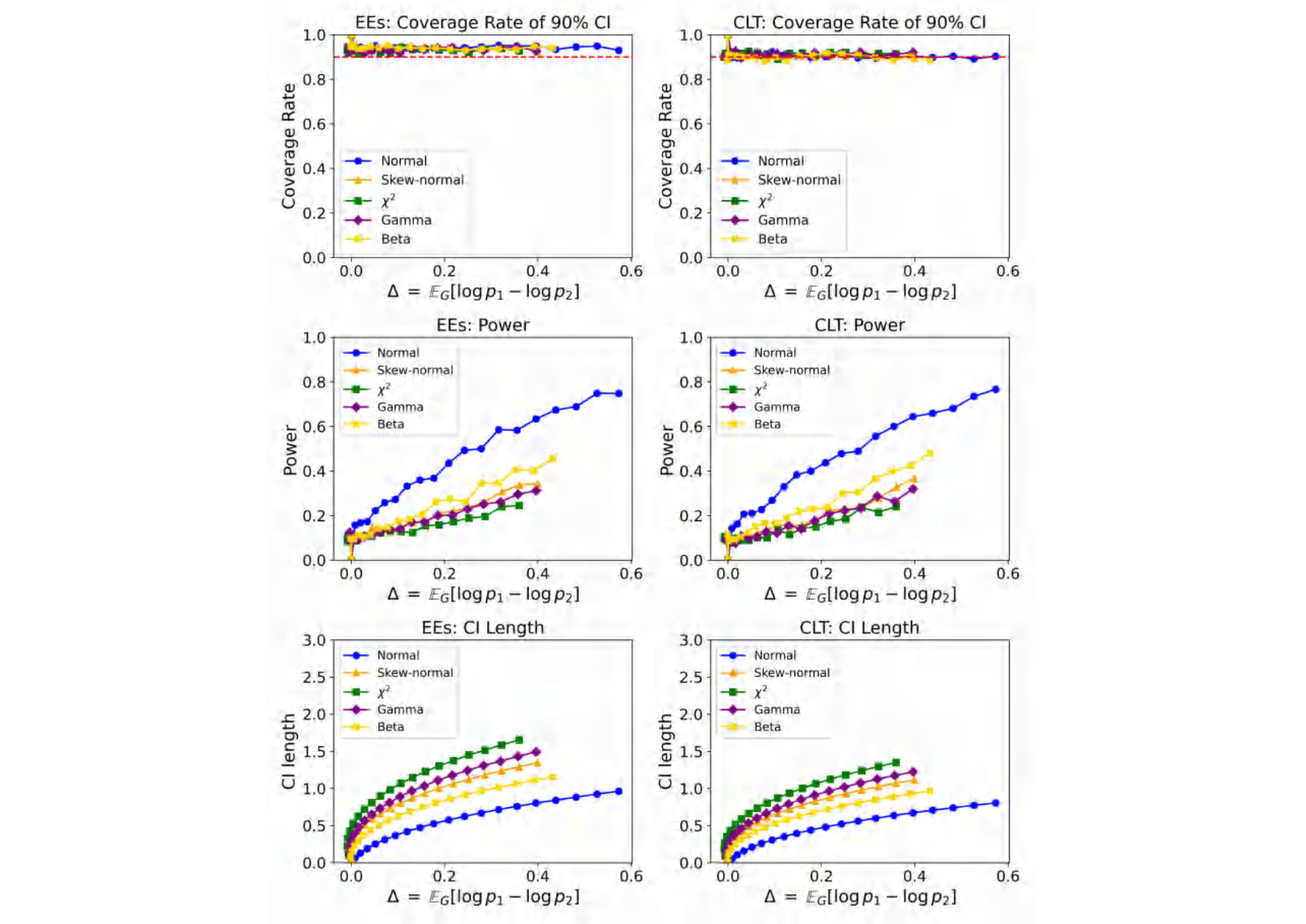}
\caption{Coverage rate, power and length of CIs constructed by EEs and CLT of $Z$-statistics under ReLU transformation $g(X)=\operatorname{ReLu}(X)=\max(0,X)$.}
\label{fig:relu_known}
\end{figure}

\begin{figure}[htbp]
\centering
\includegraphics[width=0.85\textwidth]{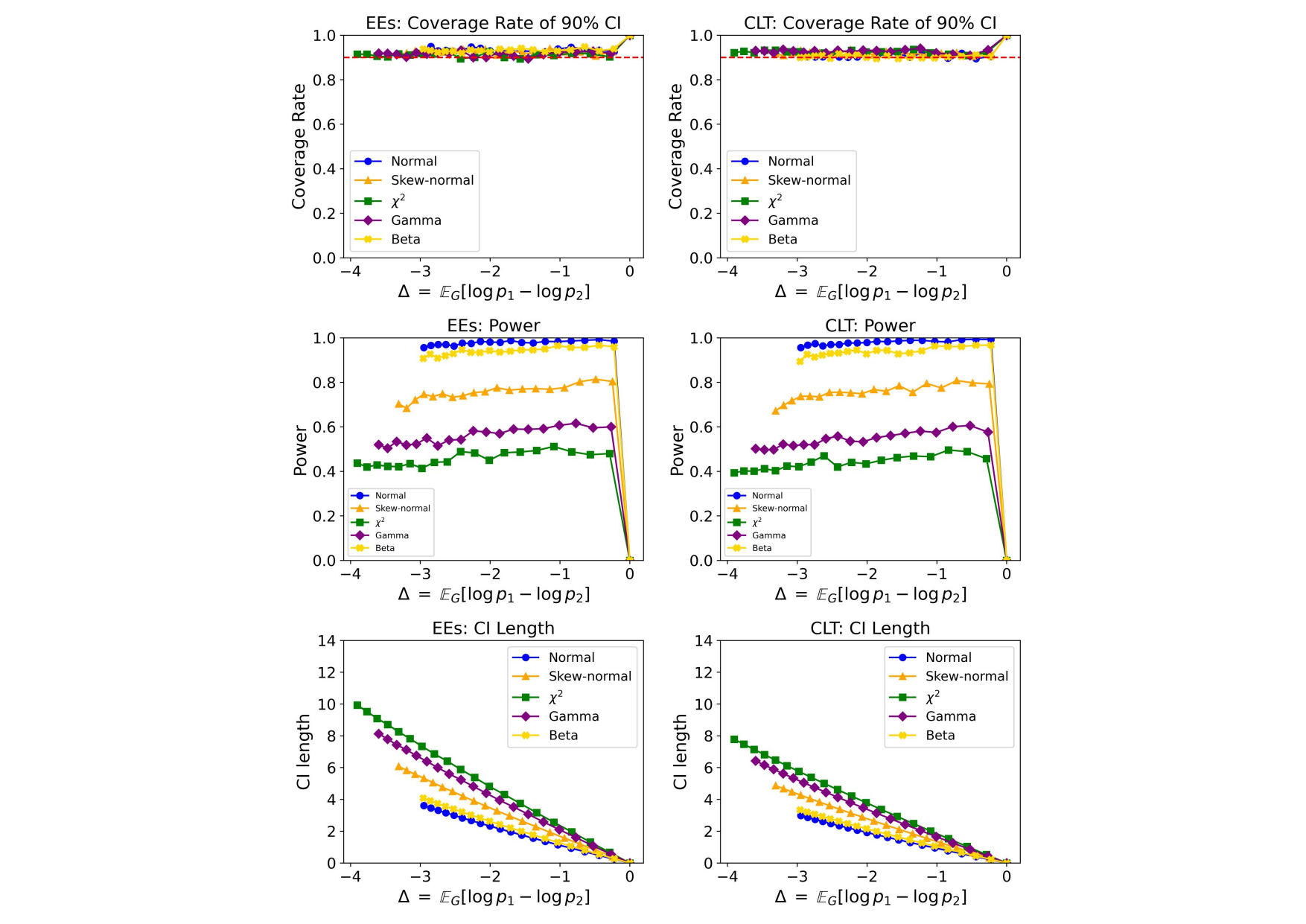}
\caption{Coverage rate, power and length of CIs constructed by EEs and CLT of $Z$-statistics under Logit transformation $g(X)=\operatorname{logit}(X)$.}
\label{fig:logit_known}
\end{figure}

\begin{figure}[htbp]
\centering
\includegraphics[width=0.85\textwidth]{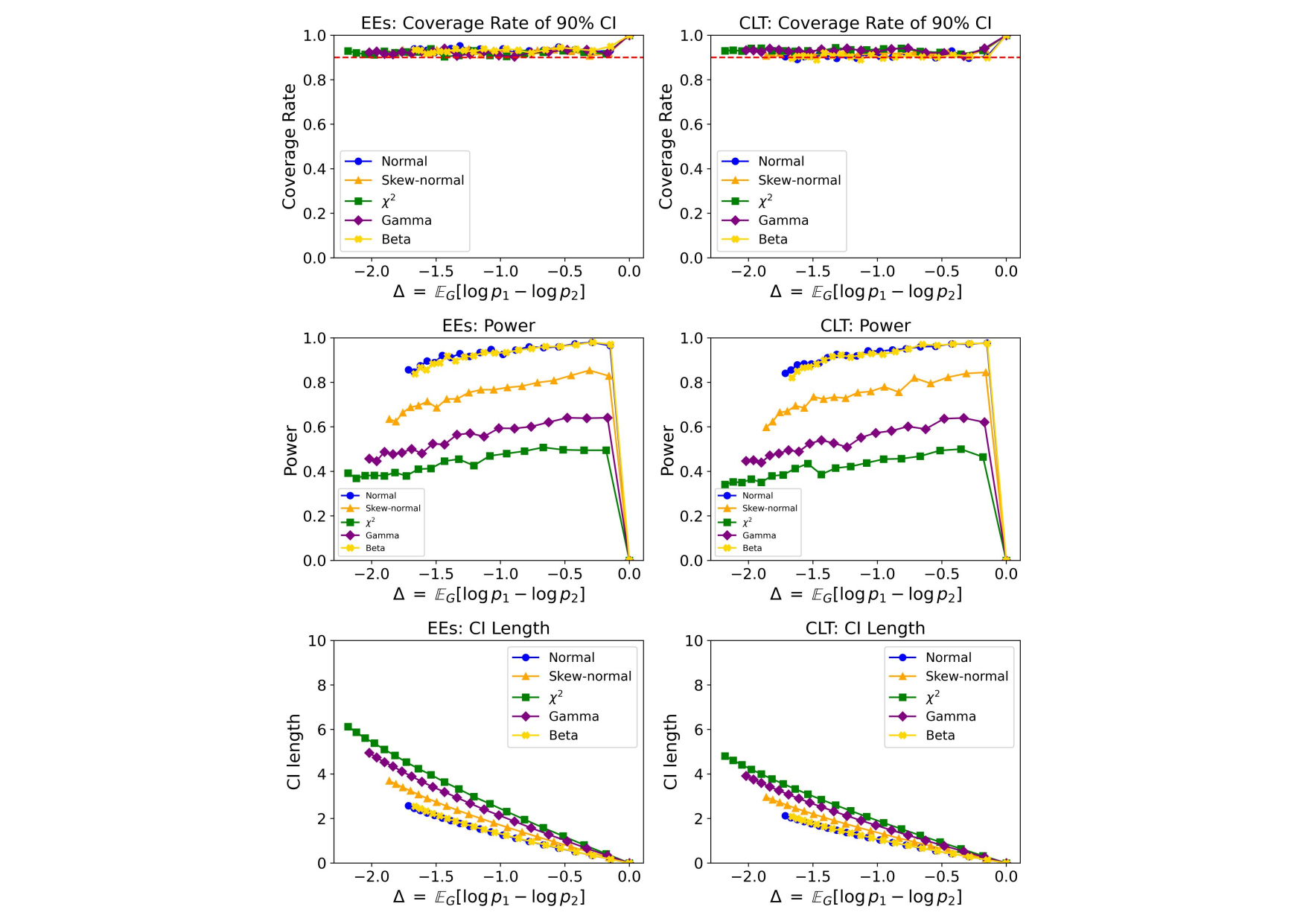}
\caption{Coverage rate, power and length of CIs constructed by EEs and CLT of $Z$-statistics under transformation $g(X)=|X|^{3/2}$.}
\label{fig:power_known}
\end{figure}

\begin{figure}[htbp]
\centering
\includegraphics[width=0.85\textwidth]{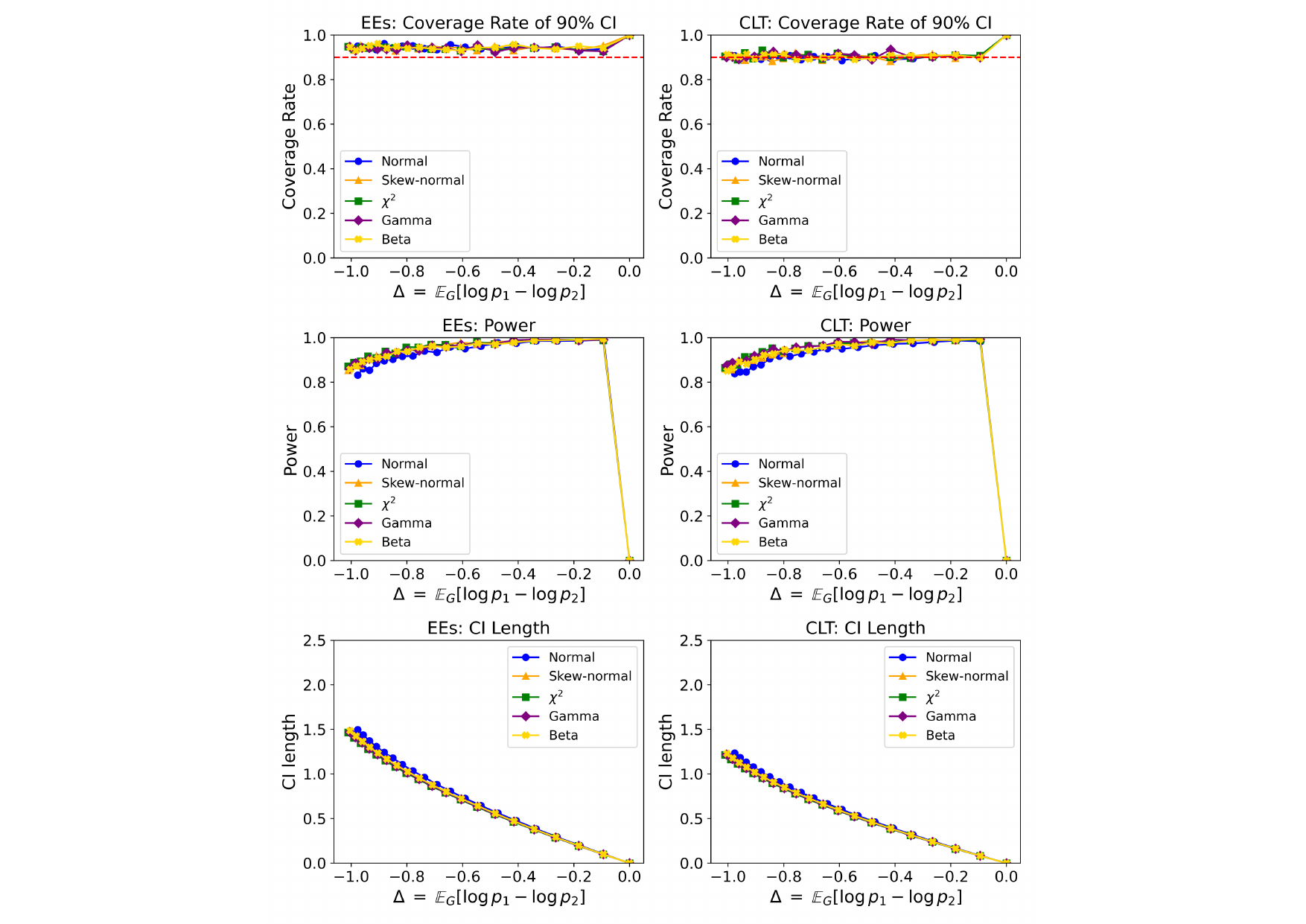}
\caption{Coverage rate, power and length of CIs constructed by EEs and CLT of $Z$-statistics under multilayer sigmoid transformation $g(X)=\sigma(A \odot\sigma(A \odot\sigma(X)+B)+B)$.}
\label{fig:3sigmoid_known}
\end{figure}

\subsubsection{Choices of Optimal \texorpdfstring{$n$}{n} Using EEs}

Figures \ref{fig:linear_optimal_whole} and \ref{fig:sigmoid_optimal} compare the inference performances of EEs of $T$-statistics with unknown variance across different input distributions of $X$ and different sample sizes $n$, and we use the CLT of $T$-statistics as competitor.
We find that EEs successfully keep the coverage rate at the nominal level while CLT suffers bigger coverage errors over $\alpha$, especially when the sample size $n$ is relatively small (like $n<100$).
These further point out that the EEs can be used for early stopping and the optimal training sample determination in the inference of relative scores.

\begin{figure}[htbp]
\centering
\includegraphics[width=\textwidth]{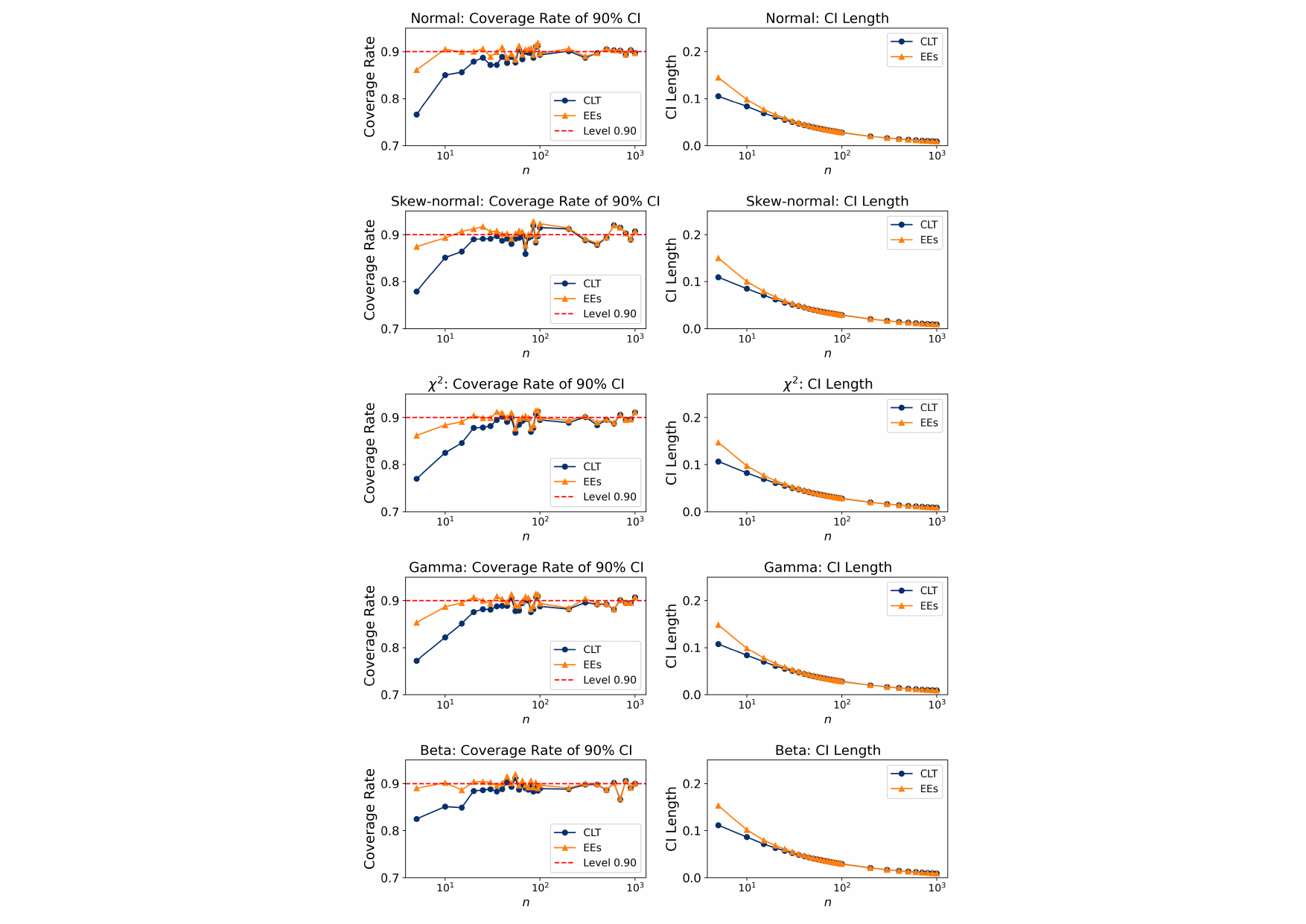}
\caption{
Coverage rates and length of confidence intervals obtained by EEs (\ref{eq:CI-EEs}) and CLT (\ref{eq:CI}) across different distributions of $X$ and sample sizes $n$ using simulated data generated by (\ref{eq:simu_model_2}) with sigmoid transformation, $\epsilon = 0.07$.}
\label{fig:sigmoid_optimal}
\end{figure}

\begin{figure}[htbp]
\centering
\includegraphics[width=\textwidth]{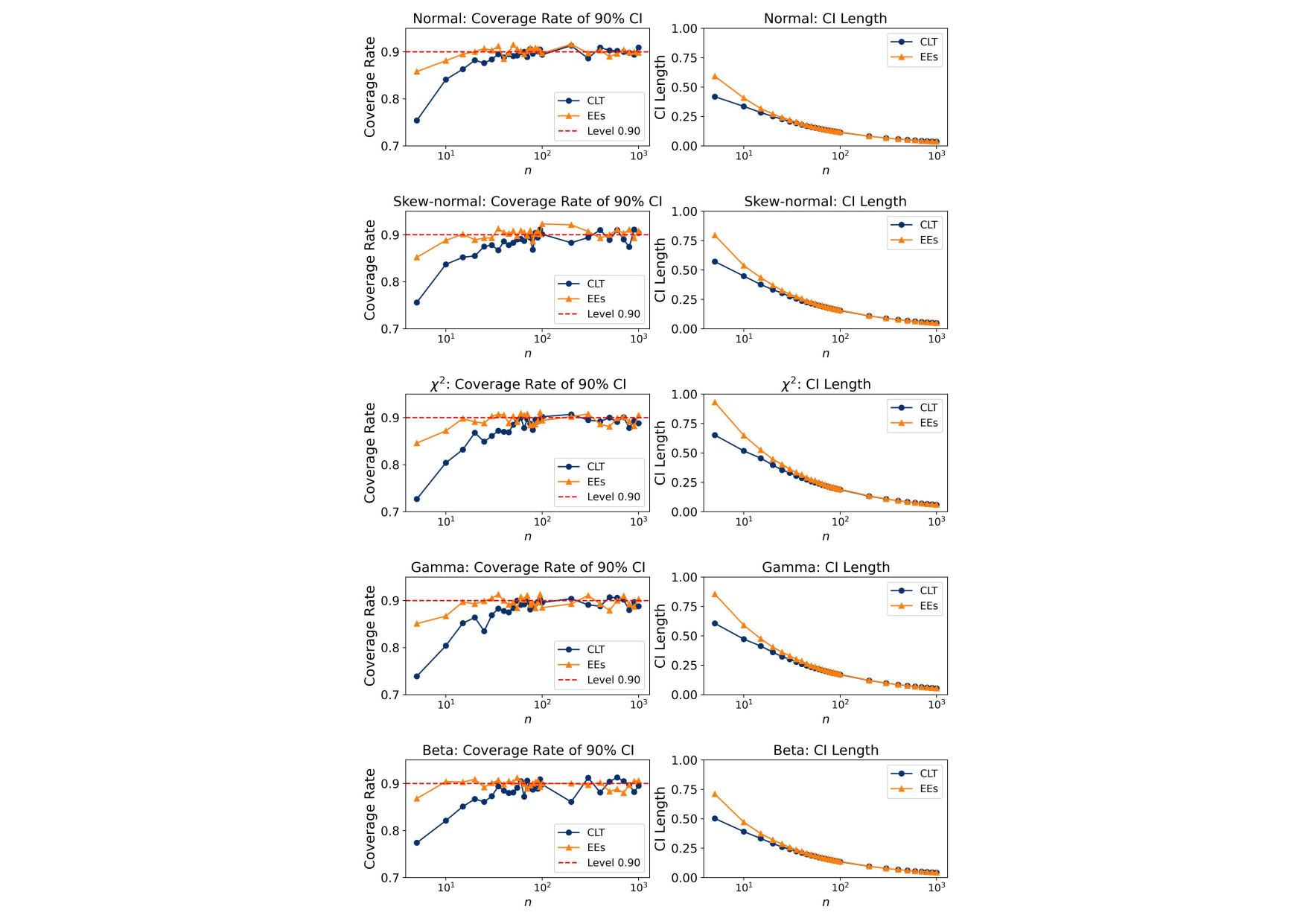}
\caption{
Coverage rates and length of confidence intervals obtained by EEs (\ref{eq:CI-EEs}) and CLT (\ref{eq:CI}) across different distributions of $X$ and sample sizes $n$ using simulated data generated by (\ref{eq:simu_model_2}) with linear transformation, $\epsilon = 0.07$.}
\label{fig:linear_optimal_whole}
\end{figure}

\subsection{Additional details for \texorpdfstring{\Cref{sec:simulation.simulation}}{Section~\ref{sec:simulation.simulation}}}
\label{app:add_simu}

We investigate why existing estimators fail to provide valid confidence intervals by examining their distributions numerically. \Cref{fig:hist} presents histograms of the estimators alongside the true values of relative KL divergence and relative $W_2$ distance when $\epsilon = 0.05$. The results show that these estimators exhibit significant bias, making them unsuitable for reliable inference.
This empirical observation is consistent with our discussion of the challenges in estimating KL divergence and $W_2$ distance (\Cref{sec:relative.error}).

\begin{figure}
    \centering
    \includegraphics[width=0.45\linewidth]{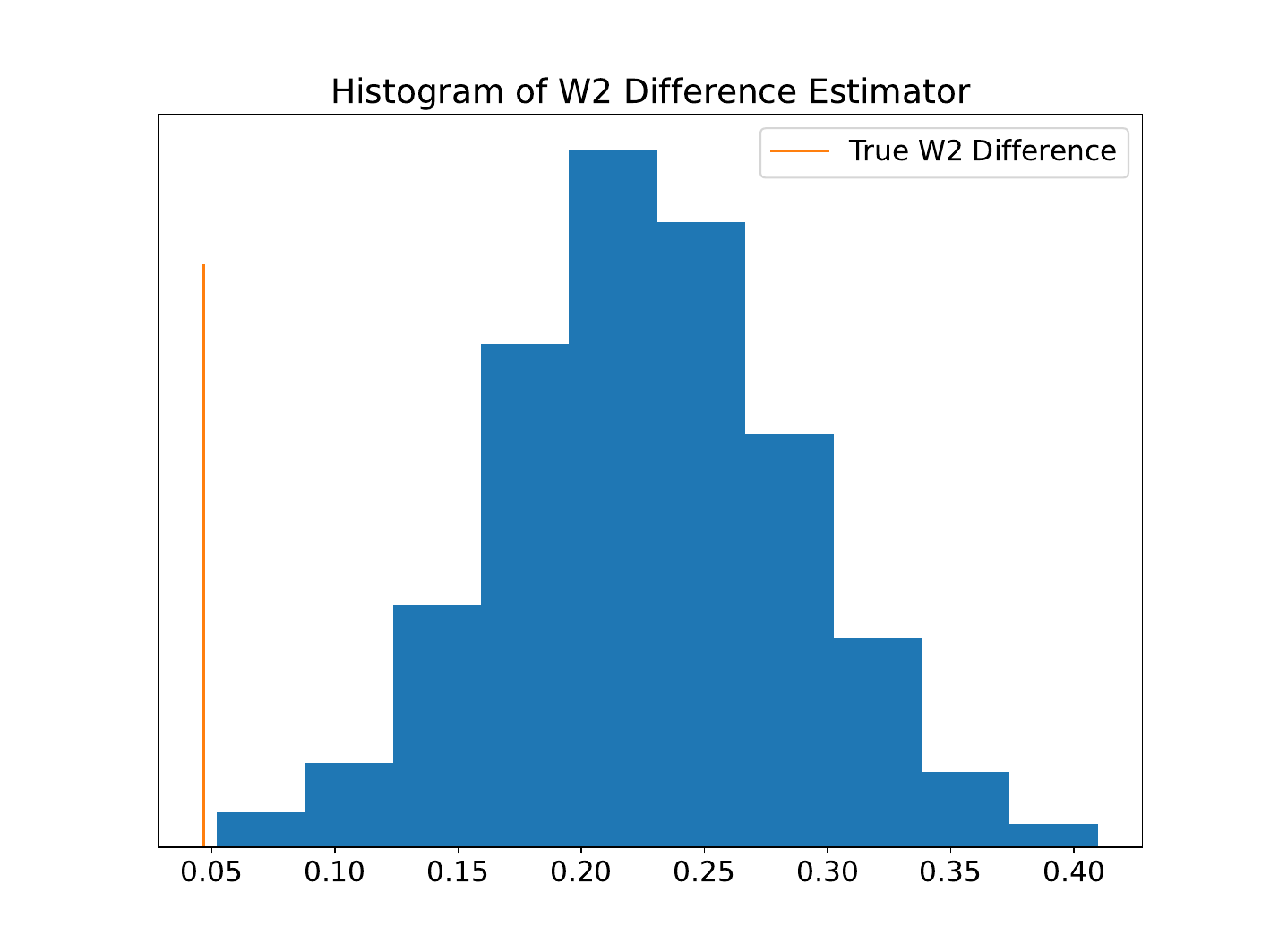}
    \includegraphics[width=0.45\linewidth]{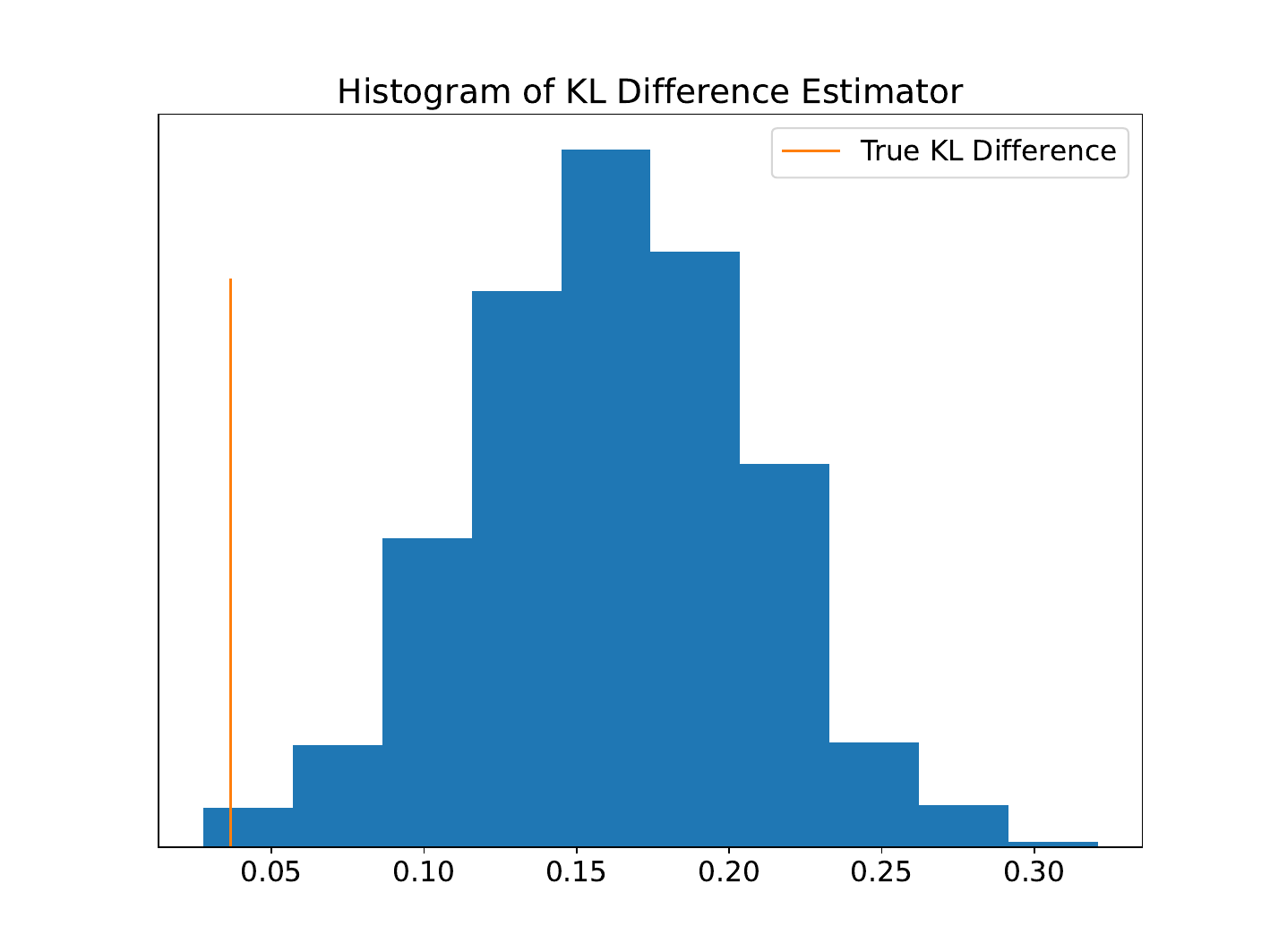}
    \vspace{-0.25cm}
    \caption{\footnotesize
    Histogram of existing estimators for the $W_2$ distance and KL divergence. The vertical line indicates the true value in the simulated example. 
    }
    \label{fig:hist}
\end{figure}

We also provide more details on the estimated inverse learned by auto-encoders used in the simulations of \Cref{sec:simulation.simulation}.
Specifically, we consider the same data generation mechanism (\ref{eq:simu_model}), and generate $N = 5 \times 10^5$ samples from $Y_1$ (from $g_1$) and $Y_2$ (from $g_2$) and train two auto-encoders to reconstruct the data. 
Both the encoder and decoder are fully connected neural networks with a single hidden layer containing 100 units. The auto-encoders are trained for 20 epochs using the Adam optimizer \citep{kingma2014adam}.
To enforce that the encoder maps the data to $\mathcal{N}(0,1)$, we introduce a penalty term $\text{KL}(\mathcal{N}(\hat{\mu}, \hat{\Sigma}) \| \mathcal{N}(0,I_d))$, where $\hat{\mu}$ and $\hat{\Sigma}$ are the empirical mean and covariance matrix of the encoder outputs.
We use the two learned encoders as $g_1^{-1}$ and $g_2^{-1}$, respectively.

To assess the performance of our method, we again generate $n = 1000$ samples from $Y$, compute the estimator in (\ref{eq:estimator.3}), and construct the confidence intervals using (\ref{eq:CI}). The coverage rates over 1000 repeated experiments are summarized in \Cref{fig:simu_coverage_power} (denoted as ``Ours (Auto Encoder)"). The results show that our method achieves coverage rates close to the target level, even when using the learned inverse functions.

More generally, for the case where the generator $g$'s inverse $g^{-1}$ is not directly available, we can train an auto-encoder minimizing the reconstruction error using the data generated from $g$ and use the encoder as $g^{-1}$.
To understand the effect of the approximation errors in $g^{-1}$, we conduct a sensitivity analysis using the simulated data. We compose the encoder for $Y_2$ with a perturbation function 
    \[
    h(X) = X \circ(1+\delta Z), 
    \]
    where $Z\sim \mathcal{N}(0, I)$ is a gaussian vector with the same shape as $X$, $\circ$ denote the elementwise product. $\delta$ controls the strength of the perturbation. We compute the likelihood of $Y_2$ using the perturbed encoder and evaluate the coverage rate of our method. The results are given in Table \ref{tab:perturbation_results}.
    In general, when the approximation error is small (in this case, $\delta \le 0.01$), the coverage is not substantially affected, indicating that our approach is not highly sensitive to small approximation error.
    
\begin{table}[ht]
\centering
\caption{Coverage rate and bias under different perturbation levels}
\label{tab:perturbation_results}
\begin{tabular}{lccccccccccc}
\toprule
$\delta$
& 0.00 & 0.01 & 0.02 & 0.03 & 0.04 & 0.05 \\
\midrule
Coverage Rate
& 0.906 & 0.894 & 0.842 & 0.770 & 0.660 & 0.462  \\
Bias
& $3.74\!\times\!10^{-3}$ & $4.79\!\times\!10^{-3}$ & $7.57\!\times\!10^{-3}$ & $1.38\!\times\!10^{-2}$ &
$1.94\!\times\!10^{-2}$ & $2.88\!\times\!10^{-2}$ \\
\bottomrule
\end{tabular}
\end{table}

\subsection{Additional details for \texorpdfstring{\Cref{sec:simulation.LLM}}{Section~\ref{sec:simulation.LLM}}}
\label{app:simu}

For wikitext data, there might exist a potential non-iid structure on the datasets of language models, when considering that the dataset may contain multiple paragraphs from the same article, such as the WikiText dataset. In such cases, the sequence of the input data $\{X_i\}_{i\geq 1}$ (random variables) can be said to have a $m$-dependent structure: there exists a small integer $m$ (i.e., the maximum of the number of paragraphs in each article), such that for any $n \geq 1$ and $k \geq m$, it holds that $X_{n+k}$ is independent of $\mathcal{F}_n= \sigma(X_1,\ldots,X_n)$, the $\sigma$-field generated by $\{X_i\}_{i=1}^n$. 
In such $m$-dependent case, under two mild conditions: (C1) $\{X_i\}_{i=1}^n$ have uniformly bounded variance such that $\{m n^{1/3}\}^{-1}\sqrt{{\rm Var}(\sum_{i=1}^n X_i)} \to \infty$ as $n \to \infty$, and (C2) $m=o(n^{1/3})$, our proposed estimator of the relative score is still unbiased and asymptotically normal, and hence the statistical inference and uncertainty quantity based on the relative score still work.
The corresponding theoretical results about the CLT and EEs for m-dependent sequences can be found in \cite{van2000asymptotic} and \cite{hall2013bootstrap}.
    




We provide experiments comparing CLT-based intervals and EEs intervals.  Figures \ref{fig:wiki_text_minibatch}
presents the pairwise CIs comparison results of GPT2 vs. GPT2-Large.

\begin{figure}[htbp]
\centering
\includegraphics[width=0.6\textwidth]{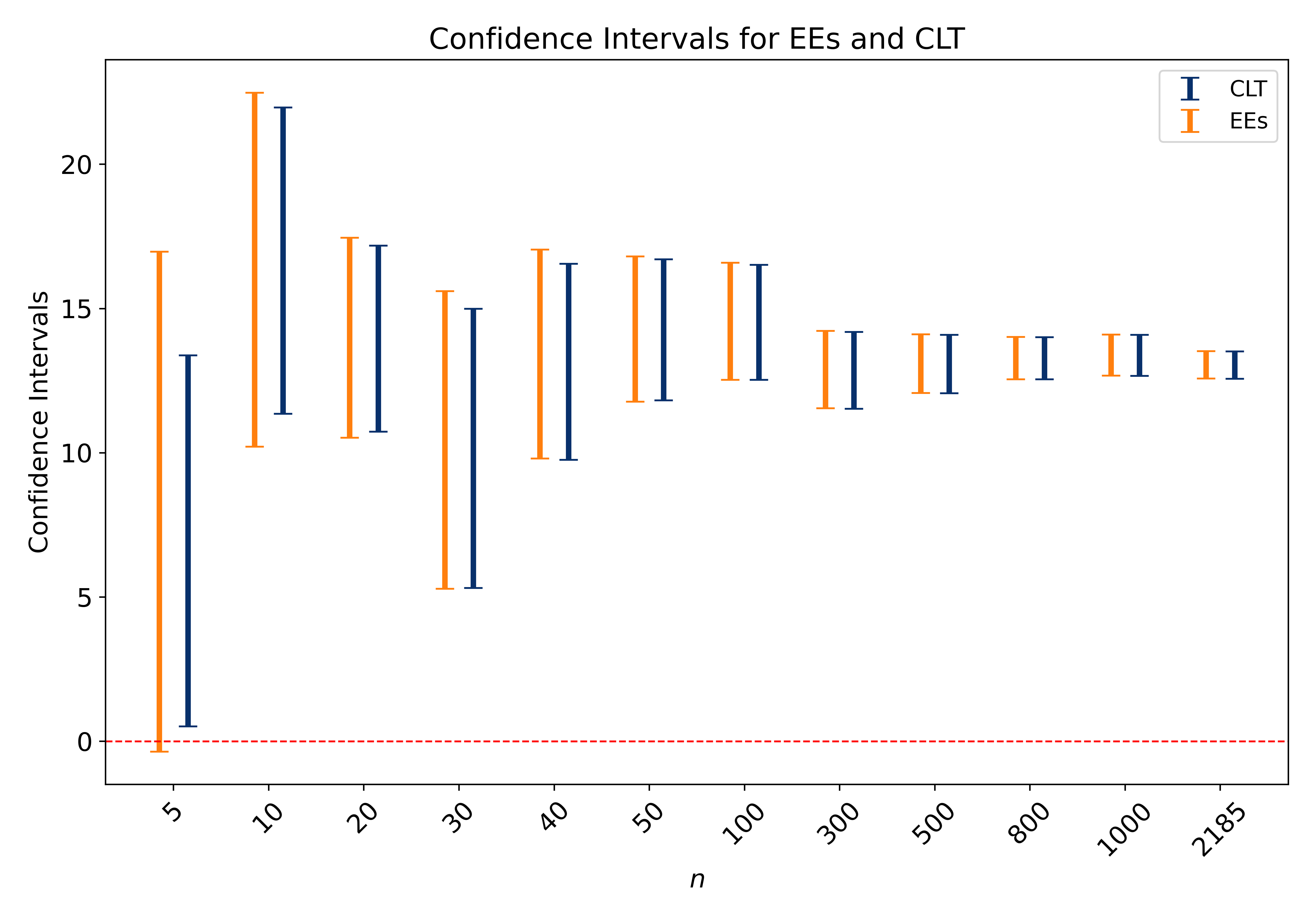}
\caption{Comparison between GPT2 and GPT2-Large on WikiText-2 with different sample size}
\label{fig:wiki_text_minibatch}
\end{figure}

\subsection{Additional details for \texorpdfstring{\Cref{sec:conditional_comparison}}{Section~\ref{sec:conditional_comparison}}}

For TriviaQA, we compute the likelihood for a single answer per data point. Specifically, we use the answer string provided by example["answer"]["value"] and do not perform additional answer normalization or aggregation of probability mass across multiple acceptable answers. The likelihood is therefore evaluated on this single reference answer only. 

In addtion, we provide experiments comparing CLT-based intervals and EEs intervals. Figures \ref{fig:triviaqa_fb_mis} and \ref{fig:triviaqa_meta_fb}
present the pairwise CIs comparison results of LLMs: OPT-1.3B vs. Mistral-7B, and OPT-1.3B vs. Llama3-8B.
The results indicate that both of the Mistral-7B and the Llama3-8B models are significantly different from OPT-1.3B model, which makes sense because of the quite different memory sizes.

\begin{figure}[htbp]
\centering
\includegraphics[width=0.6\textwidth]{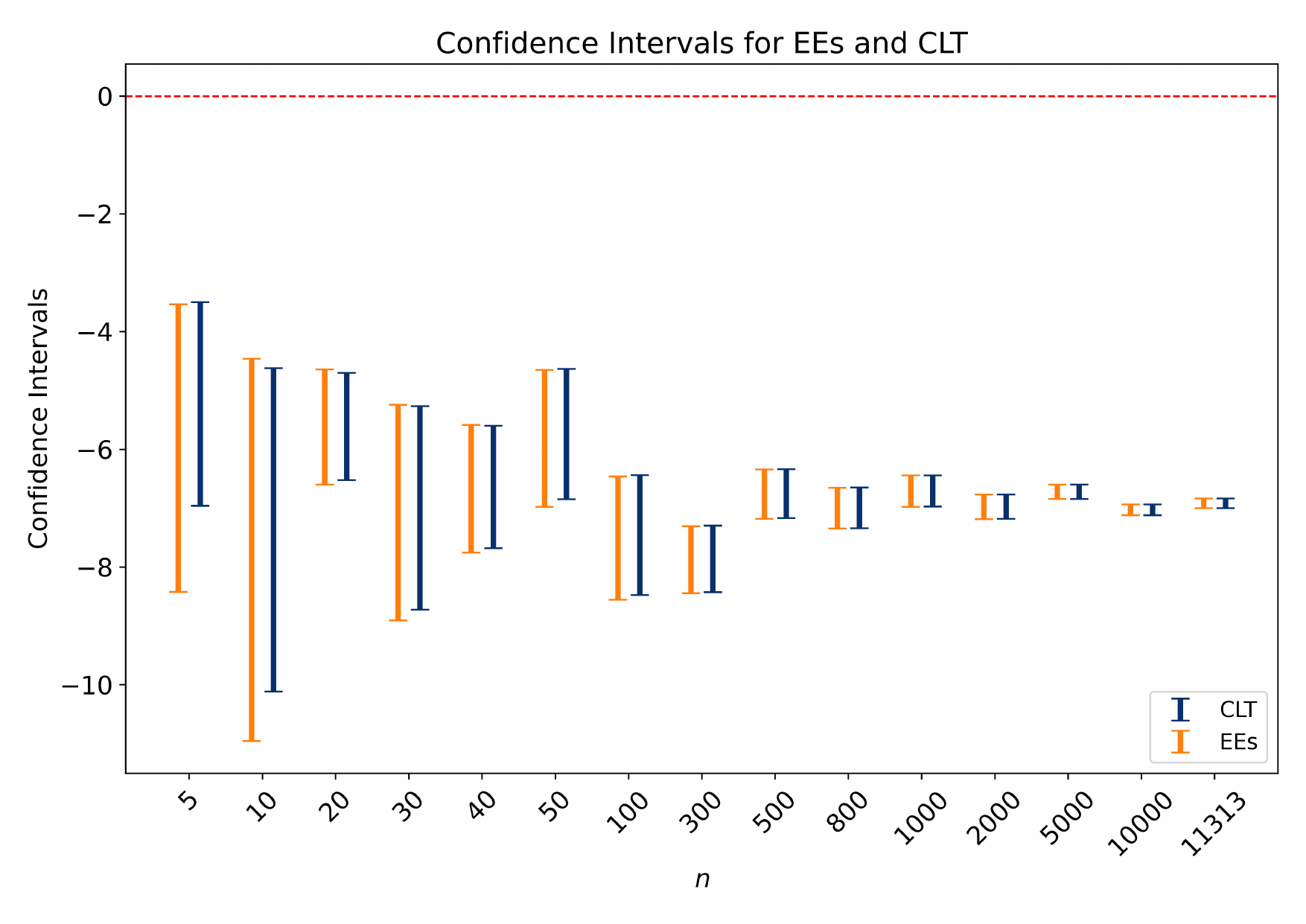}
\caption{
CIs of OPT-1.3B vs. Mistral-7B for various sample size $n$.}
\label{fig:triviaqa_fb_mis}
\end{figure}

\begin{figure}[htbp]
\centering
\includegraphics[width=0.6\textwidth]{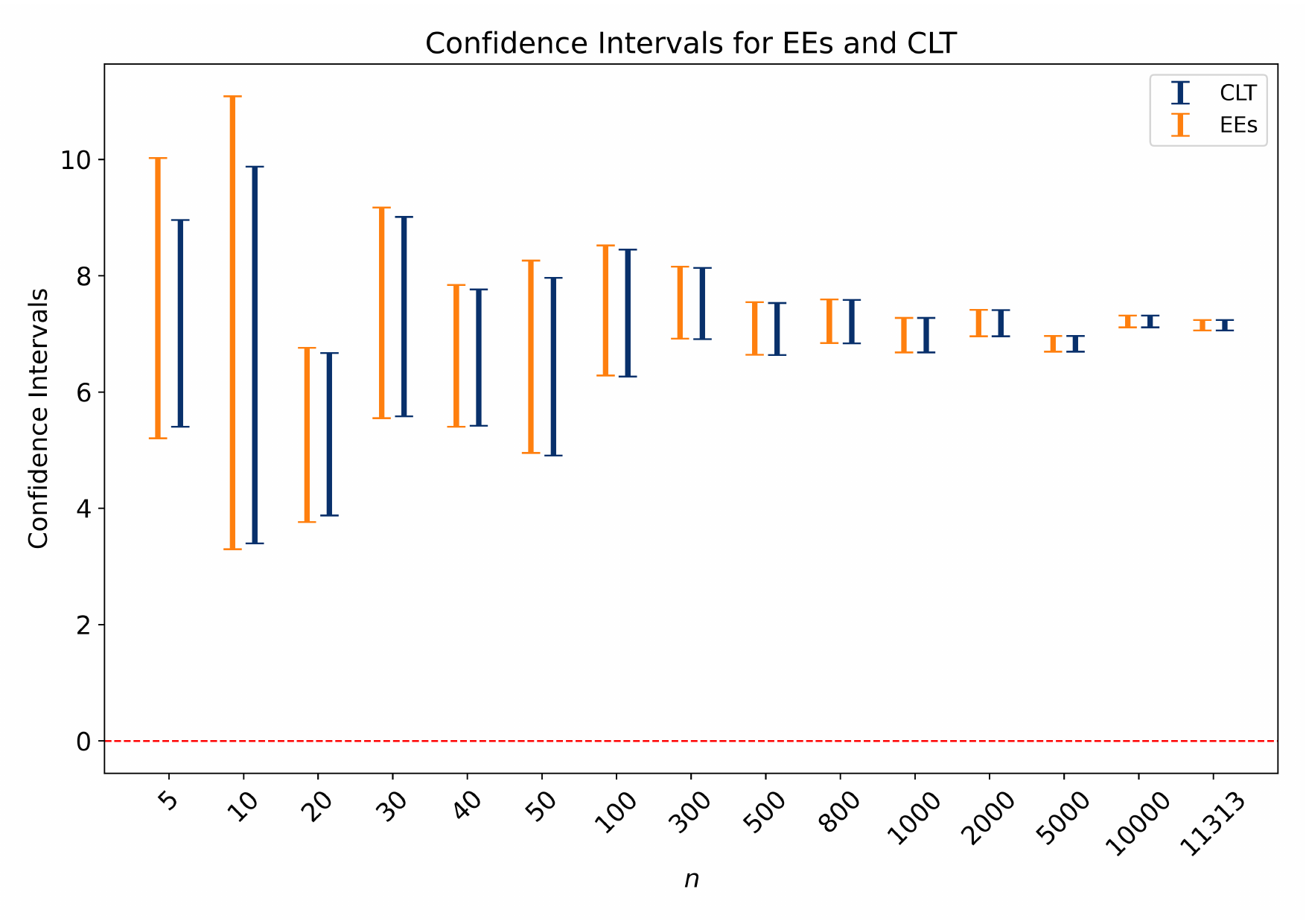}
\caption{CIs of Llama3-8B vs. OPT-1.3B for various sample size $n$.}
\label{fig:triviaqa_meta_fb}
\end{figure}

\subsection{Additional Experiments on Generative Models for Image}
We apply our method to evaluate generative models on the CelebA dataset \citep{liu2015deep}. Specifically, we trained a VAE model using the default settings from the GitHub repository\footnote{\url{https://github.com/AntixK/PyTorch-VAE}} and compared it with a pre-trained DDIM model \citep{song2021denoising} with $S = 20$ denoising steps. The results are summarized in \Cref{tab:celeba}. Our results are consistent with FID scores, but our method additionally provides statistical confidence in the comparisons.

\begin{table}[htbp]
    \centering
        \caption{Comparison of a VAE model and a DDIM model with $S=20$ denoising steps on CelebA data. Our confidence interval doesn't cover $0$, indicating the DDIM model is significantly better than the VAE model. }
    \begin{tabular}{c|c|c}
    \toprule
       Model $M$  & FID & $\widehat{\text{CI}}$ of $\delta(\hat{\mathbb{P}}_{M}, \hat{\mathbb{P}}_{\rm{DDIM}_{20}}) $ \\
       \midrule
       VAE  & 150.65 & (-45912.80, -45690.22)\\
       $\rm{DDIM}_{20}$  & 15.26 & - \\
       \bottomrule
    \end{tabular}
    \label{tab:celeba}
\end{table}

\section{Future Directions}
We outline several promising directions for future research.
\begin{itemize}
    \item Extension to the comparison of multiple generative models. Our estimator (\ref{eq:estimator.1}) of the relative score and its asymptotic distribution characterization \Cref{theo:CLT} naturally extend to pairwise comparisons of multiple generative models. Combined with \cite{fan2024ranking}, we can identify the best-performing model and establish the full ranking of multiple generative models with statistical confidence.

    \item Heterogeneity in relative performance.
    There may be significant heterogeneity in the relative performance of generative models across test datasets, e.g., a model that excels at generating cat images might perform poorly on car images. In future works, we aim to use our method to identify the strengths of different generative models with statistical confidence, which can further guide the development of expert models that strategically leverage the strengths of individual models for improved performance.
\end{itemize}

\end{document}